\def\boxit#1{\vbox{\hrule\hbox{\vrule\kern6pt
   \vbox{\kern6pt#1\kern6pt}\kern6pt\vrule}\hrule}}
\def\tit.arg{Subsampled Optimization: Statistical Guarantees, Mean Squared Error Approximation, and Sampling Method}
\def\abst.arg{
For optimization on large-scale data, exactly calculating its solution may be computationally difficulty because of the large size of the data. 
%Sampling is a useful tool for fast approximating the exact solution. 
In this paper we consider subsampled optimization for fast approximating the exact solution. In this approach, one gets a surrogate dataset by sampling from the full data, and then obtains an approximate solution by solving the subsampled optimization based on the surrogate. % in place of directly solving the large-scale optimization.  
%However, this leaves an important problem: how to accurately measure the uncertainty of the subsampled optimization, and make statistical inference for our aim of interest, the exact solution from large-scale optimization. 
One main theoretical contributions are to provide the asymptotic properties of the approximate solution with respect to the exact solution as statistical guarantees, and to rigorously derive an accurate approximation of the \emph{mean squared error (MSE)} and an approximately unbiased MSE estimator.
These results help us better diagnose the subsampled optimization in the context that a confidence region on the exact solution is provided using the approximate solution. %As an application of our results, 
%In this paper, we rigorously derive the \emph{mean squared error (MSE)} of the approximate solution with respect to the exact solution. By the derivation, we obtain accurate approximation of the MSE with error of the order $O(n^{-3/2})$ where $n$ is the size of the subsampled dataset, as well as an MSE estimator whose bias is of the same order. 
%The results provide an accurate measure of uncertainty of the subsampled optimization in theory and practice. To better diagnose the approximate optimization by sampling, we make statistical inference for the exact solution by using the proved asymptotic normality and the MSE estimator. %As an application of our results, 
The other consequence of our results is to  propose an optimal sampling method, Hessian-based sampling, whose probabilities are proportional to the norms of Newton directions. Numerical experiments with least-squares and logistic regression show promising performance, in line with our results. 
}
\def\key.arg{
Large-scale Optimization, Random Sampling, Statistical Guarantee, Inference, Sampling-dependent, Mean Squared Error, Confidence Region}
\def\authorhere{Rong Zhu$^a$ and Jiming Jiang$^b$ %\footnote{Corresponding author}
\vskip 0.2cm
{\it $^a$Academy of Mathematics and Systems Science, Chinese Academy of Sciences, Beijing 100190, China}\\
{\rm rongzhu@amss.ac.cn}\\
\medskip
{\it $^b$ Department of Statistics, University of California, Davis, CA 95616, U.S.A.} \\
{\rm jimjiang@ucdavis.edu}

}
\begin{document}

\begin{center}
{\Large \tit.arg}\footnote{
Rong Zhu's work was partially supported by Natural Science Foundation of China grant 11301514.
Jiang's research  was partially supported by NSF grant DMS-1510219 and
the NIH grant R01-GM085205A1.
}

%\begin{center}
%{\large \bf \tit.arg} \vskip 3mm

\authorhere
\end{center}

\centerline{\small SUMMARY}
\abst.arg

\vskip 3mm\noindent
{\it Key words: } \key.arg

\section{Introduction}

Optimization problems in machine learning and statistics are based on some sort of minimization problems, that is, a parameter of interest is solved by minimizing an objective function defined by a loss function over a data set. These problems cover many methods, for example, maximum likelihood and least squares estimation (\cite{Lehmann:03}) as well as numerical optimization (\cite{Boyd:04, Hazan:06, Nocedal:06}). 
In modern large-scale data analysis, %one popular method for deal with large-scale data is sampling. 
we often meet the exploding sample size. 
For instance, in applying Newton's method to the problem involving an $N\times d$ data matrix where $N$ is data size and $d$ is the dimension, each iteration has complexity scaling as $O(Nd^2)$ assuming $N>d$. Exactly solving this problem often is computationally difficulty when the sample size $N$ is very large. 
For dealing with large-scale data, % so have to find fast algorithms for solving large-scale problems.  
random sampling provides a powerful way. %for approximate the solution of large-scale optimization problems.
Instead of exactly solving it, one draws a small subsampled data from the original data by some sampling method, and then carries out computations of interest from the subsampled data. 
The benefits of this approach is to greatly save the computational cost and to eliminate the need for storing the full data. 
Recently, this approach has been applied to many problems in large-scale data analysis, e.g, least-squares approximation (\cite{Drineas:06, Drineas:10, Zhu:16}), low-rank matrix approximation (\cite{FKV04,DMM08,Bach:13}), large-scale boosting (\cite{Freund:96, Dubout:14}) and bootstrap (\cite{Kleiner:14}). 
% for computational reasons

%\cite{} and references therein contains a sample of relevant work 
%Recently, proposing sampling algorithm for fast solving large-scale optimization has become a hot research topic, % and there is a vast literature on this topic; e.g. see \cite{Mah-BOOK}. % and \cite{Bach:13}.
%\cite{} and references therein contains a sample of relevant work 

%by inverse sampling probability weighting
%Accordingly, we focus on the investigation of the subsampled optimization. %In this paper, we investigate M-estimators in survey sampling. 
In this paper we propose \emph{subsampled optimization} in order to approximately solve the optimization problem on large-scale data. 
By the subsampled optimization, the \emph{approximate solution} is obtained based on a subsampled data randomly drawn from the full data in such a way that the computational cost of solving it is associated with the subsampled data size. 
For the example of Newton's method on the $N\times d$ data matrix, when we draw the subsampled data of size $n$ ($d < n < N$), the complexity per iteration is $O(nd^2)$. 
%Therefore, the subsampled optimization greatly reduces the computation cost when the complexity time of the sampling process is scalable to $O(Nd)$ or less. Note that 
If the complexity time of the sampling process can be scalable to $O(Nd)$ (\cite{Drineas:12,Zhu:16}), the subsampled optimization greatly reduces the computation cost. 
Besides the computational benefit, this approach has been proved to have nice statistical performance with guarantees. 
In previous studies, these guarantees are based on convergence rates, where lower/upper error bounds are usually driven (such as \cite{Erd:15}, \cite{PW:16}, and \cite{PW:17}). % As we know, upper bound analysis is widely used for theoretically measuring uncertainty. 
However,  error bounds may be far less accurate diagnosis for the subsampled optimization, as they just provide margin measure.  Another drawback is that error bounds may become unusable in practice, since they often reply on unknown constants.

\iffalse
%Fortunately, in this paper 
Here we introduce an \emph{accurate} measure of the uncertainty for the subsampled optimization, i.e., an accurate approximation of the mean squared error.
Different with the classical likelihood theory (\cite{Lehmann:03}), which makes assumptions on the data generating process of the units, 
the \emph{subsampled optimization} quantifies the uncertainty based on the randomness from sampling process.  
Therefore, our uncertainty measure and statistical inference reveals the effect of this sampling approximation for large-scale optimization.
% performance of the \emph{subsampled optimization} with respect to large-scale optimization.
In addition, our uncertainty measure is different from analysis on error bound or convergence rates (such as \cite{Erd:15}, \cite{PW:16}, and \cite{PW:17}), which just provides a margin measure so probably is far less enough to provide an accurate measure on the uncertainty. 
Rather than deriving the bound, our accurate measure obtain an expressible term to accurately measure the uncertainty.  
%The advantage of this measure is to provide a closer look at the error and supply the statistical inference for the \emph{subsampled optimization}.
%decomposes the uncertainty into a major term which can be expressed as well as a negligible remainder, 
%Although error bound analysis is well investigated, yet it just provides a margin measure, it is far less enough to provide an accurate measure on the uncertainty. 
%As far as we know, this is the first work on statistical inference of sampling-based algorithms for large-scale optimization. 
%Fortunately, in this paper we provide an \emph{accurate} measure, by which 
%By this accurate measure, we can know \emph{accurately} that how far is the approximate solution of the subsampled optimization away from the exact solution of large-scale optimization, and get the confidence region for the exact solution.
%It is worthy to note that \cite{Zhu:2016} provided the MSE approximation, although not the rigorous derivation, for sampling-based least squares, however they considered the estimation error of the approximate solution with respect to the true coefficients under the linear regression model framework rather than with respect to the original LS solution.
\fi

In contradistinction to convergence rates, in this paper we provide several new theoretical contributions from the statistical perspectives. 
%These results provide general theoretical analysis, meaning that they do not reply on specific algorithms. 
%This paper makes two main theoretical contribution. 
First, asymptotic properties are provided as the statistical guarantees of the subsampled optimization. 
More concretely, we provide the consistency, unbiasedness and asymptotic normality of the approximate solution based on the subsampled data with respect to the exact solution based on the full sample data. % and show that its squared-bias is negligible relative to the mean squared error (MSE).
%we establish the consistency and unbiasedness property of the approximate solution with respect to the exact solution. It indicates that the nice asymptotic performance of the sampling approximation is theoretically observed. 
%Furthermore, we show that its bias is of the order $O(n^{-1})$, where $n$ is the size of subsampled dataset. 
%It indicates that the squared bias is negligible relative to the variance in the mean squared error (MSE) decomposition.
%This result is important since it indicates that the squared bias is negligible relative to the variance in the mean squared error (MSE) decomposition such that statistical inference may be made by analyzing the MSE.
%More importantly, a main contribution of this paper is to \emph{rigorously} obtain MSE approximation of the approximate solution with respect to the exact solution with the approximation error of the order $O(n^{-3/2})$. 
Second, in order to sufficiently diagnose the subsampled optimization, we \emph{rigorously} develop the mean squared error (MSE) approximation, and then supply the MSE estimation based on the subsampled data.
To do so, we divide the MSE into two terms: the main part that is expressed as a sandwich-like term, and the negligible remainder part that decays as $O(n^{-3/2})$.
Furthermore, we show that the sandwich-like term is estimable from the subsampled data. It follows an approximately unbiased  MSE estimator.
%More importantly, a main contribution of this paper is to \emph{rigorously} derive the \emph{mean squared error (MSE)} of the \emph{approximate solution} with respect to the \emph{exact solution}. By the derivation, we obtain a MSE approximation with error of the order $O(n^{-3/2})$, as well as a MSE estimator whose bias is corrected to the same order.  
%The MSE approximation and the MSE estimator sufficiently diagnose the subsampled optimization from two aspects of theory and practice respectively. 
%The MSE approximation sufficiently diagnose the subsampled optimization from the large-scale data, and the MSE estimator does it from the subsampled data. 
To the best of our knowledge, this is the first work on rigorous derivation of the MSE approximation and the MSE estimation in the sampling-based approximation problems. 
%It is worthy to note that 
%Since in this paper we investigate performance of the subsampled optimization with respect to the original optimization, 
Our proofs rely on sampling techniques rather than the classical likelihood theory (\cite{Lehmann:03}) that assumes the data generating process of the units.
Unlike the classical likelihood theory, our results do not reply on any assumption of population distribution generating data, but quantify the randomness from the sampling process. 
%For specific, our results quantify the randomness uncertainty from sampling process, so do not reply on any assumption of population distribution generating data.
Thus, some of our theoretical proofs are non-standard and may be of independent interest.

Our theoretical results bring some valuable consequences. One implication is  to construct confidence region on the exact solution of the original optimization by using the approximate solution of our subsampled optimization.  %by using the approximately unbiased MSE estimator and the proved asymptotic normality. 
%Another important contribution is to construct the confidence region for the exact solution around the approximate solution by using the approximately unbiased MSE estimator and the proved asymptotic normality. 
As far as we know, this is the first work to introduce confidence region into sampling-based approximation problems. 
This statistical inference provides an accurate measure of confidence on the large-scale optimization from the subsampled optimization. 
The other implication is to provide an efficient sampling method, the computational complexity of which is scalable. 
%, thanks to our MSE approximation, we also find an optimal sampling method.  
The sampling probabilities are proportional to the norms of Newton directions used in Newton's method in optimization.   %provide a way to find an optimal sampling method. 
We call it as Hessian-based sampling, because it makes use of Hessian matrix information in the sampling process. %The statistical inference and choosing optimal sampling %demonstrate how powerful our results can be applied in large-scale optimization problems, 
%These implications also suggest the advantage of our theoretical analysis over the existing error bound analysis. 
These implications demonstrate the potential power of our theoretical analysis over the existing convergence analysis. 
Empirically, numerical experiments with least-squares and logistic regression in Section \ref{sec:num} show promising performance in line with our results. 

{\bf Related work.} The subsampled optimization is related to but different from resampling methods, such as bootstrapping method (\cite{Efron:79}) and subsampling method (\cite{Poli:Roma:Wolf:subs:1999}). The goal of these resampling methods is to make simulation-based inference, which traditionally uses repeated computation of estimates of resamples (or subsample), although the subsampling method (\cite{Poli:Roma:Wolf:subs:1999}) allows these resamples to be significantly smaller than the original data set. Unlike these resampling methods, our aim is to approximately solve the large-scale optimization by our subsampled optimization. 
There are some studies on investigating optimization algorithms based on subsampled data (\cite{Erd:15, PW:17}). Unlike them, our results do not reply on specific algorithms, 
but provide a general theoretical analysis of the subsampled optimization.  

\iffalse
The remainder of this paper  is organized as follows . The subsampled optimization is defined in Section \ref{sec:m}, and the results of its uncertainty measure are given in Section \ref{sec:results}. 
%More specifically, we first establish the consistency and asymptotic unbiasedness, then obtain the MSE approximations and their MSE estimators.
Sections \ref{sec:examples} considers specific cases, and %, including mean estimation, least-square problem and logistic regression, and obtain their MSE approximations and MSE estimators by applying the general results.% In these cases, our general results reduce to some well-known results given, for example, in \cite{Cochran:77} or \cite{Sarndal:03}. 
Section \ref{sec:num} describes their experiments performed using simulated data. % and a real data set on Chinese weights and heights, which was surveyed by China National Institute of Standardization in 2015. 
At last two applications are proposed in Section \ref{sec:application}. %: one is a service for distributed optimization, where we show by applying our results that the optimal majority vote is independent on the sampling method while just replies on the size of subsampled dataset; the other is to provide a way to find an efficient sampling method. 
\fi

{\bf Notation:} For a vector $v\in \mathbb{R}^d$, we use $\|v\|$ to denote its Euclidean norm, that is, $\|v\|=(\sum_{j=1}^dv_i^2)^{1/2}$. For a matrix $A\in \mathbb{R}^{d_1\times d_2}$, we use $\|A\|_2$ to denote its spectral norm, that is, $\|A\|_2=\sup_{x\in \mathbb{R}^{d_2}; \|x\|\leq 1}\|Ax\|$. The notation $A=O(n^{\kappa})$, where $\kappa$ is a constant, means that each element of $A$ is the order $O(n^{\kappa})$. 
For function $f(\theta; \cdot)$, $\nabla f(\theta; \cdot)=\frac{\partial }{\partial\theta} f(\theta; \cdot)$ and $\nabla^2 f(\theta; \cdot)=\frac{\partial^2 }{\partial\theta\partial\theta^{\top}} f(\theta; \cdot)$. 
Throughout this paper, the notation ``$\text{E}$" means expectation under the sampling process, which is where the randomness comes from.
All theorems are proved in the appendices. To save notations, we liberally share the bounding constants using the notation $C$ in proofs.

%\section{Background and definition}
\section{Problems set-up and assumptions}
\label{sec:m}
We begin by setting up optimization from a large-large dataset; this is followed by subsampled optimization based a subsample drawn from the dataset by some sampling method. 

%\subsection{Large-scale optimization}
\subsection{Background}
Let $\{f(\cdot; x), x\in \mathcal{X}\}$ be a collection of real-valued and convex loss functions, each defined on a finite dimensional set containing the convex set $\Theta\subset\mathbb{R}^d$, and $U=\{1, 2, \cdots, N\}$ be a large dataset, where for each $i\in U$, a data point $x_i\in \mathcal{X}$ is observed. The empirical risk from the dataset $F_U: \Theta\rightarrow \mathbb{R}$ is given by 
\begin{equation}\label{risk}
%F_U(\theta):=\mathbb{E}_{U}[f(\theta; X)]=\frac{1}{N}\sum_{i\in U}f(\theta; x_i).
F_U(\theta)=\frac{1}{N}\sum_{i\in U}f(\theta; x_i).
\end{equation}
Our goal is to obtain the solution $\hat{\theta}_N$ by minimizing the risk, namely the quantity
\begin{equation}\label{theta-N}
\hat{\theta}_N=\arg\min_{\theta\in \Theta}F_U(\theta).%=\arg\min_{\theta\in \Theta}\left\{\frac{1}{N}\sum_{i\in U}f(\theta; x_i)\right\}.
\end{equation}
Throughout the paper, we impose some regularity conditions on the parameter space, the risk function and the loss function. \\
\noindent\textbf{Assumption A (Parameters):} The parameter space $\Theta\subset\mathbb{R}^d$ is a compact convex set with $\hat{\theta}_N\in \text{ the interior of }\Theta$ and $l_2$-radius $R$.\\  
\noindent \textbf{Assumption B (Convexity):} The risk function $F_U(\theta)$ is twice differentiable and $\lambda$-strongly convex over $\Theta$, that is, for $\theta\in \Theta$,
\begin{equation*}%\label{convex-F}
\nabla^2 F_U(\theta)\geq \lambda I,
\end{equation*} 
where $\geq$ denotes the semidefinite ordering.\\
\noindent \textbf{Assumption C (Smoothness):} 
The gradient vector $\nabla f(\theta; x_i)$ and the Hessian matrix $\nabla^2 f(\theta; x_i)$ are $L(x)$-Lipschitz continous, that is, 
for all $\theta_1, \theta_2 \in \Theta$ and for any $x$, 
\begin{align*}
&\left\|\nabla f(\theta_1;x)-\nabla f(\theta_2;x)\right\|\leq L(x)\|\theta_1-\theta_2\|,\\%\label{C-v}\\
&\left\|\nabla^2f(\theta_1;x)-\nabla^2f(\theta_2;x)\right\|_2\leq L(x)\|\theta_1-\theta_2\|.%\label{C}
\end{align*}
%\noindent\textbf{Remarks.} (a) 
Assumption A is a standard condition to simplify the arguments used in the proofs. Assumptions B \& C require that the loss function is convex and smooth in a certain way. Many loss functions satisfy these conditions including, for example, the $l_2$ loss function for linear regression and logistic function in logistic regression. Similar conditions are used in optimization problems (\cite{BV:04, Zhang:13}). 
%Here we require the loss function to be twice differentiable, 
However there are some cases, such as $l_1$ loss function, which do not satisfy Assumptions B \& C. It is an unaddressed problem to relax them to consider broader class of optimization problems.

Under the assumptions above, $\hat{\theta}_N$ satisfies the system of equations
\begin{equation}\label{system-equ}
\nabla F_U(\theta):=\frac{1}{N}\sum_{i\in U}\nabla f(\theta; x_i)=0.
\end{equation}
% Thus, the optimal estimator in this paper is $\hat{\theta}_N$.

\subsection{Subsampled optimization}
For fast solving large-scale optimization in Eqn.(\ref{theta-N}), we propose the \emph{subsampled optimization}. 
Consider a collection $S$ of size $n$ ($n<N$) that is drawn with replacement from $U$ according to a sampling probability $\{\pi_i\}_{i=1}^N$ such that $\sum_{i=1}^N\pi_i=1$.  
%$p_N(\cdot)$, where $p_N(s)$ is the probability of drawing the sample $s$. Define $\pi_i=\text{Pr}\{i\in s\}=\sum_{i\in S}p_N(s)>0$ for all $i\in U$ and $\pi_{ij}=\text{Pr}\{i, j\in s\}=\sum_{i,j \in s}p_N(s)>0$ for all $i,j \in U$.
\emph{The subsampled optimization} is meant to obtain the solution $\hat{\theta}_n$ satisfying the following system of equations 
\begin{equation}\label{theta-n}
%\hat{\theta}_n=\arg\left\{\nabla F_S(\theta)=0\right\} %=\arg\min_{\theta\in \Theta}F_S(\theta), %=\arg\min_{\theta\in \Theta}\left\{\frac{1}{Nn}\sum_{i\in S}\frac{1}{\pi_i}f(\theta; x_i)\right\}.
%\text{ where } F_S(\theta)=\frac{1}{Nn}\sum_{i\in S}\frac{1}{\pi_i}f(\theta; x_i).
\nabla F_S(\theta)=0,  
\text{ where } F_S(\theta)=\frac{1}{Nn}\sum_{i\in S}\frac{1}{\pi_i}f(\theta; x_i).
\end{equation}
\iffalse
\emph{The subsampled optimization} is meant to approximate $\hat{\theta}_N$ by solving the following risk minimization problem 
\begin{equation}\label{theta-n}
\hat{\theta}_n=\arg\min_{\theta\in \Theta}F_S(\theta), %=\arg\min_{\theta\in \Theta}\left\{\frac{1}{Nn}\sum_{i\in S}\frac{1}{\pi_i}f(\theta; x_i)\right\}.
\text{ where } F_S(\theta)=\frac{1}{Nn}\sum_{i\in S}\frac{1}{\pi_i}f(\theta; x_i).
\end{equation}
\fi
In Eqn.(\ref{theta-n}), we construct $F_S(\theta)$ by inverse probability weighting in such a way that $\text{E}[F_S(\theta)]=F_U(\theta)$. 
The idea of the subsampled optimization is to minimize the unbiased risk function $F_S(\theta)$ in place of $F_U(\theta)$. %Although $F_S(\theta)$ is close to $F_U(\theta)$,
However, it is not clear about the performance of $\hat{\theta}_n$ with respect to $\hat{\theta}_N$. 
In this paper we investigate theoretical performance of $\hat{\theta}_n$ with respect to $\hat{\theta}_N$, and supply the consistency, asymptotic unbiasedness and asymptotic normality in Sections \ref{sec:asymp}. 

\begin{remark}
Empirically, we do numerical comparison  with the subsampled optimization with equal weighting, which is defined as obtaining the solution $\tilde{\theta}_n$ satisfying 
\begin{equation*}%\label{tilde-theta-n}
\nabla \tilde{F}_S(\theta)=0,
\text{ where } \tilde{F}_S(\theta)=\frac{1}{n}\sum_{i\in S}f(\theta; x_i).
%\tilde{\theta}_n=\arg\min_{\theta\in \Theta}\frac{1}{n}\sum_{i\in S}f(\theta; x_i),
\end{equation*}
Unlike $F_S(\theta)$, the risk function $\tilde{F}_S(\theta)$ is directly constructed from the subsampled data without the probability weighting.  
The empirical results show that $\hat{\theta}_n$ is much more robust than $\tilde{\theta}_n$. See details in Appendix \ref{Addit-simulation}. 
%It means that it is robust to construct the risk function considering the sampling probabilities. 
\end{remark}

% in Section \ref{sec:num}.
%where $$F_S(\theta)=\frac{1}{Nn}\sum_{i\in S}\frac{1}{\pi_i}f(\theta; x_i).$$
To guarantee good properties of the subsampled optimization, we require some regularity conditions on sampling-based moments.\\
\noindent\textbf{Assumption D (Sampling-based Moments):}
There exists the following sampling-based moments condition: for $k=2,4$, 
\begin{align*}
&N^{-k}\sum_{i=1}^N\frac{1}{\pi_i^{k-1}}\left\|\nabla f(\theta_N; x_i)\right\|^k=O(1),  \ \ 
N^{-k}\sum_{i=1}^N\frac{1}{\pi_i^{k-1}}\left\|\nabla^2 f(\theta_N; x_i)\right\|_2^k=O(1),\\
&N^{-k}\sum_{i=1}^N\frac{1}{\pi_i^{k-1}}(L(x_i))^k=O(1). %\text{, and }%\label{M-3}\\
%&\frac{1}{N^2}\sum_{i=1}^N\frac{1}{\pi_i}=O(n^{-1})\label{M-4}\\
%N^{-2}\sum_{i=1}^N\frac{1}{\pi_i}=O(1).%\label{M-4}
\end{align*}
%These moments conditions in Assumption D are needed for deriving the MSE approximation and the approximate unbiasedness of the MSE estimator.
Assume $\pi_i$s are $\alpha$-tolerated, meaning that $\min\{N\pi_i\}_{i=1}^N\geq \alpha$ for some constant $\alpha$.  This condition is common in statistics (\cite{Fuller:09, Breidt:00}).
%Now we illustrate Assumption D for uniform sampling, i.e., $\pi_i=1/N$. 
Under this condition, Assumption D is implied by that
\begin{align*}
%&\frac{1}{N}\sum_{i=1}^N\left\|\nabla^2 f(\theta; x_i)\right\|_2^2=O(1), \notag\\
&\frac{1}{N}\sum_{i=1}^N\left\|\nabla^2 f(\theta_N; x_i)\right\|_2^4=O(1),  \ \
\frac{1}{N}\sum_{i=1}^N\left\|\nabla f(\theta_N; x_i)\right\|^4=O(1), \ \ \frac{1}{N}\sum_{i=1}^N(L(x_i))^4=O(1).
\end{align*}

\iffalse
The aim of this paper is to use the subsampled optimization defined in Eqn. (\ref{theta-n}) to well approximate the large-scale optimization in Eqn. (\ref{theta-N}), 
and to accurately diagnose this sampling approximation.
This problem formulation is essentially different from classical likelihood theory, which make assumptions on the data generation process (\cite{Lehmann:03}).
Different with the classical likelihood theory, our statistical analysis requires no assumptions on the data generating process of the units, 
and quantifies the approximation error based on the randomness from sampling process.  
Therefore, our results, which rely on sampling techniques rather than likelihood theory, reveal the effect of sampling process on subsampled optimization. 
Some of our proof techniques are non-standard so could be of independent interest. 
% our proofs are rely on sampling techniques rather than likelihood theory.
%Following the sampling theory, we define a random sequence to describe the randomness of subsampled optimization from random sampling. Then we rigorously analyze an expansion of the approximation error by making use of our defined random sequence in order to get the consistency and the accurate measure of uncertainty. Some of our proof techniques are non-standard so could be of independent interest. See the supplementary materials for the detailed proofs.
\fi

\iffalse
One aim of this paper is to understand, in what sense and under what conditions, how to accurately measure the uncertainty of $\hat{\theta}_n$. 
We do so by showing that the MSE of the estimator $\hat{\theta}_n$ with respect to the exact solution $\hat{\theta}_N$ can be expressed as a leading term with an explicit expression plus a remainder term of lower order than the leading term. This leads to the MSE approximation.
Furthermore, we provide an MSE estimator which is approximately unbiased to the MSE, and construct a confidence region for $\hat{\theta}_N$.
\fi

%\section{Uncertainty measure}
\section{Our results}
%\section{Diagnostics of subsampled optimization}
\label{sec:results}
Having described the subsampled optimization procedure, we now state our main theorems. % about its diagnostics the accurate measure of its uncertainty. 
%Since the randomness of the uncertainty is from the sampling process, the classical likelihood theory does not apply this uncertainty measure.  
The main theoretical results of this paper is to accurately diagnose how well the subsampled optimization defined in Eqn. (\ref{theta-n}) approximates the large-scale optimization in Eqn. (\ref{theta-N}).  We do so by rigorously analyzing the difference between the approximate solution $\hat{\theta}_n$ of the subsampled optimization  and the exact solution $\hat{\theta}_N$ of the large-scale optimization. By this rigorous analysis,  we provide the asymptotic properties, the MSE approximation, and the MSE estimation.
Our analysis formulation is essentially different from classical likelihood theory, which make assumptions on the data generation process (\cite{Lehmann:03}).
Unlike the classical likelihood theory, our statistical analysis requires no assumptions on the data generating process of the units, 
but quantifies the approximation error from sampling randomness.  
Therefore, our results, which rely on sampling techniques rather than likelihood theory, reveal the effect of the sampling process on the subsampled optimization. 
Some of our proof techniques are non-standard so could be of independent interest. 
% our proofs are rely on sampling techniques rather than likelihood theory.
%Following the sampling theory, we define a random sequence to describe the randomness of subsampled optimization from random sampling. Then we rigorously analyze an expansion of the approximation error by making use of our defined random sequence in order to get the consistency and the accurate measure of uncertainty. Some of our proof techniques are non-standard so could be of independent interest. See the supplementary materials for the detailed proofs.
\iffalse
We do so by showing that the difference between the subsampled solution $\hat{\theta}_n$ and the exact solution $\hat{\theta}_N$ can be expressed as a leading term and a remainder term, then seriously analyzing these terms using sampling properties. 
Some of our proof techniques are non-standard so could be of independent interest, since the classical likelihood theory does not apply these results due to the randomness of the subsampled optimization is from the sampling process.
\fi
%See the supplementary materials for the detailed proofs.
%Proofs are provided in the Supplementary Materials. 
% with an explicit expression plus a remainder term of lower order than the leading term. %This leads to our results including the asymptotic properties.

%%%%%%
\subsection{Asymptotic Properties}
%\subsection{Consistency and asymptotic unbiasedness}
\label{sec:asymp}

\begin{theorem}\label{them:asymptotic}
%Under the conditions that the Hessian matrix $\nabla^2 f(\theta;x)$ is $L(x)$-Lipschitz continuous and $\nabla^2 F_N(\hat{\theta}_N)\geq\lambda I$,
Suppose that Assumptions A--D are satisfied. Then, we have 
\begin{align}
%E\|\hat{\theta}_n-\hat{\theta}_N\|^2\leq \frac{4}{(1-\rho)^2\lambda^2}E\|\nabla F_S(\hat{\theta}_N)\|^2=O(n^{-1}).
&\|\text{E}\hat{\theta}_n-\hat{\theta}_N\|=O(n^{-1}). \label{unbiasedness}\\
&\text{E}\|\hat{\theta}_n-\hat{\theta}_N\|^2=O(n^{-1}), \label{consistency}
\end{align}
Furthermore, define $\Sigma=\nabla^2F_U(\hat{\theta}_N)$, and 
\begin{align}\label{AMSE-expr}
\text{AMSE}(\hat{\theta}_n)
=&\Sigma^{-1}\left[\frac{1}{n}\frac{1}{N^2}\sum_{i \in U}\frac{1}{\pi_i}\nabla f(\hat{\theta}_N;x_i)\nabla^{\top}f(\hat{\theta}_N;x_i)\right]\Sigma^{-1}.
\end{align} 
%where $\Sigma=\nabla^2F_U(\hat{\theta}_N)$. 
We have that, 
 as $n\rightarrow\infty$, given the large-scale dataset $U$,
  \begin{equation}\label{normal}
    \text{AMSE}(\hat{\theta}_n)^{-1/2}(\hat{\theta}_n-\hat{\theta}_N) 
    \rightarrow N(0,I) \text{ in distribution. }
  \end{equation}
\end{theorem}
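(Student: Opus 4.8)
The plan is to treat the subsampled estimating equation $\nabla F_S(\hat\theta_n) = 0$ as a perturbation of $\nabla F_U(\hat\theta_N) = 0$ and carry out a one-step Taylor (Newton) expansion around $\hat\theta_N$. Write $g(\theta) := \nabla F_S(\theta)$ and note that $\mathrm{E}\, g(\theta) = \nabla F_U(\theta)$ for every fixed $\theta$, so in particular $\mathrm{E}\, g(\hat\theta_N) = 0$. Expanding $0 = g(\hat\theta_n) = g(\hat\theta_N) + \nabla^2 F_S(\tilde\theta)(\hat\theta_n - \hat\theta_N)$ for some $\tilde\theta$ on the segment between $\hat\theta_n$ and $\hat\theta_N$, and using strong convexity (Assumption B) together with a sampling-based law-of-large-numbers argument to show $\nabla^2 F_S(\tilde\theta) = \Sigma + o_p(1)$, one gets the leading-order identity
\begin{equation*}
\hat\theta_n - \hat\theta_N = -\Sigma^{-1} g(\hat\theta_N) + r_n,
\end{equation*}
where $g(\hat\theta_N) = \frac{1}{Nn}\sum_{i\in S}\frac{1}{\pi_i}\nabla f(\hat\theta_N;x_i)$ is an average of $n$ i.i.d.\ (across the with-replacement draws) mean-zero terms, and $r_n$ collects the remainder. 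The first two steps are: (i) establish a crude rate $\mathrm{E}\|\hat\theta_n-\hat\theta_N\|^2 = O(n^{-1})$ via the strong-convexity "basic inequality" $\lambda\|\hat\theta_n-\hat\theta_N\|^2 \le \langle \nabla F_S(\hat\theta_n)-\nabla F_S(\hat\theta_N), \hat\theta_n-\hat\theta_N\rangle$, bounding the right side by $\|\nabla F_S(\hat\theta_N)\|\,\|\hat\theta_n-\hat\theta_N\|$ and taking expectations using Assumption D with $k=2$; this gives \eqref{consistency}. (ii) Feed this rate back into the Lipschitz bound on $\nabla^2 F_S$ (Assumption C) to control $r_n$: the dominant piece of $r_n$ is $\Sigma^{-1}[\nabla^2 F_S(\tilde\theta) - \Sigma]\,\Sigma^{-1} g(\hat\theta_N)$, whose norm is of order $\|\nabla^2 F_S - \Sigma\|_2 \cdot \|g(\hat\theta_N)\|$; each factor is $O_p(n^{-1/2})$, so $\mathrm{E}\|r_n\| = O(n^{-1})$, and a careful bookkeeping of the bias $\mathrm{E}\, r_n$ (again $O(n^{-1})$, since $\mathrm{E}\, g(\hat\theta_N)=0$ kills the leading term) yields \eqref{unbiasedness}.

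For the asymptotic normality \eqref{normal}, condition on the fixed dataset $U$ and apply the Lindeberg–Feller CLT to the triangular array $\{(N\pi_{I_j})^{-1}\nabla f(\hat\theta_N; x_{I_j})\}_{j=1}^n$, where $I_1,\dots,I_n$ are the i.i.d.\ sampling indices with $\mathrm{P}(I_j = i) = \pi_i$. Each summand has mean $\frac{1}{N}\sum_i \nabla f(\hat\theta_N;x_i) = 0$ and covariance $\frac{1}{N^2}\sum_i \frac{1}{\pi_i}\nabla f(\hat\theta_N;x_i)\nabla^\top f(\hat\theta_N;x_i)$, so $\sqrt{n}\, g(\hat\theta_N)$ is asymptotically $N(0, V)$ with $V = \frac{1}{N^2}\sum_{i\in U}\frac{1}{\pi_i}\nabla f(\hat\theta_N;x_i)\nabla^\top f(\hat\theta_N;x_i)$; the Lindeberg condition follows from the $k=4$ moment bound in Assumption D. Then $\sqrt{n}(\hat\theta_n - \hat\theta_N) = -\Sigma^{-1}\sqrt{n}\, g(\hat\theta_N) + \sqrt{n}\, r_n$, and since $\sqrt{n}\, r_n \to 0$ in probability by step (ii), Slutsky's theorem gives $\sqrt{n}(\hat\theta_n-\hat\theta_N)\to N(0,\Sigma^{-1}V\Sigma^{-1})$. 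Recognizing $n\cdot\mathrm{AMSE}(\hat\theta_n) = \Sigma^{-1}V\Sigma^{-1}$ and pre-multiplying by $\mathrm{AMSE}(\hat\theta_n)^{-1/2} = \sqrt{n}\,(\Sigma^{-1}V\Sigma^{-1})^{-1/2}$ delivers the claimed standardized limit $N(0,I)$.

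The main obstacle I anticipate is controlling $\nabla^2 F_S(\tilde\theta) - \Sigma$ uniformly enough near $\hat\theta_N$ — i.e., combining the random evaluation point $\tilde\theta$ (which depends on $\hat\theta_n$) with the sampling fluctuation of the Hessian. This requires splitting $\nabla^2 F_S(\tilde\theta) - \Sigma = [\nabla^2 F_S(\tilde\theta) - \nabla^2 F_S(\hat\theta_N)] + [\nabla^2 F_S(\hat\theta_N) - \nabla^2 F_U(\hat\theta_N)]$: the first bracket is handled by the Lipschitz Assumption C times the already-established $\|\hat\theta_n - \hat\theta_N\| = O_p(n^{-1/2})$, and the second by a sampling variance computation using the $L(x)$ and Hessian moment bounds in Assumption D. Getting the expectations (not just the in-probability statements) to the stated $O(n^{-1})$ order — especially for the bias in \eqref{unbiasedness} — will need the moment assumptions at $k=4$ and some Cauchy–Schwarz juggling, since one must bound products like $\mathrm{E}[\|\nabla^2 F_S - \Sigma\|_2^2 \cdot \|g(\hat\theta_N)\|^2]$ rather than each factor separately. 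The remaining steps are essentially routine given Assumptions A–D.
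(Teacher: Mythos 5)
Your overall route is the same as the paper's: a Taylor expansion of $\nabla F_S$ around $\hat{\theta}_N$ giving $\hat{\theta}_n-\hat{\theta}_N=-\Sigma^{-1}\nabla F_S(\hat{\theta}_N)+r_n$, Cauchy--Schwarz with the $k=4$ moments of Assumption D to push the remainder to $O(n^{-1})$ in expectation, and a Lindeberg--Feller CLT conditional on $U$ for the i.i.d.\ with-replacement draws, using $n\cdot\text{AMSE}(\hat{\theta}_n)=\Sigma^{-1}V\Sigma^{-1}$. The decomposition of the remainder and the verification of the Lindeberg condition match the paper's Appendix A essentially step for step.

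There is, however, one genuine gap. Your step (i) invokes the basic inequality $\lambda\|\hat{\theta}_n-\hat{\theta}_N\|^2\leq\langle\nabla F_S(\hat{\theta}_n)-\nabla F_S(\hat{\theta}_N),\hat{\theta}_n-\hat{\theta}_N\rangle$, i.e.\ $\lambda$-strong convexity of the \emph{subsampled} risk $F_S$. Assumption B only guarantees strong convexity of $F_U$; the random function $F_S$ need not satisfy $\nabla^2 F_S\geq\lambda I$ on a given realization, so this inequality is not available unconditionally, and the expectation bounds \eqref{unbiasedness}--\eqref{consistency} do not follow as written. The paper repairs this by introducing the event $\mathcal{E}_F=\{\|\nabla^2F_S(\theta)-\nabla^2F_U(\theta)\|_2\leq c\lambda\}$, on which $F_S$ is $(1-c)\lambda$-strongly convex and the basic inequality holds with constant $2/[\lambda(1-c)]$ (Lemma \ref{delta}); it proves $Pr(\mathcal{E}_F^c)=O(n^{-1})$ from the Hessian moment bounds (Lemma \ref{e2-c}); and it absorbs the bad event into the error using the compactness of $\Theta$: since $\|\hat{\theta}_n-\hat{\theta}_N\|\leq R$ always (Assumption A), the complement contributes at most $Pr(\mathcal{E}^c)R^2=O(n^{-1})$ to the second moment and $Pr(\mathcal{E}^c)R=O(n^{-1})$ to the bias. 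All of your moment computations for $r_n$ (which involve $\|\hat{\theta}_n-\hat{\theta}_N\|^2$ and $\|\hat{\theta}_n-\hat{\theta}_N\|^4$) must then be carried out conditionally on this event, with a $1+O(n^{-1})$ correction when passing between conditional and unconditional expectations (the paper's Lemma \ref{Conditonal-E}). You correctly flag the difficulty of controlling $\nabla^2F_S(\tilde{\theta})-\Sigma$, but the specific device of a high-probability event combined with the $l_2$-radius $R$ is what converts your in-probability statements into the stated expectation bounds; without it the argument stalls at $O_p$ rates.
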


The results of Theorem \ref{them:asymptotic} are the asymptotic properties of the subsampled optimization with respect to the optimization on large-scale data. 
%These statistical guarantees are from the viewpoints of consistency, unbiasedness, and asymptotic normality which are represented by Eqns  (\ref{consistency}), (\ref{unbiasedness}) and (\ref{normal}), respectively.
Eqn.(\ref{unbiasedness}) tells us that $\hat{\theta}_n$ is approximately unbiased with respect to $\hat{\theta}_N$. % and the squared bias is bounded by the order $O(n^{-2})$.
Combing Eqns.(\ref{unbiasedness}) \& (\ref{consistency}), the variance-bias tradeoff is provided: the variance term is the dominant component in the MSE, since the squared bias term that is the order $O(n^{-2})$ is negligible relative to the MSE.
%The key implication of Theorem \ref{unbiasedness} is that the bias of $\hat{\theta}_n$ with respect to $\hat{\theta}_N$ is bounded by the order $O(n^{-1})$.
%Notice the MSE equals the squared bias plus the variance, so the squared bias is dominated by the variance, since the MSE is of the order $O(n^{-1})$ as shown in Theorem \ref{them-mse} below.
%Notice the mean squared error (MSE) equals the squared bias plus the variance, so the variance term is the dominant component in the MSE, while the squared bias term is negligible relative to the variance term, since the MSE is of the order $O(n^{-1})$ shown in Theorem \ref{them-mse} below.
Another significant implication of Eqn.(\ref{consistency}) is that, by Markov's inequality, we have, as $n\rightarrow\infty$, 
\begin{equation*}%\label{consist}
\hat{\theta}_n-\hat{\theta}_N\rightarrow 0 \ \ \text{ in probability}.
\end{equation*} 
It means that $\hat{\theta}_n$ is consistent to $\hat{\theta}_N$.
Eqn.(\ref{normal}) shows that $\hat{\theta}_n-\hat{\theta}_N$ goes to normal in distribution as the subsampled data size $n$ goes large. 
%Notice the asymptotic normality is derived under the randomness from sampling. 

%%%%%
\subsection{Mean squared error approximation}
%\subsection{Error covariance matrix approximation}
Now we investigate the mean squared error (\emph{MSE}) of the solution from the subsampled optimization, which defined as 
\begin{equation}\label{MSE-def}
\text{MSE}(\hat{\theta}_n):=\text{E}(\hat{\theta}_n-\hat{\theta}_N)(\hat{\theta}_n-\hat{\theta}_N)^{\top}.
\end{equation}
Note that the usual notation of MSE is defined as $\text{E}\|\hat{\theta}_n-\hat{\theta}_N\|^2$, while we extend the notation into the matrix form here. %our notation is more general than mean squared error (MSE), and directly follows the MSE, that is, $\text{tr(MSE)}=\text{E}\|\hat{\theta}_n-\hat{\theta}_N\|^2$. 
%Note: the notation ``$\text{E}$" means expectation under the sampling process which is where the randomness comes from.
%Note: MSE is more general than mean squared error (MSE), and directly follows the MSE, that is, $\text{tr(MSE)}=\text{E}\|\hat{\theta}_n-\hat{\theta}_N\|^2$. 
%Moreover, MSE is more general than mean squared error. 
\iffalse
For example, MSE can be used to derive the mean squared prediction error. 
Given a function $g(\theta)$,  the mean squared prediction error of $g(\hat{\beta}_n)$ with respect to $g(\hat{\beta}_N)$ is easily obtained from MSE as follows:
$$E[g(\hat{\theta}_n)-g(\hat{\beta}_N)]^2=\nabla^{\top}g(\hat{\theta}_N)\text{MSE}\nabla g(\hat{\theta}_N).$$ 
\fi
Now we derive a valid approximation of $\text{MSE}(\hat{\theta}_n)$. In the following theorem, we divide the MSE into two terms: the leading term with an explicit expression and the remaining term that decays as lower order. %, where the bound for the remaining term is element-wise.
\begin{theorem}\label{them-MSE}
 Suppose that Assumptions A--D are satisfied. Then, we have
\begin{align}\label{MSE}
%&\text{E}(\hat{\theta}_n-\hat{\theta}_N)(\hat{\theta}_n-\hat{\theta}_N)^{\top}
\text{MSE}(\hat{\theta}_n)&=\text{AMSE}(\hat{\theta}_n)+O(n^{-3/2}).
%\notag\\=&\text{E}\left[\Sigma^{-1}\nabla F_S(\hat{\theta}_N)\nabla^{\top} F_S(\hat{\theta}_N)\Sigma^{-1}\right]^{\top}+O(n^{-3/2}).
\end{align}
%where $O(n^{-3/2})$ is element-wise.
\iffalse
Furthermore, we have
\begin{align}\label{main-term}
%&\text{E}\left[\Sigma^{-1}\nabla F_S(\hat{\theta}_N)\nabla^{\top} F_S(\hat{\theta}_N)\Sigma^{-1}\right]^{\top}\notag\\
%=&\Sigma^{-1}\left[\frac{1}{n}\frac{1}{N^2}\sum_{i \in U}\frac{1}{\pi_i}\nabla f(\hat{\theta}_N;x_i)\nabla^{\top}f(\hat{\theta}_N;x_i)\right]\Sigma^{-1}\notag\\
\text{AMSE}(\hat{\theta}_n)=&O(n^{-1}).
\end{align}
\fi
\end{theorem}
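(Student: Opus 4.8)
\emph{Proof sketch.} The plan is to combine a second-order stochastic expansion of the subsampled estimating equation $\nabla F_S(\hat\theta_n)=0$ with fourth-moment bounds for means of i.i.d.\ terms, the latter supplied by the $k=4$ clauses of Assumption~D. Write $\Delta=\hat\theta_n-\hat\theta_N$, $a_n=\nabla F_S(\hat\theta_N)$, $H_n=\nabla^2F_S(\hat\theta_N)$ and $\bar L_n=\frac1{Nn}\sum_{i\in S}\frac1{\pi_i}L(x_i)$. Since $S$ is drawn with replacement, $a_n=\frac1n\sum_{j=1}^nY_j$ and $H_n=\frac1n\sum_{j=1}^nZ_j$ with $\{(Y_j,Z_j)\}$ i.i.d., $Y_j=\frac1{N\pi_{I_j}}\nabla f(\hat\theta_N;x_{I_j})$, $Z_j=\frac1{N\pi_{I_j}}\nabla^2f(\hat\theta_N;x_{I_j})$, and by Eqn.~(\ref{system-equ}) $\text{E}Y_j=\nabla F_U(\hat\theta_N)=0$, $\text{E}Z_j=\Sigma$. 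Taylor expanding $\nabla F_S$ about $\hat\theta_N$ yields $-a_n=\tilde H_n\Delta$ with $\tilde H_n=\int_0^1\nabla^2F_S(\hat\theta_N+t\Delta)\,dt=H_n+E_n$, where the Hessian-Lipschitz part of Assumption~C gives $\|E_n\|_2\le\tfrac12\bar L_n\|\Delta\|$.

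First I would set up a good event. The i.i.d.\ structure and Assumption~D give $\text{E}\|a_n\|^4=O(n^{-2})$, $\text{E}\|H_n-\Sigma\|_2^4=O(n^{-2})$, $\text{E}|\bar L_n-\bar L|^4=O(n^{-2})$ with $\bar L=\text{E}\bar L_n=\frac1N\sum_{i\in U}L(x_i)=O(1)$, and by H\"older $\text{E}\|a_n\|^3=O(n^{-3/2})$. Let $G=\{\|H_n-\Sigma\|_2\le\lambda/4\}\cap\{\bar L_n\le2\bar L\}\cap\{\|a_n\|\le c_0\}$ for a small constant $c_0$; Markov applied to the fourth moments gives $\text{P}(G^c)=O(n^{-2})$. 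On $G$, $H_n$ has smallest eigenvalue at least $3\lambda/4$ (recall $\Sigma\ge\lambda I$ from Assumption~B), and exploiting the convexity of $F_S$ (a nonnegative combination of the convex $f(\cdot;x_i)$) together with $\|E_n\|_2\le\tfrac12\bar L_n\|\Delta\|$ one localizes $\hat\theta_n$: for $c_0$ small enough, $\|\Delta\|\le\tfrac2\lambda\|a_n\|$ on $G$, hence also $\tilde H_n\ge(\lambda/2)I$, $\|\tilde H_n\|_2\le C$, and $\bar L_n\|a_n\|\le C$ on $G$. On $G^c$ we use only $\|\Delta\|\le2R$ (Assumption~A), so $\text{E}[\mathbf{1}_{G^c}\|\Delta\|^2]\le4R^2\,\text{P}(G^c)=O(n^{-2})$, negligible relative to $n^{-3/2}$. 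This part parallels the argument behind Theorem~\ref{them:asymptotic}, but carried out with fourth rather than second moments.

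On $G$, where $\Delta=-\tilde H_n^{-1}a_n$, I would linearize $\tilde H_n^{-1}$ about $\Sigma^{-1}$: from $\Sigma^{-1}-\tilde H_n^{-1}=\tilde H_n^{-1}(\tilde H_n-\Sigma)\Sigma^{-1}$ with $\tilde H_n-\Sigma=(H_n-\Sigma)+E_n$ we get $\Delta=W+\rho_n$, where $W=-\Sigma^{-1}a_n$ and, on $G$, $\|\rho_n\|\le C(\|H_n-\Sigma\|_2+\bar L_n\|a_n\|)\|a_n\|$. Then $\mathbf{1}_G\,\Delta\Delta^{\top}=\mathbf{1}_G(WW^{\top}+W\rho_n^{\top}+\rho_nW^{\top}+\rho_n\rho_n^{\top})$. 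The key identity is $\text{E}[WW^{\top}]=\Sigma^{-1}\text{E}[a_na_n^{\top}]\Sigma^{-1}=\tfrac1n\Sigma^{-1}\text{E}[Y_1Y_1^{\top}]\Sigma^{-1}$, and since $\text{E}Y_1=0$ a direct computation $\text{E}[Y_1Y_1^{\top}]=\frac1{N^2}\sum_{i\in U}\frac1{\pi_i}\nabla f(\hat\theta_N;x_i)\nabla^{\top}f(\hat\theta_N;x_i)$ shows this is \emph{exactly} $\text{AMSE}(\hat\theta_n)$ of Eqn.~(\ref{AMSE-expr}); the discrepancy $\text{E}[\mathbf{1}_{G^c}WW^{\top}]$ is entrywise $\le C\,\text{E}[\mathbf{1}_{G^c}\|a_n\|^2]\le C(\text{P}(G^c))^{1/2}(\text{E}\|a_n\|^4)^{1/2}=O(n^{-2})$. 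For the cross and quadratic terms, H\"older with the fourth-moment bounds and $\bar L_n\|a_n\|\le C$ on $G$ gives $\text{E}[\mathbf{1}_G\|W\|\,\|\rho_n\|]\le C\,\text{E}[\|H_n-\Sigma\|_2\|a_n\|^2]+C\,\text{E}[\bar L_n\|a_n\|^3]=O(n^{-3/2})$ and likewise $\text{E}[\mathbf{1}_G\|\rho_n\|^2]=O(n^{-3/2})$, using $\bar L_n^2\|a_n\|^4\le C\bar L_n\|a_n\|^3$ on $G$. Assembling these with the $G^c$ remainder yields $\text{MSE}(\hat\theta_n)=\text{AMSE}(\hat\theta_n)+O(n^{-3/2})$ entrywise, i.e.\ Eqn.~(\ref{MSE}).

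I expect the main obstacle to be the localization on $G$: upgrading the merely $L^2$ control $\text{E}\|\Delta\|^2=O(n^{-1})$ of Theorem~\ref{them:asymptotic} to ``$\|\Delta\|\le C\|a_n\|$ on an event of complementary probability $O(n^{-2})$,'' which is what makes the Taylor remainder $E_n$ and the products $\bar L_n\|a_n\|^3$ controllable and is precisely where the $O(n^{-3/2})$ rate (a genuine third-order effect, coming through $\text{E}[\bar L_n\|a_n\|^3]=O(n^{-3/2})$) originates. The convexity of $F_S$ makes this possible without assuming $F_S$ itself is strongly convex, but the bookkeeping is delicate and is the non-standard, sampling-based heart of the argument; a secondary point is checking that every discarded event carries probability $O(n^{-2})$, which is what forces the use of the $k=4$ moment conditions in Assumption~D rather than just $k=2$.
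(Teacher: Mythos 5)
Your proposal is correct and follows essentially the same route as the paper's proof: both Taylor-expand the estimating equation about $\hat{\theta}_N$ to write $\hat{\theta}_n-\hat{\theta}_N=-\Sigma^{-1}\nabla F_S(\hat{\theta}_N)+\text{(remainder)}$, identify the exact second moment of the leading term with $\text{AMSE}(\hat{\theta}_n)$ via the i.i.d.\ with-replacement structure, bound the cross term by Cauchy--Schwarz as $O(n^{-1/2})\cdot O(n^{-1})=O(n^{-3/2})$, and handle invertibility/localization on a good event whose complement is absorbed using the compactness of $\Theta$. The only differences are cosmetic: you push the bad-event probability to $O(n^{-2})$ and bound the remainder explicitly via a resolvent expansion of $\tilde H_n^{-1}$, whereas the paper keeps $Pr(\mathcal{E}^c)=O(n^{-1})$ (compensated by an extra $O(n^{-1})$ moment factor) and controls the implicit remainder $-\Sigma^{-1}(Q_1+Q_2)(\hat{\theta}_n-\hat{\theta}_N)$ through the conditional fourth moment $\text{E}[\|\hat{\theta}_n-\hat{\theta}_N\|^4\mid\mathcal{E}]=O(n^{-2})$ obtained from its Lemma~\ref{delta}.
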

%Theorem \ref{them-MSE} shows that $\text{MSE}(\hat{\theta}_n)$ can be approximated by the leading term AMSE defined in Eqn.(\ref{AMSE-expr}) of Theorem \ref{them:asymptotic}. 
%The remainder term decays as $O(n^{-3/2})$ from Theorem \ref{them-MSE}, while $\text{MSE}(\hat{\theta}_n)=O(n^{-1})$ as shown in Eqn.(\ref{consistency}) of Theorem \ref{them:asymptotic}. Therefore, our MSE approximation is valid.
Theorem \ref{them-MSE} shows that $\text{MSE}(\hat{\theta}_n)$ can be divided into the leading term $\text{AMSE}(\hat{\theta}_n)$ and the remainder term that decays as $O(n^{-3/2})$.  We have shown that $\text{MSE}(\hat{\theta}_n)=O(n^{-1})$  in Eqn.(\ref{consistency}) of Theorem \ref{them:asymptotic}, so the remainder term is negligible. %Therefore, $\text{AMSE}(\hat{\theta}_n)$ is a valid approximation of $\text{MSE}(\hat{\theta}_n)$.
%Theorem \ref{them-MSE} is one key result of this paper. 
%In Section \ref{sec:application}, we also propose two useful applications of Theorem \ref{them-MSE} . %Some special cases will be considered in Section \ref{sec:examples}.

%For Specific, Eqns.(\ref{MSE}) \& (\ref{consistency}) state that the AMSE is valid since the remainder term is negligible.
%Eqn.(\ref{AMSE-expr}) states that the AMSE is derived to be a sandwich-type expression, the middle of which depends on sampling. 

The key contribution of our MSE approximation is to obtain an expressible measure for accurately knowing the performance of the subsampled optimization with respect to the large-scale optimization.
This measure is different from existing analysis on convergence rates, which just provides a margin measure so probably is far less enough to provide a sufficient diagnosis for the subsampled optimization.
Our MSE approximation sufficiently diagnoses the effect of sampling. 
% by constructing a confidence region on the unknown exact solution $\hat{\theta}_N$, and helps us to find an efficient sampling method. 
We shall show some valuable implications in Section \ref{sec:conseq}.

\begin{remark}
From Theorem \ref{them-MSE}, the mean squared prediction error (MSPE) can be approximated. Given a smooth function $g(\theta)$,  the MSPE of $g(\hat{\beta}_n)$ with respect to $g(\hat{\beta}_N)$ is easily obtained from MSE as follows: $$E[g(\hat{\theta}_n)-g(\hat{\beta}_N)]^2\approx\nabla^{\top} g(\hat{\theta}_N)\text{AMSE}(\hat{\theta}_n)\nabla g(\hat{\theta}_N).$$
\end{remark}

%%%%%%
\subsection{Mean squared error estimation}
Although $\text{AMSE}(\hat{\theta}_n)$ can approximate the MSE, calculating it from the full large-scale data meets computational bottleneck. To address this problem, 
%In order to measure $\text{MSE}(\hat{\theta}_n)$ from the subsampled data in practice, 
we now investigate the MSE estimation from the subsampled data. The goal is to obtain such an estimator $\text{mse}(\hat{\theta}_n)$ based on the subsampled data that is approximately unbiased in the sense that $\text{E}[\text{mse}(\hat{\theta}_n)]=\text{MSE}(\hat{\theta}_n)+\text{remainder}$. This means that the bias of $\text{mse}(\hat{\theta}_n)$ in estimating $\text{MSE}(\hat{\theta}_n)$ is negligible.
%whose bias is corrected to the same order as that appeared in Eqn.(\ref{MSE}) in the sense that $$\text{E}[\text{mse}(\hat{\theta}_n)]=\text{E}(\hat{\theta}_n-\hat{\theta}_N)(\hat{\theta}_n-\hat{\theta}_N)^{\top}+O(n^{-3/2}).$$ This means that the bias of $\text{MSE}(\hat{\theta}_n)$ in estimating the MSE of $\hat{\theta}_n$ is of the order $O(n^{-3/2})$.
Define $\hat{\Sigma}=\nabla^2F_S(\hat{\theta}_n)$, and construct the MSE estimator as
\begin{align}\label{mse-formula}
&\text{mse}(\hat{\theta}_n)
=\hat{\Sigma}^{-1}\left[\frac{1}{n^2}\frac{1}{N^2}\sum_{i\in S}\frac{1}{\pi_i^2}\nabla f(\hat{\theta}_n;x_i)\nabla^{\top}f(\hat{\theta}_n;x_i)\right]\hat{\Sigma}^{-1}.
\end{align}
Define $$\mathcal{E}_F:=\{\|\nabla^2 F_S(\theta)-\nabla^2 F_U(\theta)\|_2\leq c\lambda\} \text{ for }\theta\in \Theta, 0<c<1,$$ 
and denote $\mathcal{E}_F^c$ as the complement set of $\mathcal{E}_F$. In Lemma \ref{e2-c} of Appendix \ref{sec:lemma}, we prove that $$Pr(\mathcal{E}_F^c)=O(n^{-1}).$$
In Assumption B we require %$F_U(\theta)$ is $\lambda$-strongly convex over $\Theta$, that is, 
$\nabla^2 F_U(\theta)\geq \lambda I$ for $\theta\in \Theta$, so 
given the event $\mathcal{E}_F$ holds, $\nabla^2 F_U(\theta)\geq (1-c)\lambda I$ for $\theta\in \Theta$. %$F_U(\theta)$ is $(1-c)\lambda$-strongly convex over $\Theta$. 
This implies that $\hat{\Sigma}$ is invertible if $\mathcal{E}_F$ holds. 
\begin{theorem}\label{them-mse}
Suppose that the Assumptions A--D are satisfied. Then, we have that, given the event $\mathcal{E}_F$ holds,
\begin{equation}\label{mse-property}
\text{E}[\text{mse}(\hat{\theta}_n)]=\text{MSE}(\hat{\theta}_n)+O(n^{-3/2}).
\end{equation}
\end{theorem}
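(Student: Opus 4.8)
The plan is to work on the event $\mathcal{E}_F$ throughout (so that $\hat\Sigma$ is invertible, with $\hat\Sigma \ge (1-c)\lambda I$, and all matrix inverses below are well-defined and uniformly bounded in operator norm), and to compare $\text{mse}(\hat\theta_n)$ term-by-term with $\text{AMSE}(\hat\theta_n)$, then invoke Theorem \ref{them-MSE} to replace $\text{AMSE}(\hat\theta_n)$ by $\text{MSE}(\hat\theta_n)$ at cost $O(n^{-3/2})$. Write
\begin{equation*}
\text{mse}(\hat\theta_n) = \hat\Sigma^{-1}\,\hat{V}\,\hat\Sigma^{-1}, \qquad
\hat{V} := \frac{1}{n^2 N^2}\sum_{i\in S}\frac{1}{\pi_i^2}\nabla f(\hat\theta_n;x_i)\nabla^{\top}f(\hat\theta_n;x_i),
\end{equation*}
and compare with $\text{AMSE}(\hat\theta_n) = \Sigma^{-1} V \Sigma^{-1}$ where $\Sigma = \nabla^2 F_U(\hat\theta_N)$ and $V = \frac{1}{nN^2}\sum_{i\in U}\frac{1}{\pi_i}\nabla f(\hat\theta_N;x_i)\nabla^{\top}f(\hat\theta_N;x_i)$. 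The first key step is a sampling-expectation computation: for a fixed deterministic quantity $g_i$, $\text{E}\big[\frac{1}{n}\sum_{i\in S}\frac{1}{\pi_i}g_i\big] = \sum_{i\in U} g_i$ (each draw is $g_i/\pi_i$ with probability $\pi_i$, scaled by $1/n$). Hence if we froze the evaluation point at $\hat\theta_N$ instead of $\hat\theta_n$, we would get $\text{E}[\hat V_{\text{frozen}}] = \frac{1}{n}\cdot\frac{1}{n N^2}\sum_{i\in U}\frac{1}{\pi_i}\nabla f(\hat\theta_N;x_i)\nabla^\top f(\hat\theta_N;x_i) = V$ exactly (the extra $1/\pi_i$ in $\text{mse}$ versus the $1/\pi_i^2$ inside the sum is precisely absorbed by the $1/n$-weighted sampling expectation). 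So the "main term'' matches in expectation by design; everything else is a perturbation argument.

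The second step is to control the three sources of discrepancy between $\hat V$, $\hat\Sigma$ and their frozen/population counterparts: (i) replacing $\hat\theta_n$ by $\hat\theta_N$ inside $\nabla f(\cdot;x_i)$ and inside $\nabla^2 F_S(\cdot)$; (ii) replacing $\nabla^2 F_S(\hat\theta_N)$ by $\nabla^2 F_U(\hat\theta_N) = \Sigma$; (iii) collecting cross-terms. For (i), use Assumption C (Lipschitz gradients and Hessians) to bound $\|\nabla f(\hat\theta_n;x_i) - \nabla f(\hat\theta_N;x_i)\| \le L(x_i)\|\hat\theta_n-\hat\theta_N\|$ and similarly for the Hessian, then expand the rank-one products; the resulting error terms involve $\|\hat\theta_n-\hat\theta_N\|$ times weighted sums of $L(x_i)$, $\|\nabla f(\hat\theta_N;x_i)\|$, etc. Taking expectations and applying Cauchy--Schwarz together with Eqn. (\ref{consistency}) ($\text{E}\|\hat\theta_n-\hat\theta_N\|^2 = O(n^{-1})$, so $\text{E}\|\hat\theta_n-\hat\theta_N\| = O(n^{-1/2})$) and Assumption D (the relevant fourth- and second-moment sampling sums are $O(1)$) shows these contribute $O(n^{-1/2})$ relative to a term that is itself $O(n^{-1})$, i.e. $O(n^{-3/2})$ absolutely. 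For (ii), $\text{E}\|\nabla^2 F_S(\hat\theta_N) - \Sigma\|_2^2 = O(n^{-1})$ is a direct variance computation under the sampling process using Assumption D (this is the same kind of estimate behind Lemma \ref{e2-c}), giving $\|\nabla^2 F_S(\hat\theta_N)-\Sigma\|_2 = O_p(n^{-1/2})$; since $V = O(n^{-1})$, sandwiching this between bounded inverses again produces $O(n^{-3/2})$. The identity $A^{-1} - B^{-1} = A^{-1}(B-A)B^{-1}$ handles the passage from $\hat\Sigma^{-1}$ to $\Sigma^{-1}$ cleanly, using that both inverses are bounded by $((1-c)\lambda)^{-1}$ on $\mathcal{E}_F$.

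The third step is bookkeeping: assemble $\text{E}[\text{mse}(\hat\theta_n)\mathbf{1}_{\mathcal{E}_F}] = V$-matched-term $+\ O(n^{-3/2}) = \text{AMSE}(\hat\theta_n) + O(n^{-3/2})$, then apply Theorem \ref{them-MSE} to get $= \text{MSE}(\hat\theta_n) + O(n^{-3/2})$, which is exactly Eqn. (\ref{mse-property}) since the statement is conditional on $\mathcal{E}_F$. (If one wanted the unconditional statement one would also note $Pr(\mathcal{E}_F^c) = O(n^{-1})$ from Lemma \ref{e2-c} and bound the contribution on $\mathcal{E}_F^c$ using boundedness of $\Theta$, but the theorem as stated only asks for the conditional version.) The main obstacle I anticipate is step (i): carefully expanding the rank-one outer products $\nabla f(\hat\theta_n;x_i)\nabla^\top f(\hat\theta_n;x_i)$ around $\hat\theta_N$ and tracking that every cross-term, after taking sampling expectation, pairs a factor of order $\|\hat\theta_n-\hat\theta_N\|$ (hence $O(n^{-1/2})$ in mean) against an $O(1)$ weighted moment sum from Assumption D and an overall $1/n$ prefactor — ensuring nothing of order larger than $n^{-3/2}$ survives. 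Keeping the Cauchy--Schwarz applications tight (so that a fourth moment, not a higher one, suffices) is where the Assumption D exponents $k=2,4$ are exactly calibrated, and getting that matching right is the delicate part; the Hessian-side estimates (ii) are more routine variance bounds of the type already established for Lemma \ref{e2-c}.
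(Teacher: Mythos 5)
Your proposal is correct and follows essentially the same route as the paper's proof: the paper likewise splits $\text{mse}(\hat\theta_n)$ into $\Sigma^{-1}A_n\Sigma^{-1}$ plus remainder terms built from $\Delta_{\Sigma^{-1}}=\Sigma^{-1}(\hat\Sigma-\Sigma)\hat\Sigma^{-1}$, uses the sampling-expectation identity so that the frozen term $\Sigma^{-1}A_N\Sigma^{-1}$ has expectation exactly $\text{AMSE}(\hat\theta_n)$, controls $A_n-A_N$ via the Lipschitz condition, Cauchy--Schwarz, $\text{E}[\|\hat\theta_n-\hat\theta_N\|^4\,|\,\mathcal{E}]=O(n^{-2})$ and the Assumption D moments to get $O(n^{-3/2})$, and bounds the inverse-difference terms by $\text{E}\|\hat\Sigma-\Sigma\|_2^2=O(n^{-1})$ against $\text{E}\|A_n\|_F^2=O(n^{-2})$. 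The only cosmetic difference is that you pass through $\text{AMSE}$ and then cite Theorem \ref{them-MSE}, whereas the paper states the intermediate result directly in terms of $\text{MSE}(\hat\theta_n)$; the estimates involved are the same.
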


Theorem \ref{them-mse} shows that $\text{mse}(\hat{\theta}_n)$ estimates $\text{MSE}(\hat{\theta}_n)$ well, so $\text{mse}(\hat{\theta}_n)$ is used to diagnose the performance of the subsampled optimization from the subsampled data. 
It also provides a practical way to choose the subsampled data size:  the size $n$ is decided via calculating $\text{mse}(\hat{\theta}_n)$ from the subsampled data in order to meet a precision requirement,
%Eqn.(\ref{mse-property}) states that, we can make inference on the subsampled optimization by just using the subsampled dataset drawn from the original large-scale data.

%%%%%%
%%%%%%
\section{Some consequences}
\label{sec:conseq}
We now turn to deriving some useful consequences of our main theorems for applying the subsampled optimization. One application is to construct confidence region on $\hat{\theta}_N$. %, the unknown exact solution of the optimization on large scale data. 
The other is to provide an efficient way for sampling a subsampled dataset. % for the subsampled optimization.
%%%%%%%%
\subsection{Confidence region on $\hat{\theta}_N$}
\label{sec:CI}
%Thanks to the consistency and explicit expression of MSE estimator, 
We introduce the confidence region into the subsampled optimization to construct a confidence region on $\hat{\theta}_N$. %for the approximate solution of subsampled optimization with respect to the exact solution from  the large-scale problem. 
Due to Eqn. (\ref{normal}) in Theorem \ref{them:asymptotic},
$$(\hat{\theta}_n-\hat{\theta}_N)^{\top}[\text{AMSE}(\hat{\theta}_n)]^{-1}(\hat{\theta}_n-\hat{\theta}_N)\rightarrow \chi_d^2  \text{ in distribution,}$$
where the degree of freedom $d$ is the size of the coefficients. 
%%%%
Since $\text{mse}(\hat{\theta}_n)$ is an approximately unbiased estimator of $\text{MSE}(\hat{\theta}_n)$ from Theorem \ref{them-mse}, we construct the ellipsoidal confidence region $\Im$ for $\hat{\theta}_N$ as follows:
\begin{equation}\label{CI}
\Im=\left\{(\hat{\theta}_n-\hat{\theta}_N)^{\top}[\text{mse}(\hat{\theta}_n)]^{-1}(\hat{\theta}_n-\hat{\theta}_N)\leq \chi_d^2(q)\right\}.
\end{equation}
where $q$ is the confidence level, e.g. 0.95, that the region contains $\hat{\theta}_N$.
%Notice that although we can use Eqn.(\ref{CI}) to construct the confidence region, strictly speaking, the asymptotic normality is needed to be provided using CLT under some regularity conditions (\cite{Vaart:98}). 
%However, it is out of the scope of this paper and ignored here.
From Eqn.(\ref{CI}), given a confidence level $q$, the confidence region for $\hat{\theta}_N$ is constructed around $\hat{\theta}_n$. 
This statistical inference provides an accurate measure of confidence on the large-scale optimization based on the subsampled optimization. 
% probability happening the event that the exact solution $\hat{\theta}_N$ is in a. 
We do some numerical experiments in Section \ref{sec:num}.

%%%%%%%%%%%%%%%%%
\subsection{Optimal sampling: Hessian-based sampling}
\label{sec:application}

One contribution of Theorem \ref{them-MSE} is to guide us to find some efficient sampling process such that $\text{AMSE}(\hat{\theta}_n)$ attains its minimum in some sense.
From Eqn.(\ref{AMSE-expr}) and Cauchy-Schwartz inequality, if $\pi_i$ takes the value such that
\begin{equation}\label{opt-sampling}\pi_i^{opt}\propto \|\Sigma^{-1}\nabla f(\hat{\theta}_N;x_i)\|,\end{equation}
then $\text{AMSE}(\hat{\theta}_n)$ attains its minimum, meaning that $\pi_i^{opt}$ is an optimal choice in this sense. 

%From Eqn. (\ref{opt-sampling}), our optimal sampling probabilities $\pi_i^{opt}$ are proportional to the norms of Newton directions used in Newton's method in optimization.
Compared with the gradient-based sampling proposed by \cite{Zhu:16}, where $\pi_i\propto \|\nabla f(\hat{\theta}_0;x_i)\|$ with a pilot $\hat{\theta}_0$ of $\hat{\theta}_N$, 
our optimal sampling probabilities $\{\pi_i^{opt}\}_{i=1}^N$ are proportional to the norms of Newton directions used in Newton's method.
The optimality of $\{\pi_i^{opt}\}_{i=1}^N$ makes sense because Newton directions make use of the second-order Hessian information and the gradient information.
Another advantage of our sampling method is that the sampling probabilities have the scale invariance property: $\{\pi_i^{opt}\}_{i=1}^N$ do not change if scales $x$ are multiplied by a factor.

However $\{\pi_i^{opt}\}_{i=1}^N$ are impractical as they include the unknown $\hat{\theta}_N$ and $\Sigma$.  
Following the idea of \cite{Zhu:16}, we obtain the pilots $\hat{\theta}_0$ and $\hat{\Sigma}_0$ which are calculated based on a small fraction $S_0$ of size $n_0$ (usually $n_0\leq n$) drawn from $U$ by uniform sampling. Given the dataset $S_0$, the system of equations is formed via $F_{S_0}(\theta)=\frac{1}{n_0}\sum_{i\in S_0}f(\theta; x_i)$.
$\hat{\theta}_0$ is obtained by solving $\nabla F_{S_0}(\theta)=0$ and $\hat{\Sigma}_0=\nabla^2F_{S_0}(\hat{\theta}_0)$.
% $$\hat{\theta}_0 \text{ is the solution of equations} \nabla F_{S_0}(\theta)=0 \text{ and }\hat{\Sigma}_0=\nabla^2F_{S_0}(\hat{\theta}_0), \text{ where } F_{S_0}(\theta)=\frac{1}{n_0}\sum_{i\in S_0}f(\theta; x_i).$$
Then we replace $\hat{\theta}_N$ and $\Sigma$ with their pilots $\hat{\theta}_0$ and $\hat{\Sigma}_0$, respectively.  The optimal sampling used in practice are given below
\begin{equation}\label{opt-sampling-use}\pi_i^{o}\propto \|\hat{\Sigma}_0^{-1}\nabla f(\hat{\theta}_0;x_i)\|.\end{equation}
We call it as the Hessian-based sampling, which is summarized in Algorithm \ref{alg-hessian}.
The computational complexity of this algorithm are from three parts. The first is calculating $\hat{\theta}_0$ and $\hat{\Sigma}_0$. It costs relatively small because we solve them based on a small fraction $S_0$ of size $n_0$ drawn from $U$ by uniform sampling. The second is calculate sampling probabilities $\pi_i^{o}$. It is scalable to $O(Nd)$ considering the example of Newton's method on $N\times d$ data matrix. The third is from solving the subsampled optimization based on the subsampled data in such a way that solving it is associated with the subsampled data size. Therefore, the cost of the subsampled optimization is much smaller than the original optimization. 
%Furthermore, we can fast calculate $\pi_i^{opt}$ by following the idea of \cite{Zhu:16}. For specific, we replace $\hat{\theta}_N$ and $\Sigma$ with their pilots, respectively, such as the estimators obtained based on a small fraction of data drawn from the original full data by uniform sampling.
%Thereby, from this point view of this application, our optimal sampling method in Eqn. (\ref{opt-sampling}) can improve the performance of random sampling in approximating LS solver, since \cite{Zhu:16} ignored the information on covariance of predictors.
%However, we ignore the detailed investigation here since it is out of the scope of this paper.
%Since it is out of the scope of this paper, we ignore the further investigation.
%\IncMargin{1em}
\begin{algorithm}
%\SetKwInOut{Input}{input}\SetKwInOut{Output}{output}
\textbf{Input:} dataset $U$; pilot data size $n_0$; subsampled data size $n$\\
\textbf{Output:} the solution of subsampled optimization $\hat{\theta}_n$\\
\textbf{\emph{Step 1: obtaining sampling probabilities}}: \\
 (1a) Solve initial solution $\hat{\theta}_0$ and $\hat{\Sigma}_0$ obtained based on a small fraction $S_0$ of data of size $n_0$ drawn from $U$ by uniform sampling;\\
 (1b) Calculate the optimal weights $\{\pi_i^{o}\}_{i=1}^N$ for each data point according to Eqn. (\ref{opt-sampling-use});\\
\textbf{\emph{Step 2: sampling the subsampled data}}: \\
  (2) Obtain the subsampled data $S$ of size $n$ via sampling with replacement according to the optimal weights from $U$; \\
\textbf{\emph{Step 3: subsampled optimization}}: \\
  (3) Solve the approximate solution $\hat{\theta}_n$ from the subsampled data according to Eqn.(\ref{theta-n}).
\caption{Hessian-based sampling}\label{alg-hessian}
\end{algorithm}
%\DecMargin{1em}

%%%%%%%%%%%
\section{Empirical observations}
\label{sec:num}
In this section, we carry out numerical studies by simulated data sets to verify our theoretical results  and their implications.. We approximate the exact solvers of coefficients of linear/logistic regressions $\hat{\theta}_N$ by their subsampled solution $\hat{\theta}_n$.  We measure the performance in the sense of the MSE $\text{E}\|\hat{\theta}_n-\hat{\theta}_N\|^2$. %, which is trace of the MSE. 
%We also consider performance of the MSE estimators based on the subsampled data. %, which are used as measure of uncertainty in practice.  

We first describe the set-up of generating datasets. For the least-squares problem, we generate the dataset of size $N=100K$ under the model: $y_i=x_i^{\top}\theta+\epsilon_i$, where $\theta=(1,1,1,0.1,0.1)^{\top}$, $\epsilon_i\sim N(0,10)$; three quarters of $x_i\sim N(0,\Sigma)$ and one quarter of $x_i\sim N(0,4\Sigma)$, where the $(i,j)$ element of $\Sigma$ is $\Sigma_{ij}=0.5^{|i-j|}$.
%\iffalse
For the logistic regression, we generate the dataset of size $N=100K$ from the model: $Pr(y_i=1)=[1+\exp(-x_i^{\top}\theta)]^{-1}$, where $x_i\sim N(0,\Sigma_l)$, and $\Sigma_l=\text{diag}(1,1,1,5,5)$. Although the setting of datasets is not large-scale, it is enough to clearly assess our results by showing valuable empirical observations.
Note that we consider the intercept term in fitting these models. 
%We consider three sampling strategies: simple random sampling (SRS), stratum sampling (STR) and probability proportional to a variable (PPS).
We compare four sampling methods: uniform sampling (UNIF), leverage-based sampling (LEV) (\cite{Drineas:06}), gradient-based sampling (GRAD) (\cite{Zhu:16}), and Hessian-based sampling (Hessian).
One thousand replicated samples are drawn via UNIF, LEV, GRAD and Hessian, respectively, with different sampling fractions $0.005, 0.01, 0.02, 0.04, 0.08$. 

\begin{figure*}[h]
\centering
\subfigure[MSE-linear]{
\label{linear-rate}
  \includegraphics[width=0.4\textwidth]{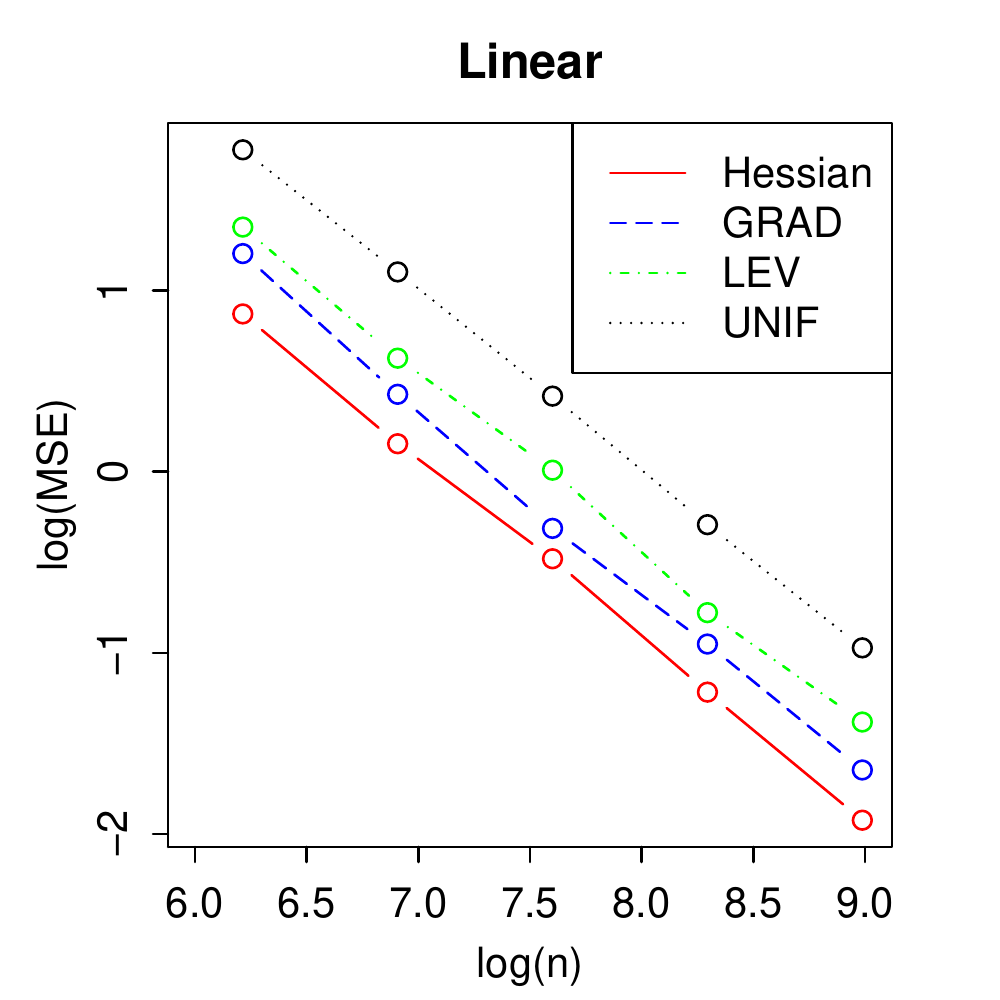}}
  \subfigure[MSE-logistic]{
\label{logistic-rate}
  \includegraphics[width=0.4\textwidth]{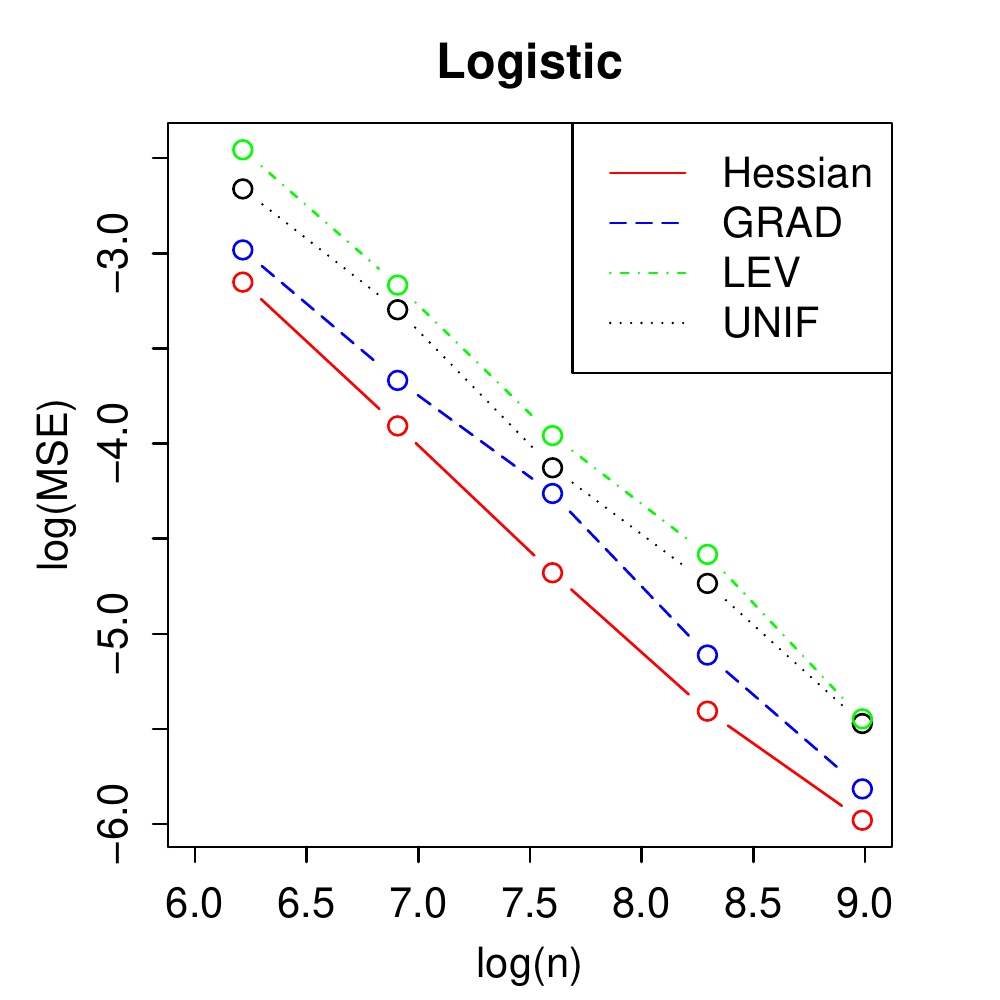}}
\caption{Decreasing Rate of MSE. From left to right: linear regression and logistic regression. 
%Upper panel: Linear Regression; Lower panel: Logistic Regression. Dotted lines correspond to slope equal to 1. The dotted lines for linear and logistic regressions are plotted in the same scales, respectively, to clearly show the results. 
}
\label{MSE-rate}
\end{figure*}

%%%%%%%%%%%%%%%%%%%%%%
\begin{table}[!h]
\caption{Ratios between AMSE and MSE for linear and logistic regressions. The 1st row denotes the sampling fraction.}
\begin{center}
\begin{tabular}{ c | c c c c c | c c c c c}
\hline
 sampling & 0.005 & 0.01 & 0.02 & 0.04 & 0.08 & 0.005 & 0.01 & 0.02 & 0.04 & 0.08\\
\hline
& \multicolumn{5}{c|}{Linear regression } &  \multicolumn{5}{c}{Logistic regression} \\\hline
 UNIF & 0.937 & 0.953 & 0.956 & 1.028 & 0.985 &  0.887 & 0.938 & 0.958 & 0.986 & 1.000 \\
 LEV & 0.972 & 0.953 & 1.031 & 1.015 & 1.003 & - & - & - & - & - \\
 GRAD & 0.910 & 0.972 & 0.960 & 1.019 & 0.987 & 0.939 & 1.018 & 0.986  & 1.013 & 0.994 \\
 Hessian & 0.932 & 0.953 & 1.020 & 1.011 & 0.995 & 0.957 & 0.977 & 0.972 & 1.009 & 0.989 \\
\hline
\end{tabular}
\end{center}
\label{Table-AMSEratio}
\end{table}

%%%%%%%%%%%%%%%%%%%%%%
\begin{table}[!h]
\caption{Ratios between means of MSE estimators and MSE for linear and logistic regressions. The 1st row denotes  the sampling fraction.}
\begin{center}
\begin{tabular}{ c | c c c c c | c c c c c}
\hline
 sampling & 0.005 & 0.01 & 0.02 & 0.04 & 0.08 & 0.005 & 0.01 & 0.02 & 0.04 & 0.08 \\
\hline
& \multicolumn{5}{c|}{Linear regression } & \multicolumn{5}{c}{Logistic regression} \\\hline
 UNIF & 1.014 & 0.919 & 1.004 & 1.049 & 1.024 & 0.956 & 0.969 & 0.988 & 1.014 & 0.971 \\
 LEV & 0.908 & 0.978 & 1.063 & 1.032 & 1.010 & - & - & - & - & - \\
 GRAD & 0.955 & 1.033 & 0.967 & 0.982 & 0.984 & 0.998 & 0.995 & 1.017 & 0.992 & 0.997 \\
 Hessian & 0.988 & 0.995 & 0.935 & 0.982 & 0.997  & 0.898 & 0.976 & 0.972 & 1.013 & 0.999 \\
\hline
\end{tabular}
\end{center}
\label{Table-MSEratio}
\end{table}

%%%%%%%%%%%%%
\begin{table*}[!h]
\caption{Confidence region performance. The 1st row denotes  the sampling fraction.}
\begin{center}
\begin{tabular}{c | c c c c c  | c c c c c}
\hline
sampling & 0.005 & 0.01 & 0.02 & 0.04 & 0.08 & 0.005 & 0.01 & 0.02 & 0.04 & 0.08\\
\hline
& \multicolumn{5}{c}{Linear; CI: 90\%} & \multicolumn{5}{|c}{Linear; CI: 95\%}\\
\hline
UNIF & 0.872 & 0.872 & 0.888 & 0.914 & 0.890& 0.916 & 0.928 & 0.944 & 0.954 & 0.932 \\
LEV & 0.902 & 0.885 & 0.925 & 0.926 & 0.895 & 0.937 & 0.936 & 0.958 & 0.969 & 0.942 \\
GRAD & 0.881 & 0.908 & 0.892 & 0.913 & 0.887 & 0.931 & 0.960 & 0.950 & 0.957 & 0.936 \\
Hessian & 0.886 & 0.876 & 0.891 & 0.910 & 0.904 & 0.934 & 0.938 & 0.941 & 0.963 & 0.946 \\
\hline
& \multicolumn{5}{c}{Logistic; CI: 90\%} & \multicolumn{5}{|c}{Logistic; CI: 95\%} \\
\hline
UNIF & 0.904 & 0.874 & 0.892 & 0.896 & 0.896 & 0.958 & 0.944 & 0.938 & 0.952 & 0.950 \\
GRAD & 0.906 & 0.912 & 0.912 & 0.914 & 0.894 & 0.953 & 0.954 & 0.954 & 0.958 & 0.950 \\
Hessian & 0.889 & 0.917 & 0.890 & 0.918 & 0.895 & 0.927 & 0.972 & 0.967 & 0.955 & 0.943 \\
\hline
\end{tabular}
\end{center}
\label{Table-CI}
\end{table*}

{\bf MSE performance.}  
We plot the logarithm of $\text{MSE}(\hat{\theta}_n)$ with respect to $\log(n)$ for linear/logistic regressions in Figure \ref{MSE-rate} . Note LEV performs badly (worse than UNIF) in logistic regression, so LEV  is not used in logistic regression.
From the figure, we have two observations. First, the slopes of $\log(\text{MSE})$ with respect to $\log(n)$ are very close to 1. This means  that the decreasing rate of MSE is $O(n^{-1})$ and that the subsampled optimization can approximates the large-scale optimization well. Second, Hessian gets the best performance among four sampling methods. It verifies the efficiency of our Hessian-based sampling.

{\bf The performance of the MSE approximation.}  
We empirically investigate the performance of the MSE approximation. We calculate 
$\text{ratio}=\text{AMSE}(\hat{\theta}_n)/\text{MSE}(\hat{\theta}_n)$, and report them in Table \ref{Table-AMSEratio}. We see that $\text{AMSE}(\hat{\theta}_n)$ is very close to $\text{MSE}(\hat{\theta}_n)$ and their gap is small even when the sampling fraction equals 0.005. The results show that our MSE approximation behaves as expected in Theorem \ref{them-MSE}, that is, $\text{AMSE}(\hat{\theta}_n)$ approximates $\text{MSE}(\hat{\theta}_n)$ very well. 

{\bf The performance of MSE estimators.}  
We empirically investigate the performance of MSE estimators. 
%We measure it in the sense of $\text{tr}(MSE)$ 
 Table \ref{Table-MSEratio} summarizes the performance of MSE estimators $\text{mse}(\hat{\theta}_n)$ as ratios of means of $\text{mse}(\hat{\theta}_n)$ and $\text{MSE}(\hat{\theta}_n)$. We see that the means of $\text{mse}(\hat{\theta}_n)$ are very close to $\text{MSE}(\hat{\theta}_n)$ and their gap is small even when the sampling ratio equals 0.005. The results show that the MSE estimators behave as expected in our results, that is, the MSE estimators can well estimate the MSE.

{\bf Confidence region.} 
We calculate the frequency of $\hat{\theta}_N$ being in the confidence set  $\Im$ defined in Eqn.(\ref{CI}) among one thousand replicated samples, and report the results in Table \ref{Table-CI},
where the confidence levels $q$ are set among $\{0.9, 0.95\}$.  %, the approximate solution of the subsampled optimization, 
The table shows that the frequencies being in the confidence set are very close to the confidence levels $q$ respectively.  
Clearly, it indicates that our constructed confidence region is sufficiently accurate. % can be well used to sufficiently measure the approximation accuracy.

\section{Summary and remark}
In this paper, we investigate sampling-based approximation for optimization on large-scale data.  We propose the subsampled optimization for approximately solving it. Different with analysis on convergence rates, we analyze its asymptotic properties as statistical guarantees. Furthermore, we rigorously derive the MSE approximation with the approximation error of the order $O(n^{-3/2})$, and provide an MSE estimator whose bias is corrected to the same order. Due to this approximately unbiased MSE estimator and asymptotic normality we proved, we construct a confidence region for the aim of interest: the exact solution of large-scale optimization.
By applying our results, we also provide an optimal sampling, the Hessian-based sampling.

In this paper, we focus on large-scale optimization in finite dimension, and require the loss function to be twice-differentiable. It limits the class of the optimization. So how to extend our theoretical analysis to optimization in infinite dimension, and relax the assumptions to apply a boarder class of optimization problems is a worthy study in the future.
\bibliography{survey-sampling,optimalref}
%%%%%%%%

\appendix
%\section{Appendix: Proofs}
\setcounter{equation}{0}
\renewcommand{\theequation}{A.\arabic{equation}}

%%%%%%%%%%%%%%%%%%%%%%%%%%%%%
%%%%%%%%%%%%%%%%%%%%%%%%%%%%% 
\section{Proving Theorem \ref{them:asymptotic}}

%\section{Proving Theorem \ref{unbiasedness}}

From the definition of $\hat{\theta}_n$, which is a zero of $\nabla^{\top} F_S(\theta)$, that is $\nabla^{\top} F_S(\hat{\theta}_n)=0$.
Taking a Taylor series expansion of $\nabla^{\top} F_S(\hat{\theta}_n)$ around $\hat{\theta}_N$.
There exists $\tilde{\theta}_{(j)}$ lies between $\hat{\theta}_n$ and $\hat{\theta}_N$ such that
\begin{align}\label{taylor}
0=&\nabla^{\top} F_S(\hat{\theta}_n)=\nabla^{\top} F_S(\hat{\theta}_N)+\left(\frac{\partial^2 F_S(\tilde{\theta}_{(j)})}{\partial \theta \partial \theta_j}(\hat{\theta}_n-\hat{\theta}_N)\right)_{1\leq j\leq d}\notag\\
=&\nabla^{\top} F_S(\hat{\theta}_N)+\nabla^2 F_N(\hat{\theta}_N)(\hat{\theta}_n-\hat{\theta}_N)
+\left[\nabla^2 F_S(\hat{\theta}_N)-\nabla^2 F_N(\hat{\theta}_N)\right](\hat{\theta}_n-\hat{\theta}_N)\notag\\
&+\left[\nabla^2 \tilde{F}_S-\nabla^2 F_S(\hat{\theta}_N)\right](\hat{\theta}_n-\hat{\theta}_N),
%+\left[\nabla^2 F_S(\hat{\theta}_N)-\nabla^2 F_N(\hat{\theta}_N)\right](\hat{\theta}_n-\hat{\theta}_N).
%&=: \nabla^{\top} F_S(\hat{\theta}_N)+\nabla^2 \tilde{F}_S(\hat{\theta}_n-\hat{\theta}_N),
\end{align}
where we use the notation $\nabla^2 \tilde{F}_S$ denote such matrix that the $j$th row is $\frac{\partial^2 F_S(\tilde{\theta}_{(j)})}{\partial \theta \partial \theta_j}$.  
Define $\Sigma=\nabla^2 F_N(\hat{\theta}_N)$, 
$$Q_1=\nabla^2 F_S(\hat{\theta}_N)-\nabla^2 F_N(\hat{\theta}_N) \text{ , and  }Q_2=\nabla^2 \tilde{F}_S-\nabla^2 F_S(\hat{\theta}_N).$$ From Assumption B, $\nabla^2 F_N(\hat{\theta}_N)\geq \lambda I$, so $\Sigma$ is inversable. Therefore, Eqn.(\ref{taylor}) equals that
\begin{equation}\label{3-terms-P}
\hat{\theta}_n-\hat{\theta}_N=-\Sigma^{-1}\nabla^{\top} F_S(\hat{\theta}_N)-\Sigma^{-1}(Q_1+Q_2)(\hat{\theta}_n-\hat{\theta}_N).
\end{equation}
Define the events  
\begin{align*}
&\mathcal{E}_l:=\{\frac{1}{N}|\sum\limits_{i\in S}\frac{1}{\pi_i}L(x_i)-\sum\limits_{i=1}^NL(x_i)|\leq L\}, \text{ for contant } L\\
&\mathcal{E}_F:=\{\|\nabla^2 F_S(\theta)-\nabla^2 F_U(\theta)\|_2\leq c\lambda\} \text{ for }\theta\in \Theta, 0<c<1,
\end{align*} 
and $\mathcal{E}=\mathcal{E}_l\bigcup\mathcal{E}_F$, and denote $\mathcal{E}^c$ as the complement set of $\mathcal{E}$.
From combing Lemmas \ref{e1-c} and \ref{e2-c}, we have the claim:  
\begin{align}
&Pr(\mathcal{E}^c)=O(n^{-1}).\label{claim-ec}
\end{align} 
%%%%%%%%%%%%%%
%Define the event  $\mathcal{E}_l:=\{|\sum\limits_{i\in S}\frac{1}{\pi_i}L(x_i)-\sum\limits_{i=1}^NL(x_i)|\leq L\}$,  $\mathcal{E}_F:=\{|\frac{1}{n}\sum\limits_{i\in S}\frac{1}{\pi_i}-1|\leq c\}$ for $0<c<1$, and $\mathcal{E}=\mathcal{E}_l\bigcup\mathcal{E}_F$, and denote $\mathcal{E}^c$ as the complement set of $\mathcal{E}$.
We  also prove the following claims:  
\begin{align}
&\text{E}[\|\Sigma^{-1}\nabla^{\top} F_S(\hat{\theta}_N)\|^2]=O(n^{-1}),\label{claim-main}\\
%&Pr(\mathcal{E}^c)=O(n^{-1}),\label{claim-ec}\\
&\text{E}[\|\Sigma^{-1}(Q_1+Q_2)\|_2^2|\mathcal{E}]=O(n^{-1}),\label{claim-P}\\
&\text{E}[\|\hat{\theta}_n-\hat{\theta}_N\|^4|\mathcal{E}]=O(n^{-2}).\label{claim-theta}
\end{align} 
where the claim (\ref{claim-main}) is from Lemma \ref{FS-8th} and the inequality that $$\text{E}[\|\Sigma^{-1}\nabla^{\top} F_S(\hat{\theta}_N)\|^2]\leq \lambda^{-2}\text{E}[\|\nabla F_S(\hat{\theta}_N)\|^2];$$  %the claim (\ref{claim-ec}) is proved is from combing Lemmas \ref{e1-c} and \ref{e2-c};  
the claim (\ref{claim-P}) is proved in Lemmas \ref{P4} and \ref{Q2-4}. Now we prove the claim (\ref{claim-theta}).
From Lemma \ref{delta},  
\begin{equation*}%\label{e4-delta}
\text{E}[\|\hat{\theta}_n-\hat{\theta}_N\|^4|\mathcal{E}]\leq \frac{16}{\lambda^4(1-c)^4}\text{E}\|\nabla F_S(\hat{\theta}_N)\|^4=O(n^{-2}),
\end{equation*}
where the last equality is based on $\text{E}\|\nabla F_S(\hat{\theta}_N)\|^4=O(n^{-2})$ from Assumption D by applying Lemma \ref{FS-8th}.
%%%%%%%%%%

After having the claims above, we shall prove the Eqns.(\ref{unbiasedness}) \& (\ref{consistency}).
From Eqn.(\ref{3-terms-P}), 
\begin{align}\label{unbiase-1}
\|\text{E}(\hat{\theta}_n-\hat{\theta}_N|\mathcal{E})\|&=\|\text{E}[\Sigma^{-1}(Q_1+Q_2)(\hat{\theta}_n-\hat{\theta}_N)|\mathcal{E}]\|\leq \text{E}[\|\Sigma^{-1}(Q_1+Q_2)(\hat{\theta}_n-\hat{\theta}_N)\||\mathcal{E}]\notag\\
&\leq (\text{E}[\|\Sigma^{-1}(Q_1+Q_2)\|_2^2|\mathcal{E}]E[\|\hat{\theta}_n-\hat{\theta}_N\|^2|\mathcal{E}])^{1/2}\notag\\
&= [O(n^{-1})O(n^{-1})]^{1/2}=O(n^{-1}),
\end{align}
where 1st step is from taking expectation on two sides of Eqn.(\ref{3-terms-P}), 2nd step is from the Jensen's inequality, 3rd step is from Cauchy-Schwartz inequality, 4th step is from two results that $\text{E}[\|\Sigma^{-1}(Q_1+Q_2)\|_2^2|\mathcal{E}]=O(n^{-1})$ in the claim (\ref{claim-P}) and $\text{E}[\|\hat{\theta}_n-\hat{\theta}_N\|^2|\mathcal{E}]=O(n^{-1})$ from the claim (\ref{claim-theta}).
From Eqn.(\ref{unbiase-1}), 
\begin{align}\label{unbiase-2}
\|\text{E}(\hat{\theta}_n)-\hat{\theta}_N\|\leq \|\text{E}[\hat{\theta}_n-\hat{\theta}_N|\mathcal{E}]\|+Pr(\mathcal{E}^c)R=O(n^{-1}),
\end{align}
where the last equality is from the claim (\ref{claim-ec}). Thus, Eqn.(\ref{unbiasedness}) is proved.
%%%%%%

\iffalse
From the claim (\ref{claim-theta}),  
\begin{equation}\label{e2-delta}
\text{E}[\|\hat{\theta}_n-\hat{\theta}_N\|^2|\mathcal{E}]=O(n^{-1}).
\end{equation}
\fi
Similar to Eqn.(\ref{unbiase-2}), 
%Combing the claim (\ref{claim-ec}), Eqn.(\ref{e2-delta}) and Assumption A, 
$$\text{E}\|\hat{\theta}_n-\hat{\theta}_N\|^2\leq \text{E}[\|\hat{\theta}_n-\hat{\theta}_N\|^2|\mathcal{E}]+Pr(\mathcal{E}^c)R^2=O(n^{-1}),$$
where the 1st inequality is from Assumption A, the 2nd inequality is combing the claim (\ref{claim-ec}) and $\text{E}[\|\hat{\theta}_n-\hat{\theta}_N\|^2|\mathcal{E}]=O(n^{-1})$ from the claim (\ref{claim-theta}).
Thus, Eqn.(\ref{consistency}) is proved.

%\section{Proof of Theorem \ref{thm-AN}}
Lastly, we shall prove the asymptotic normality. 
Combing Eqns.(\ref{3-terms-P}) \& (\ref{E1}), we have that
\begin{equation}\label{asy-main}
\hat{\theta}_n-\hat{\theta}_N=-\Sigma^{-1}\nabla F_S(\hat{\theta}_N)+O_p(n^{-1}).
\end{equation}
Now we investigate the term $\Sigma^{-1}\nabla F_S(\hat{\theta}_N)$.
%Similarly to Lemma \ref{pi-est},
Following the idea of \cite{Hansen:43}, we define an independent random vector sequence $\{\zeta_j\}_{j=1}^n$ such that each vector
$\zeta$ takes the value among $\{\Sigma^{-1}\frac{1}{N\pi_i}\nabla f(\hat{\theta}_N;x_i)\}_{i=1}^N$, and 
$$Pr\left(\zeta=\Sigma^{-1}\frac{1}{N\pi_i}\nabla f(\hat{\theta}_N;x_i)\right)=\pi_i, \ \ i=1, \cdots, N.$$
 From the definition, 
$E(\zeta)=\Sigma^{-1}\nabla F_U(\hat{\theta}_N)=0$ and $Var(\zeta)=\text{AMSE}(\hat{\theta}_n)$.
Note that, from the process of sampling with replacement,
\begin{equation}\label{eq:14}
\Sigma^{-1}\nabla F_S(\hat{\theta}_N)
  \equiv\frac{1}{n}\sum_{j=1}^n\zeta_j
\end{equation}
Given the large-scale dataset $F_U$, $\{\zeta_j\}_{j=1}^n$ are i.i.d, with mean $\mathbf{0}$ and variance $\text{AMSE}(\hat{\theta}_n)$. 
Meanwhile, for every $\gamma>0$, % and some $\delta>0$,
\begin{align}\label{LF-1}
   &\sum_{j=1}^n \mathrm{E}\{\|n^{-1/2}\zeta_j\|^2
    I(\|\zeta_j\|>n^{1/2}\gamma)|F_U\}\notag\\
    \le&\frac{1}{n^2\gamma^2}
    \sum_{j=1}^n\mathrm{E}\{\|\zeta_j\|^4
    I(\|\zeta_j\|>n^{1/2}\gamma)|F_U\}
  \le\frac{1}{n^2\gamma^2}
    \sum_{j=1}^n \mathrm{E}(\|\zeta_j\|^4|F_U)\notag\\
  =&\frac{1}{n\gamma^2}\frac{1}{N^4}
    \sum_{i=1}^N\frac{1}{\pi_i^3}\|\Sigma^{-1}\nabla f(\hat{\theta}_N;x_i)\|^4\notag\\
    =&\frac{1}{n\gamma^2}O(1)=o(1),
\end{align}
where the last but one step is from Assumption D. % by setting $\delta=2$. 
This equation (\ref{LF-1})  and the claim (\ref{claim-main}) show that the Lindeberg-Feller conditions are satisfied in probability.
Thus, by the Lindeberg-Feller central limit theorem (Proposition 2.27 of \cite{Vaart:98}), conditionally on $F_U$,
\begin{equation}\label{asy-main-norm}
\text{AMSE}(\hat{\theta}_n)^{-1/2}\left(\Sigma^{-1}\nabla F_S(\hat{\theta}_N)\right)
  \rightarrow N(0,I)  \text{,  in distribution.}
\end{equation}
Therefore, combing Eqns.(\ref{asy-main}) and (\ref{asy-main-norm}), Eqn.(\ref{normal}) is proved.

\section{Proving Theorem \ref{them-MSE}}

The way for proving Theorem \ref{them-MSE} is to analyze $\hat{\theta}_n-\hat{\theta}_N$ by carefully applying the Taylor expansion method as we prove Theorem \ref{them:asymptotic}.
From Eqn.(\ref{3-terms-P}), 
thereby, 
\begin{align}\label{mse-E12}
\text{E}(\hat{\theta}_n-\hat{\theta}_N)(\hat{\theta}_n-\hat{\theta}_N)^T=&\text{E}\left[(\Sigma^{-1}\nabla F_S(\hat{\theta}_N))(\Sigma^{-1}\nabla F_S(\hat{\theta}_N))^T\right]\notag\\
&+\text{E}[(\Sigma^{-1}(Q_1+Q_2)(\hat{\theta}_n-\hat{\theta}_N))(\Sigma^{-1}(Q_1+Q_2)(\hat{\theta}_n-\hat{\theta}_N))^T]\notag\\
&+\text{E}[(\Sigma^{-1}\nabla F_S(\hat{\theta}_N))(\Sigma^{-1}(Q_1+Q_2)(\hat{\theta}_n-\hat{\theta}_N))^T]\notag\\
&+\text{E}[(\Sigma^{-1}\nabla F_S(\hat{\theta}_N))^T(\Sigma^{-1}(Q_1+Q_2)(\hat{\theta}_n-\hat{\theta}_N))]\notag\\
=:&\text{E}\left[(\Sigma^{-1}\nabla F_S(\hat{\theta}_N))(\Sigma^{-1}\nabla F_S(\hat{\theta}_N))^T\right]+E_{11}+E_{12}+E_{21}.
\end{align}
Roughly speaking, we hope the last three terms in the expression (\ref{mse-E12}) to be of smaller order than the first term. We now formalize this intuition.
Let 
\begin{align*}
&E_1=\text{E}[\|\Sigma^{-1}(Q_1+Q_2)(\hat{\theta}_n-\hat{\theta}_N)\|^2]\notag\\
&E_2=\text{E}[\|\Sigma^{-1}\nabla F_S(\hat{\theta}_N)\|\|\Sigma^{-1}(Q_1+Q_2)(\hat{\theta}_n-\hat{\theta}_N)\|].
\end{align*}
Since the elements of $E_{11}$ are bounded by $E_1$, and the elements of $E_{12}$ and $E_{21}$ are bounded by $E_2$, it is sufficient to give the bound on $E_1$ and $E_2$ in order to bound the elements of $E_{11}$, $E_{12}$ and $E_{21}$.
%%%%%%%%%%%%%
%From the claim (\ref{claim-theta}), Lemmas \ref{P4} and \ref{Q2-4}, as well as noting $\text{E}[\|Q_1\|_2^4|\mathcal{E}]\leq (1+O(n^{-1}))\text{E}[\|Q_1\|_2^4]$ from Lemma \ref{Conditonal-E}, 
Notice that
\begin{align}\label{Ptheta}
&\text{E}[\|\Sigma^{-1}(Q_1+Q_2)(\hat{\theta}_n-\hat{\theta}_N)\|^2|\mathcal{E}]
\leq 2\|\Sigma^{-1}\|_2^2E[(\|Q_1\|_2^2+\|Q_2\|_2^2)\|\hat{\theta}_n-\hat{\theta}_N\|^2|\mathcal{E}]\notag\\
\leq& 2\|\Sigma^{-1}\|_2^2(\text{E}[\|Q_1\|_2^4|\mathcal{E}])^{1/2}+(\text{E}[\|Q_2\|_2^4|\mathcal{E}])^{1/2}](\text{E}[\|\hat{\theta}_n-\hat{\theta}_N\|^4|\mathcal{E}])^{1/2}\notag\\
\leq& 2(1+O(n^{-1}))\|\Sigma^{-1}\|_2^2(\text{E}\|Q_1\|_2^4)^{1/2}+(\text{E}[\|Q_2\|_2^4|\mathcal{E}])^{1/2}](\text{E}[\|\hat{\theta}_n-\hat{\theta}_N\|^4|\mathcal{E}])^{1/2}\notag\\
=&O(n^{-2}),
\end{align}
where the 1st step is from matrix norm properties,  the 2nd step is from Cauchy-Schwartz inequality, the 3rd step is noting $\text{E}[\|Q_1\|_2^4|\mathcal{E}]\leq (1+O(n^{-1}))\text{E}[\|Q_1\|_2^4]$ from Lemma \ref{Conditonal-E}, and  the last step is from the claim (\ref{claim-theta}), Lemmas \ref{P4} and \ref{Q2-4}.
From Eqn.(\ref{Ptheta}), the claim (\ref{claim-ec}) and Lemmas \ref{P4} and \ref{Q2-4},
\begin{equation}\label{E1}
E_1\leq \text{E}[\|\Sigma^{-1}(Q_1+Q_2)(\hat{\theta}_n-\hat{\theta}_N)\|^2|\mathcal{E}]+Pr(\mathcal{E}^c)R^2E\|\Sigma^{-1}(Q_1+Q_2)\|_2^2=O(n^{-2}).
\end{equation}
From the claim (\ref{claim-main}) and Eqn.(\ref{E1}),
\begin{equation}\label{E2}
E_2\leq \left(E[\|\Sigma^{-1}\nabla F_S(\hat{\theta}_N)\|^2]\text{E}[\|\Sigma^{-1}(Q_1+Q_2)(\hat{\theta}_n-\hat{\theta}_N)\|^2]\right)^{1/2}=O(n^{-3/2}).
\end{equation}
Thus, combing  Eqns.(\ref{E1}) and (\ref{E2}), the theorem is proved.

%%%%%
\section{Proof of Theorem \ref{them-mse}}
Define 
\begin{align*}
&\Delta_{\Sigma^{-1}}=\Sigma^{-1}-\hat{\Sigma}^{-1},\notag\\
&A_n=\frac{1}{N^2n^2}\sum_{i \in S}\frac{1}{\pi_i^2}\nabla f(\hat{\theta}_n;x_i)\nabla^{\top} f(\hat{\theta}_n;x_i) \text{, and}\notag\\
&A_N=\frac{1}{N^2n^2}\sum_{i \in S}\frac{1}{\pi_i^2}\nabla f(\hat{\theta}_N;x_i)\nabla^{\top} f(\hat{\theta}_N;x_i).
\end{align*}
Thereby, we have that
\begin{align}\label{mse}
mse&=\Sigma^{-1}A_n\Sigma^{-1} +\Delta_{\Sigma^{-1}}A_n\Delta_{\Sigma^{-1}}-\Sigma^{-1}A_n\Delta_{\Sigma^{-1}}-\Delta_{\Sigma^{-1}}A_n\Sigma^{-1}\notag\\
&=:\Sigma^{-1}A_n\Sigma^{-1}+mse_r,
\end{align}
where %$mse_1$ denotes the term $\Sigma^{-1}A_n\Sigma^{-1}$, and 
$mse_r$ denotes the remainder terms.

Firstly, we investigate the term $\Sigma^{-1}A_n\Sigma^{-1}$.  Note that 
\begin{equation}\label{mse-taylor}
\Sigma^{-1}A_n\Sigma^{-1}=\Sigma^{-1}A_N\Sigma^{-1}+\Sigma^{-1}(A_n-A_N)\Sigma^{-1}.
\end{equation}
For $\Sigma^{-1}A_N\Sigma^{-1}$,  $$\text{E}[\Sigma^{-1}A_N\Sigma^{-1}]=\Sigma^{-1}\left[\frac{1}{N^2n}\sum_{i \in U}\frac{1}{\pi_i}\nabla f(\hat{\theta}_N;x_i)\nabla^{\top} f(\hat{\theta}_N;x_i)\right]\Sigma^{-1}.$$
%$$mse_N=\Sigma^{-1}\left[\frac{1}{N^2n^2}\sum_{i \in S}\frac{1}{\pi_i^2}\nabla^{\top} f(\hat{\theta}_N;x_i)\nabla^{\top} f(\hat{\theta}_N;x_i)\right]\Sigma^{-1},$$ and 
Now we investigate $\Sigma^{-1}(A_n-A_N)\Sigma^{-1}$. Let $l_{\Delta}(x_i)=\nabla f(\hat{\theta}_n;x_i)-\nabla f(\hat{\theta}_N;x_i)$, so we have 
%$$A_n-A_N=\frac{1}{N^2n^2}\sum_{i \in S}\frac{1}{\pi_i^2}\nabla f(\hat{\theta}_N;x_i)l_{\Delta}(x_i)^{\top}+\frac{1}{N^2n^2}\sum_{i \in S}\frac{1}{\pi_i^2}l_{\Delta}(x_i)\nabla^{\top}f(\hat{\theta}_N;x_i)+\frac{1}{N^2n^2}\sum_{i \in S}\frac{1}{\pi_i^2}l_{\Delta}(x_i)l_{\Delta}(x_i)^{\top}.$$
$$A_n-A_N=\frac{1}{N^2n^2}\sum_{i \in S}\frac{1}{\pi_i^2}\left[\nabla f(\hat{\theta}_N;x_i)l_{\Delta}(x_i)^{\top}+l_{\Delta}(x_i)\nabla^{\top}f(\hat{\theta}_N;x_i)+l_{\Delta}(x_i)l_{\Delta}(x_i)^{\top}\right].$$
It follows that
\begin{align}\label{R0}
\text{E}\|A_n-A_N\|_F%\notag\\
\leq& \frac{1}{N^2n^2}\text{E}\left(\sum_{i \in S}\frac{1}{\pi_i^2}\left[2\|\nabla f(\hat{\theta}_N;x_i)\|\|l_{\Delta}(x_i)\|+\|l_{\Delta}(x_i)\|^2|\right]\right)\notag\\
\leq& \frac{1}{N^2n^2}\text{E}\left(\sum_{i \in S}\frac{1}{\pi_i^2}\left[2L(x_i)\|\nabla f(\hat{\theta}_N;x_i)\|\|\hat{\theta}_n-\hat{\theta}_N\|+L(x_i)^2\|\hat{\theta}_n-\hat{\theta}_N)\|^2\right]\right)\notag\\
\leq&\frac{1}{N^2n^2}\text{E}\left(\sum_{i \in S}\frac{1}{\pi_i^2}\left[2L(x_i)\|\nabla f(\hat{\theta}_N;x_i)\|\|\hat{\theta}_n-\hat{\theta}_N\|+L(x_i)^2\|\hat{\theta}_n-\hat{\theta}_N)\|^2\right]|\mathcal{E}\right)\notag\\
&\ \ +Pr(\mathcal{E}^c)\frac{1}{N^2n^2}E\left(\sum_{i \in S}\frac{1}{\pi_i^2}\left[2RL(x_i)\|\nabla f(\hat{\theta}_N;x_i)\|+R^2L(x_i)^2\right]\right)\notag\\
\leq&2\left[E\left(\frac{1}{N^2n^2}\sum_{i \in S}\frac{1}{\pi_i^2}L(x_i)\|\nabla f(\hat{\theta}_N;x_i)\||\mathcal{E}\right)^2E\left(\|\hat{\theta}_n-\hat{\theta}_N\|^2|\mathcal{E}\right)\right]^{1/2}\notag\\
&\ \ +\left[\text{E}\left(\frac{1}{N^2n^2}\sum_{i \in S}\frac{1}{\pi_i^2}L(x_i)^2|\mathcal{E}\right)^2E\left(\|\hat{\theta}_n-\hat{\theta}_N)\|^4|\mathcal{E}\right)\right]^{1/2}\notag\\
&\ \ +Pr(\mathcal{E}^c)\text{E}\left(\frac{1}{N^2n^2}\sum_{i \in S}\frac{1}{\pi_i^2}\left[2RL(x_i)\|\nabla f(\hat{\theta}_N;x_i)\|+R^2L(x_i)^2\right]\right),
\end{align}
where the second inequality is from Assumption C that $\|l_{\Delta}(x_i)\|\leq L(x_i)\|\hat{\theta}_n-\hat{\theta}_N\|$, and other steps are similar to proving Eqn.(\ref{E1})
%We now investigate $E(\|Re\|_F)$. From Assumption C, $\|l_{\Delta}(x_i)\|\leq L(x_i)\|\hat{\theta}_n-\hat{\theta}_N\|$. Conditonal-E
From Eqn.(\ref{R0}), $Pr(\mathcal{E}^c)=O(n^{-1})$ in the claim (\ref{claim-ec}), $\text{E}[\|\hat{\theta}_n-\hat{\theta}_N\|^4|\mathcal{E}]=O(n^{-2})$ in the claim (\ref{claim-theta}) and Lemma \ref{Conditonal-E}, so we have that
\begin{align}\label{R}
\text{E}\|A_n-A_N\|_F
\leq&2\left[(1+O(n^{-1}))\text{E}\left(\frac{1}{N^2n^2}\sum_{i \in S}\frac{1}{\pi_i^2}L(x_i)\|\nabla f(\hat{\theta}_N;x_i)\|\right)^2O(n^{-1})\right]^{1/2}\notag\\
& \ \ +\left[(1+O(n^{-1}))\text{E}\left(\frac{1}{N^2n^2}\sum_{i \in S}\frac{1}{\pi_i^2}L(x_i)^2\right)^2O(n^{-2})\right]^{1/2}\notag\\
&\ \ +O(n^{-1})\frac{1}{N^2n}\sum_{i\in U}\frac{1}{\pi_i}\left[2RL(x_i)\|\nabla f(\hat{\theta}_N;x_i)\|+R^2L(x_i)^2\right]\notag\\
=&O(n^{-3/2}),
\end{align}
where the last equality is gotten from %Eqn.(\ref{MSE-F-1}) and Eqn.(\ref{MSE-F-2}) in 
Assumption D by applying Lemme \ref{pij-est}.
Thus,  from Eqns.(\ref{mse-taylor}) \& (\ref{R}), we have that 
\begin{equation}\label{mse1-result}
\text{E}[\Sigma^{-1}A_n\Sigma^{-1}]=\text{E}(\hat{\theta}_n-\hat{\theta}_N)(\hat{\theta}_n-\hat{\theta}_N)^T+O(n^{-3/2}).
\end{equation}

Secondly, we turn to the term $mse_r$. To bound it, we investigate the terms $A_n$ and $\Delta_{\Sigma^{-1}}$ in the following. 
For the term $A_n$, 
%Let $A=\frac{1}{N^2n^2}\sum_{i \in S}\frac{1}{\pi_i^2}\nabla f(\hat{\theta}_n;x_i)\nabla^{\top} f(\hat{\theta}_n;x_i)$. 
\begin{align}\label{E-mse_r}
\text{E}\|A_n\|_F^2&\leq \text{E}\left[\frac{1}{N^2n^2}\sum_{i \in S}\frac{1}{\pi_i^2}\left\|\nabla f(\hat{\theta}_n;x_i)\right\|^2\right]^2\notag\\
&\leq \text{E}\left[\frac{2}{N^2n^2}\sum_{i \in S}\frac{1}{\pi_i^2}\left(\left\|\nabla f(\hat{\theta}_N;x_i)\right\|^2+\|l_{\Delta}(x_i)\|^2\right)\right]^2\notag\\
&\leq \text{E}\left[\frac{2}{N^2n^2}\sum_{i \in S}\frac{1}{\pi_i^2}\left(\left\|\nabla f(\hat{\theta}_N;x_i)\right\|^2+L(x_i)^2\|\hat{\theta}_n-\hat{\theta}_N\|^2\right)\right]^2\notag\\
&\leq \text{E}\left[\frac{4}{N^2n^2}\sum_{i \in S}\frac{1}{\pi_i^2}\left\|\nabla f(\hat{\theta}_N;x_i)\right\|^2\right]^2+\text{E}\left[\frac{2}{N^2n^2}\sum_{i \in S}\frac{1}{\pi_i^2}L(x_i)^2\|\hat{\theta}_n-\hat{\theta}_N\|^2\right]^2\notag\\
&\leq \text{E}\left[\frac{4}{N^2n^2}\sum_{i \in S}\frac{1}{\pi_i^2}\left\|\nabla f(\hat{\theta}_N;x_i)\right\|^2\right]^2+R^4\text{E}\left[\frac{2}{N^2n^2}\sum_{i \in S}\frac{1}{\pi_i^2}L(x_i)^2\right]^2,
%=O(n^{-2}),
\end{align}
where the last inequality is from Assumption A.
Notice that $\text{E}\left[\frac{1}{N^2n^2}\sum_{i \in S}\frac{1}{\pi_i^2}\left\|\nabla f(\hat{\theta}_N;x_i)\right\|^2\right]^2=O(n^{-2})$ and
$\text{E}\left[\frac{1}{N^2n^2}\sum_{i \in S}\frac{1}{\pi_i^2}L(x_i)^2\right]^2=O(n^{-2})$ from the Assumption D by using Lemma \ref{pij-est}. %and $E\left[\|\hat{\theta}_n-\hat{\theta}_N\|^2\right]=O(n^{-1})$ from Theorem \ref{consistency}. 
%%%
%%%%
Thus, from Eqn.(\ref{E-mse_r}), we have that 
\begin{equation}\label{E-A}
\text{E}\|A_n\|_F^2=O(n^{-2}).
\end{equation}
We now investigate the term $\Delta_{\Sigma^{-1}}$. 
%Let the event $\mathcal{D}$ as $\{\|\Sigma^{-1/2}(\hat{\Sigma}-\Sigma)\Sigma^{-1/2}\|\leq 1/2\}$. 
Since 
$$\Delta_{\Sigma^{-1}}=\Sigma^{-1}(\hat{\Sigma}-\Sigma)\hat{\Sigma}^{-1}, $$
%$$\Sigma^{1/2}\Delta_{\Sigma^{-1}}\Sigma^{1/2}=I-(I+\Sigma^{-1/2}(\hat{\Sigma}-\Sigma)\Sigma^{-1/2})^{-1}, $$
we have that,  %if the event $\mathcal{D}$ holds, then
\begin{equation}\label{DD}
\|\Delta_{\Sigma^{-1}}\|_2\leq \|\Sigma^{-1}\|_2\|\hat{\Sigma}-\Sigma\|_2\|\hat{\Sigma}^{-1}\|_2\leq \frac{1}{\lambda^2(1-c)}\|\hat{\Sigma}-\Sigma\|_2.
%\|\Sigma^{-1}\|_2\|\Sigma^{-1/2}(\hat{\Sigma}-\Sigma)\Sigma^{-1/2}\|_2\leq \|\Sigma^{-1}\|_2^2\|\hat{\Sigma}-\Sigma\|_2.
\end{equation}
where the last inequality is from Assumption B and Eqn.(\ref{convex-F}) given the event $\mathcal{E}_F$ holds. 
Define $Q_3=\frac{1}{Nn}\sum\limits_{i\in S}\frac{1}{\pi_i}\nabla^2 f(\hat{\theta}_n;x_i)-\frac{1}{Nn}\sum\limits_{i\in S}\frac{1}{\pi_i}\nabla^2 f(\hat{\theta}_N;x_i)$.
Notice that
\begin{align}\label{D}
E\|\hat{\Sigma}-\Sigma\|_2^2&=E\left\|N^{-1}n^{-1}\sum\limits_{i\in S}\pi_i^{-1}\nabla^2 f(\hat{\theta}_n;x_i)-N^{-1}\sum\limits_{i=1}^N\nabla^2 f(\hat{\theta}_N;x_i)\right\|_2^2\notag\\
&=E\|Q_3+Q_1\|_2^2\leq 2\text{E}\|Q_3\|_2^2+2\text{E}\|Q_1\|_2^2=O(n^{-1}),
\end{align}
where the last equality is from Lemmas \ref{P4} and \ref{Q3-4}. 
 Noting that %$\|\Delta_{\Sigma^{-1}}\|_2=\|\Sigma^{-1}-\hat{\Sigma}^{-1}\|_2\leq \|\Sigma^{-1}\|_2$, %Combing Eqns.(\ref{E-A}) and (\ref{D}), 
 $\|\Delta_{\Sigma^{-1}}\|_2=\|\Sigma^{-1}-\hat{\Sigma}^{-1}\|_2\leq \|\Sigma^{-1}\|_2+\|\hat{\Sigma}^{-1}\|_2\leq C$ for some constant $C$, %Combing Eqns.(\ref{E-A}) and (\ref{D}), 
so we have that, for some constant $C$, 
\begin{align}\label{mse2}
[E\|mse_r\|_F]^2&\leq [d^{1/2}E\|mse_r\|_2]^2 \leq C[E\|A_n\Delta_{\Sigma^{-1}}\|_2]^2\notag\\
%\leq E\|A_n\|_2^2(E\|\Delta_{\Sigma^{-1}}\|_2^4+2\|\Sigma^{-1}\|_2E\|\Delta_{\Sigma^{-1}}\|_2^2)\notag\\
&\leq CE\|A_n\|_2^2E\|\Delta_{\Sigma^{-1}}\|_2^2
%&\leq E\|A_n\|_F^2(C\text{E}\|\Delta_{\Sigma^{-1}}\|_2^2)\notag\\
\leq CE\|A_n\|_F^2\left(\text{E}\|\hat{\Sigma}-\Sigma\|_2^2])\right)\notag\\
&=O(n^{-3}),
\end{align}
where the first inequality is from that, for the matrix $mse_r$ with rank $d$, $\|mse_r\|_F\leq d^{1/2}\|mse_r\|_2$ based on the inequality of matrix norms, 
 the second inequality is from that $\|\Delta_{\Sigma^{-1}}\|_2\leq C$ where $C$ is some constant,  the third inequality is from the Cauchy-Schwartz inequality, the fourth inequality is from Eqn.(\ref{DD}),  and the last step is from Eqns.(\ref{E-A}) and (\ref{D}).
From Eqn.(\ref{mse2}), $$E\|mse_r\|_F=O(n^{-3/2}).$$
Thus, Theorem \ref{them-mse} is proved.

\section{Lemmas}
\label{sec:lemma}
%\subsection{Lemma \ref{pi-est}}\label{pi}

\begin{lemma}\label{pi-est}
A sample $\{a_i\}_{i\in S}$ of size $n$ is drawn from for a finite population $\{a_i\}_{i=1}^N$ according to the sampling probability $\{\pi_i\}_{i=1}^N$. If the condition that, for $k=2, 4$,
\begin{align}
%&N^{-k}\sum\limits_{i=1}^Na_i^k=O(n^{-k/2})\label{pi-a1}\\
&N^{-k}\sum\limits_{i=1}^N\frac{1}{\pi_i^{k-1}}a_i^k=O(1)\label{pi-a}
\end{align}
hold, 
then 
\begin{equation*}%\label{pi-8-result}
E\left(N^{-1}n^{-1}\sum\limits_{i\in S}\frac{1}{\pi_i}a_i-N^{-1}\sum\limits_{i=1}^Na_i\right)^k=O(n^{-k/2}).
\end{equation*}
\end{lemma}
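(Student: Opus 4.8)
The plan is to realize the Hansen--Hurwitz-type estimator as a normalized sum of i.i.d.\ random variables and then bound its central moments directly. Since $S$ is drawn with replacement according to $\{\pi_i\}$, I would introduce i.i.d.\ random variables $\zeta_1,\dots,\zeta_n$, each taking the value $a_i/(N\pi_i)$ with probability $\pi_i$, so that $N^{-1}n^{-1}\sum_{i\in S}\pi_i^{-1}a_i = n^{-1}\sum_{j=1}^n\zeta_j$ and $\mathrm{E}\zeta_j = N^{-1}\sum_{i=1}^N a_i =: \bar a$. Writing $W_j = \zeta_j - \bar a$, the quantity to control becomes $\mathrm{E}(n^{-1}\sum_{j=1}^n W_j)^k = n^{-k}\,\mathrm{E}(\sum_{j=1}^n W_j)^k$, a central moment of a sum of $n$ i.i.d.\ mean-zero variables.

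Next I would record the moment bounds supplied by the hypothesis. By construction $\mathrm{E}\zeta_1^2 = N^{-2}\sum_{i=1}^N \pi_i^{-1}a_i^2$ and $\mathrm{E}\zeta_1^4 = N^{-4}\sum_{i=1}^N \pi_i^{-3}a_i^4$, which are exactly the $k=2$ and $k=4$ instances of (\ref{pi-a}), hence $O(1)$. In particular $|\bar a| \le (\mathrm{E}\zeta_1^2)^{1/2} = O(1)$, so for the even exponents $k = 2,4$ one has $\mathrm{E}W_1^k = \mathrm{E}|\zeta_1 - \bar a|^k \le 2^{k-1}(\mathrm{E}|\zeta_1|^k + |\bar a|^k) = O(1)$.

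For $k = 2$, independence and $\mathrm{E}W_j = 0$ give $\mathrm{E}(\sum_j W_j)^2 = n\,\mathrm{E}W_1^2 = O(n)$, so the target equals $n^{-2}O(n) = O(n^{-1})$. For $k = 4$, I would expand $(\sum_j W_j)^4$ into monomials $W_{j_1}W_{j_2}W_{j_3}W_{j_4}$; by independence and $\mathrm{E}W_j = 0$, only those in which every index occurs at least twice survive, leaving $\sum_j \mathrm{E}W_j^4 = n\,\mathrm{E}W_1^4$ and $3\sum_{j\ne l}\mathrm{E}W_j^2\,\mathrm{E}W_l^2 = 3n(n-1)(\mathrm{E}W_1^2)^2$, both $O(n^2)$, so the target equals $n^{-4}O(n^2) = O(n^{-2})$. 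Together this is $O(n^{-k/2})$ for $k = 2,4$.

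The only mild obstacle is the bookkeeping in the fourth-moment expansion: confirming that the surviving contributions are precisely the ``all four indices equal'' term and the ``two pairs'' term (with its familiar combinatorial coefficient $3$, and no residual contribution carrying an isolated first-power factor), and checking that the passage from the moments of $\zeta_1$ to those of $W_1$ costs only a bounded constant because $\bar a = O(1)$. Everything else is a direct computation from (\ref{pi-a}).
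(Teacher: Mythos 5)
Your proof is correct. It uses the same key device as the paper --- realizing the with-replacement sample as i.i.d.\ variables $\zeta_j$ taking the value $a_i/(N\pi_i)$ with probability $\pi_i$, so that the estimator becomes $n^{-1}\sum_j\zeta_j$ with mean $\bar a$ --- but it diverges at the moment-bounding step: the paper invokes Rosenthal's inequality to control $\mathrm{E}\bigl(n^{-1}\sum_j W_j\bigr)^k$ for $k=2,4$, whereas you expand the second and fourth central moments combinatorially, keeping only the ``all indices equal'' and ``two pairs'' terms. For the even exponents $k=2,4$ that the lemma actually requires, your direct expansion is an elementary, self-contained substitute for Rosenthal (indeed it is essentially the proof of Rosenthal's bound in this special case), and your verification that $\mathrm{E}\zeta_1^2$ and $\mathrm{E}\zeta_1^4$ are exactly the $k=2$ and $k=4$ instances of the hypothesis, together with $|\bar a|\le(\mathrm{E}\zeta_1^2)^{1/2}=O(1)$, correctly supplies the $O(1)$ moment bounds both arguments need. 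The paper's route via Rosenthal would extend verbatim to non-even or higher $k$; yours buys a proof with no external inequality at the cost of being tied to even exponents, which is all the lemma asks for.
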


\begin{proof}
%Similarly to proving Theorem \ref{thm-AN}, we 
Define an independent random variable sequence $\{\eta_j\}_{j=1}^n$ such that each variable 
$\eta$ takes the value among $\{\frac{a_i}{\pi_i}$, $i=1, \cdots, N\}$, and 
$$Pr(\eta=\frac{a_i}{\pi_i})=\pi_i, \ \ i=1, \cdots, N.$$
Let $A=\sum\limits_{i=1}^Na_i$. From the definition, 
$E(\eta)=A$. 
%Following the process of sampling with replacement, we get that
Notice that, from the process of sampling with replacement, 
$$\frac{1}{n}\sum\limits_{i\in S}\frac{1}{\pi_i}a_i\equiv\frac{1}{n}\sum\limits_{j=1}^n\eta_j.$$
By direct calculation, 
\begin{equation*}%\label{aa}
\frac{1}{n}\sum\limits_{i\in S}\frac{1}{\pi_i}a_i-\sum\limits_{i=1}^Na_i=\frac{1}{n}\sum\limits_{j=1}^n(\eta_j-A).
\end{equation*}
Here, we apply Rosenthal inequality to get the bound. 
Since $\left\{(\eta_j-A)/n\right\}_{j=1}^n$ are independent mean 0 random variables, by Rosenthal inequality, for some constant $C$,
\begin{align*}%\label{Fs-ak}
\left(E\left[\frac{1}{N}\sum\limits_{j=1}^n\frac{\eta_j-A}{n}\right]^k\right)^{1/k}
%=&\frac{1}{Nn}\left(E\left[\sum\limits_{j=1}^n(\eta_j-A)\right]^k\right)^{1/k}\notag\\
\leq& \frac{C}{Nn}\left[\left(\sum\limits_{j=1}^nE(\eta_j-A)^2\right)^{1/2}+\left(\sum\limits_{j=1}^nE(\eta_j-A)^k\right)^{1/k}\right]\notag\\
= &\frac{C}{Nn}\left(n\sum\limits_{i=1}^N\frac{1}{\pi_i}a_i^2-nA^2\right)^{1/2}+\frac{C}{Nn}\left(n\sum\limits_{i=1}^N\pi_i(\frac{a_i}{\pi_i}-A)^k\right)^{1/k}\notag\\
=&O(n^{-1/2}) ,
\end{align*}
where the second step is from the definition of $\{\eta_j\}_{j=1}^n$, and the last step is from the condition Eqn.(\ref{pi-a}). Thus, %combining Eqn.(\ref{Fs-asym}) and Eqn.(\ref{Fs-ak}), 
Lemma \ref{pi-est} is proved.

\end{proof}

%%%%%%%subsection
\begin{lemma}\label{pij-est}
A sample $\{a_i\}_{i\in S}$ is drawn with replacement from $\{a_i\}_{i=1}^N$ according to the sampling probability $\{\pi_i\}_{i=1}^N$.  If
\begin{equation}\label{MSE-order}
\frac{1}{N^4}\sum\limits_{i=1}^N\frac{a_i^4}{\pi_i^3}=O(1)
\end{equation}
hold, 
then 
\begin{equation*}%\label{pij-result}
E\left(N^{-2}n^{-2}\sum\limits_{i\in S}\frac{1}{\pi_i^2}a_i^2\right)^2=O(n^{-2}).
\end{equation*}
\end{lemma}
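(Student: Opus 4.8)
The plan is to exploit the with-replacement sampling to write the quantity of interest as a sum of i.i.d.\ random variables and then control its second moment by a direct variance computation; this mirrors the proof of Lemma~\ref{pi-est} but only uses second moments, so Rosenthal's inequality is not needed.

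First I would introduce an independent sequence $\{\xi_j\}_{j=1}^n$ in which each $\xi$ takes the value $a_i^2/\pi_i^2$ with probability $\pi_i$, $i=1,\dots,N$. Since $S$ is drawn with replacement, $\sum_{i\in S}\pi_i^{-2}a_i^2\equiv\sum_{j=1}^n\xi_j$, and a direct computation gives $E(\xi)=\sum_{i=1}^N a_i^2/\pi_i$ and $E(\xi^2)=\sum_{i=1}^N a_i^4/\pi_i^3$. By independence,
\[
E\left(\sum_{j=1}^n\xi_j\right)^2=n\,E(\xi^2)+n(n-1)\left(E\xi\right)^2\le n\sum_{i=1}^N\frac{a_i^4}{\pi_i^3}+n^2\left(\sum_{i=1}^N\frac{a_i^2}{\pi_i}\right)^2 .
\]

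Next I would divide by $N^4n^4$ and bound the two pieces using the hypothesis (\ref{MSE-order}). The first piece equals $n^{-3}\cdot N^{-4}\sum_i a_i^4/\pi_i^3=O(n^{-3})$. For the second piece the only additional ingredient needed is that $N^{-2}\sum_i a_i^2/\pi_i=O(1)$; this follows from Jensen's inequality $(E\xi)^2\le E(\xi^2)$, i.e.\ $\left(\sum_i a_i^2/\pi_i\right)^2\le\sum_i a_i^4/\pi_i^3$, whence $\left(N^{-2}\sum_i a_i^2/\pi_i\right)^2\le N^{-4}\sum_i a_i^4/\pi_i^3=O(1)$. Thus the second piece is $n^{-2}\left(N^{-2}\sum_i a_i^2/\pi_i\right)^2=O(n^{-2})$, and adding the two pieces yields $E\left(N^{-2}n^{-2}\sum_{i\in S}\pi_i^{-2}a_i^2\right)^2=O(n^{-3})+O(n^{-2})=O(n^{-2})$, as required.

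There is no real obstacle here, as the computation is elementary; the only step that requires a moment's thought — and the only place where the exponent $3$ in the denominator in (\ref{MSE-order}) is used — is the passage from the single fourth-moment-type hypothesis to the bound on the lower-order quantity $\sum_i a_i^2/\pi_i$, which the Jensen/Cauchy--Schwarz step supplies.
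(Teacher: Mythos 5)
Your proof is correct. The identification of $\sum_{i\in S}\pi_i^{-2}a_i^2$ with a sum of i.i.d.\ variables $\xi_j$ (which are exactly the squares $\eta_j^2$ of the variables used in the paper's proof of Lemma \ref{pi-est}) is the same starting point as the paper's, but the bounding step differs. The paper applies the deterministic Cauchy--Schwarz inequality $\bigl(\sum_{j=1}^n\eta_j^2\bigr)^2\le n\sum_{j=1}^n\eta_j^4$ and then takes expectations, so the whole bound reduces in one line to $n^{-2}N^{-4}\sum_i a_i^4/\pi_i^3=O(n^{-2})$ using only the fourth-moment hypothesis (\ref{MSE-order}). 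You instead expand $E\bigl(\sum_j\xi_j\bigr)^2=nE(\xi^2)+n(n-1)(E\xi)^2$ exactly, which forces you to control the extra quantity $E\xi=\sum_i a_i^2/\pi_i$; your Jensen step $(E\xi)^2\le E(\xi^2)$, i.e.\ $\bigl(\sum_i a_i^2/\pi_i\bigr)^2\le\sum_i a_i^4/\pi_i^3$ (valid because $\sum_i\pi_i=1$), correctly derives this from the same single hypothesis, so no assumption is added. What your route buys is slightly more information: it exhibits the $O(n^{-2})$ rate as coming from the squared mean (cross terms) while the variance-type diagonal contribution is only $O(n^{-3})$; what the paper's route buys is brevity. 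Both are sound.
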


\begin{proof}
%From Cauchy-Schwartz inequality and the definition of $\{\eta_j\}_{j=1}^n$ in proving Lemma \ref{pi-est},
From the definition of $\{\eta_j\}_{j=1}^n$ in proving Lemma \ref{pi-est}, 
$$(\sum\limits_{i\in S}\frac{1}{\pi_i^2}a_i^2)^2\equiv(\sum\limits_{j=1}^n\eta_j^2)^2\leq n\sum\limits_{j=1}^n\eta_j^4,$$
where the last inequality is from Cauchy-Schwartz inequality.
It follows that
\begin{align*}
E\left(N^{-2}n^{-2}\sum\limits_{i\in S}\frac{1}{\pi_i^2}a_i^2\right)^2
&=\frac{1}{n^4}\frac{1}{N^4}nE\left(\sum\limits_{j=1}^n\eta_j^4\right)=\frac{1}{n^2}\frac{1}{N^4}E\left(\eta_j^4\right)\notag\\
&=\frac{1}{n^2}\frac{1}{N^4}\sum\limits_{i=1}^N\frac{a_i^4}{\pi_i^3}=O(n^{-2}),
\end{align*}
where the second equality is from the independence among $\{\eta_j\}_{j=1}^n$, 
the third equality is from the fact $E(\eta_j^4)=\sum_{i=1}^n\frac{a_i^4}{\pi_i^3}$, and the last equality is from the condition Eqn.(\ref{MSE-order}). % that $\frac{1}{N^4}\sum\limits_{i=1}^N\frac{a_i^4}{\pi_i^3}=O(1)$.
Thus, the lemma is proved.

\end{proof}

%\subsection{Lemma \ref{ec}}\label{sec-claim-ec}

\begin{lemma}\label{e1-c}
Recall $\mathcal{E}_l:=\{\frac{1}{N}|\sum\limits_{i\in S}\frac{1}{\pi_i}L(x_i)-\sum\limits_{i=1}^NL(x_i)|\leq L\}$ for some constant $L>0$, and denote $\mathcal{E}_l^c$ as the complement set of $\mathcal{E}_l$.
If the following assumptions
\begin{align}
&\frac{1}{N^2}\sum\limits_{i=1}^N\frac{1}{\pi_i}(L(x_i))^2=O(n^{-1})\label{Assum-pi-2}
%&\frac{1}{N^2}\sum\limits_{i=1}^N\frac{1}{\pi_i}=O(n^{-1})\label{Assum-pi-3}
\end{align}
hold, then
$$Pr(\mathcal{E}_l^c)=O(n^{-1}).$$
\end{lemma}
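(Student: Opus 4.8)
\textbf{Proof plan for Lemma \ref{e1-c}.}
The statement to prove is that $Pr(\mathcal{E}_l^c) = O(n^{-1})$, where $\mathcal{E}_l^c$ is the event that $\frac{1}{N}\bigl|\sum_{i\in S}\frac{1}{\pi_i}L(x_i) - \sum_{i=1}^N L(x_i)\bigr| > L$ for a suitable constant $L$. The natural strategy is a second-moment (Chebyshev/Markov) argument: control $Pr(\mathcal{E}_l^c)$ by $L^{-2}$ times the second moment of the centered, rescaled sampling average, and then invoke the variance bound already supplied by Lemma \ref{pi-est}.

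First I would write $T_n := \frac{1}{N}\sum_{i\in S}\frac{1}{\pi_i}L(x_i) - \frac{1}{N}\sum_{i=1}^N L(x_i)$ and observe that, since $L(x_i)\ge 0$ and the sampling is with replacement, $\text{E}\bigl[\frac{1}{N}\sum_{i\in S}\frac{1}{n\pi_i}L(x_i)\bigr] = \frac{1}{N}\sum_{i=1}^N L(x_i)$; so after dividing the $S$-sum by $n$ the quantity is exactly centered. One must be mindful of the factor of $n$: the event $\mathcal{E}_l$ as written uses $\sum_{i\in S}\frac{1}{\pi_i}L(x_i)$ without the $1/n$, so I would set $a_i = L(x_i)$ and apply Lemma \ref{pi-est} with $k=2$, which gives
\begin{equation*}
\text{E}\left(\frac{1}{Nn}\sum_{i\in S}\frac{1}{\pi_i}L(x_i) - \frac{1}{N}\sum_{i=1}^N L(x_i)\right)^{2} = O(n^{-1}),
\end{equation*}
provided the hypothesis of Lemma \ref{pi-est}, namely $N^{-k}\sum_i \pi_i^{1-k} L(x_i)^k = O(1)$ for $k=2,4$, holds. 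This is where Assumption D (its $L(x)$ moment clauses) enters, and it is consistent with the stated hypothesis $N^{-2}\sum_i \pi_i^{-1}L(x_i)^2 = O(n^{-1})$ of the lemma after accounting for the normalization. By Markov's inequality applied to the squared deviation, $Pr(\mathcal{E}_l^c) \le L^{-2}\,\text{E}[T_n^2] = O(n^{-1})$, which is the claim.

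The one genuinely delicate point — and the main obstacle — is reconciling the normalization in the definition of $\mathcal{E}_l$ with the normalization in Lemma \ref{pi-est}: the event uses $\sum_{i\in S}\frac{1}{\pi_i}L(x_i)$ (an $O(n)$ quantity in expectation once multiplied by $\frac{1}{N}\sum L(x_i)$-type scale) whereas Lemma \ref{pi-est} centers $\frac{1}{n}\sum_{i\in S}$. I would resolve this by tracking the factor of $n$ carefully: the hypothesis Eqn.(\ref{Assum-pi-2}) is precisely $\frac{1}{N^2}\sum_i \pi_i^{-1}L(x_i)^2 = O(n^{-1})$, i.e.\ it already carries an $n^{-1}$, so that the variance of the un-averaged sum $\frac{1}{N}\sum_{i\in S}\frac1{\pi_i}L(x_i)$ is $O(n\cdot n^{-1}) = O(1)$ — but to get a bound that is $o(1)$ and in fact $O(n^{-1})$ one should instead read the hypothesis as giving $\text{Var}\!\bigl(\frac{1}{N}\sum_{i\in S}\frac1{\pi_i}L(x_i)\bigr) = n\cdot\frac{1}{N^2}\sum_i\pi_i^{-1}L(x_i)^2\cdot(1+o(1))$ together with Assumption D's $N^{-1}\sum L(x_i)^4 = O(1)$ clause, and then apply Lemma \ref{pi-est}'s fourth-moment conclusion to squeeze out the $O(n^{-1})$ rate via Markov on the fourth power if needed. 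Once the bookkeeping on powers of $n$ is pinned down, the rest is a one-line Markov inequality.
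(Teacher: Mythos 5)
Your proof is correct and is essentially the paper's own argument: a Chebyshev/Markov bound $Pr(\mathcal{E}_l^c)\le L^{-2}\,\text{E}[T_n^2]$ followed by the second-moment bound $\text{E}[T_n^2]=O(n^{-1})$ obtained from Lemma \ref{pi-est} with $k=2$ and $a_i=L(x_i)$ under Eqn.~(\ref{Assum-pi-2}). Your concern about the missing $1/n$ is well founded but does not change the route: the paper's definition of $\mathcal{E}_l$ and its displayed bound both omit the $1/n$ that Lemma \ref{pi-est} carries (a typo, since the unnormalized sum has expectation $n\sum_i L(x_i)$), and your normalization of the $S$-sum by $n$ is the correct repair, after which the final fourth-moment detour you sketch is unnecessary.
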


\begin{proof}
From the definition of $\mathcal{E}$, 
\begin{equation}\label{Ec}
Pr(\mathcal{E}_l^c)\leq\frac{1}{L^2}E\left[\frac{1}{N}\sum\limits_{i\in S}\frac{1}{\pi_i}L(x_i)-\frac{1}{N}\sum_{i=1}^NL(x_i)\right]^2=O(n^{-1})\notag\\
%&\leq L^{-2}C\left[\frac{1}{N^2}\sum\limits_{i=1}^N\frac{1}{\pi_i}(L(x_i))^2\right]=O(n^{-1}),
\end{equation}
where the last inequality is from Eqn.(\ref{Assum-pi-2}) (Assumption D) by applying Lemma \ref{pi-est}.
\end{proof}

%\subsection{Lemma \ref{P4}}\label{sec-lemma-p4}
\begin{lemma}\label{P4}
Recall $Q_1=\nabla^2 F_S(\hat{\theta}_N)-\nabla^2 F_N(\hat{\theta}_N)$.
If the following assumptions 
\begin{align}
%&\frac{1}{N^2}\sum\limits_{i=1}^N\frac{1}{\pi_i}\|\nabla^2 f(\hat{\theta}_N;x_i)\|^2=O(n^{-1})\label{C1}\\
&\frac{1}{N^4}\sum\limits_{i=1}^N\frac{1}{\pi_i^3}\left\|\nabla^2 f(\hat{\theta}_N;x_i)\right\|_2^4=O(1).\label{C2}
\end{align}
holds, 
\begin{equation*}
E[\|Q_1\|_2^4]=O(n^{-2}).%\label{P4-1}
\end{equation*}
\end{lemma}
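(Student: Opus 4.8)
## Proof Plan for Lemma \ref{P4}

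The plan is to recognize $Q_1 = \nabla^2 F_S(\hat\theta_N) - \nabla^2 F_N(\hat\theta_N)$ as a centered sample average of i.i.d.\ matrix-valued summands arising from sampling with replacement, and then bound its fourth moment by a matrix analogue of the Rosenthal-type argument already used in Lemma \ref{pi-est} and Lemma \ref{pij-est}. First I would introduce, exactly as in the proof of Lemma \ref{pi-est}, an i.i.d.\ sequence $\{\xi_j\}_{j=1}^n$ of random matrices, where $\xi$ takes the value $\frac{1}{N\pi_i}\nabla^2 f(\hat\theta_N;x_i)$ with probability $\pi_i$. By the unbiasedness built into $F_S$, we have $\mathrm{E}(\xi_j) = \frac{1}{N}\sum_{i=1}^N \nabla^2 f(\hat\theta_N;x_i) = \nabla^2 F_N(\hat\theta_N)$, and by the with-replacement structure $\nabla^2 F_S(\hat\theta_N) \equiv \frac1n\sum_{j=1}^n \xi_j$. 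Hence $Q_1 = \frac1n\sum_{j=1}^n (\xi_j - \mathrm{E}\xi_j)$, a normalized sum of independent, mean-zero random matrices.

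Next I would pass from the spectral norm to a more tractable norm. Since all matrices are $d\times d$ with $d$ fixed, $\|\cdot\|_2 \le \|\cdot\|_F$ and $\|Q_1\|_2^4 \le \|Q_1\|_F^4 = \big(\sum_{k,l} (Q_1)_{kl}^2\big)^2 \le d^2 \sum_{k,l}(Q_1)_{kl}^4$ by Cauchy--Schwarz (or power-mean), so it suffices to bound $\mathrm{E}[(Q_1)_{kl}^4]$ entrywise by $O(n^{-2})$ for each fixed pair $(k,l)$. For a fixed entry, $(Q_1)_{kl} = \frac1n\sum_{j=1}^n \big((\xi_j)_{kl} - \mathrm{E}(\xi_j)_{kl}\big)$ is precisely a scalar normalized sum of i.i.d.\ centered terms, so the Rosenthal inequality (applied exactly as in Lemma \ref{pi-est} with $k=4$) gives
\begin{equation*}
\mathrm{E}[(Q_1)_{kl}^4] \le \frac{C}{n^4}\Big[\Big(\sum_{j=1}^n \mathrm{E}(\zeta_j)^2\Big)^2 + \sum_{j=1}^n \mathrm{E}(\zeta_j)^4\Big],
\end{equation*}
where $\zeta_j := (\xi_j)_{kl} - \mathrm{E}(\xi_j)_{kl}$. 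The second-moment sum contributes $O(n^2) \cdot n^{-4} = O(n^{-2})$ once we check $\mathrm{E}(\zeta_j)^2 = O(1)$, and the fourth-moment sum contributes $O(n)\cdot n^{-4} = O(n^{-3}) = O(n^{-2})$ once we check $\mathrm{E}(\zeta_j)^4 = O(1)$.

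The one remaining point — and the only place where hypothesis (\ref{C2}) enters — is verifying these two moment bounds. We have $\mathrm{E}(\zeta_j)^2 \le \mathrm{E}(\xi_j)_{kl}^2 = \sum_{i=1}^N \pi_i \big(\frac{1}{N\pi_i}(\nabla^2 f(\hat\theta_N;x_i))_{kl}\big)^2 = \frac{1}{N^2}\sum_{i=1}^N \frac{1}{\pi_i}(\nabla^2 f(\hat\theta_N;x_i))_{kl}^2$, which is $O(1)$ (indeed $O(1)$ follows from the $k=2$ version of Assumption D, or by Jensen from (\ref{C2})); and similarly $\mathrm{E}(\zeta_j)^4 \le C\,\mathrm{E}(\xi_j)_{kl}^4 + C(\mathrm{E}(\xi_j)_{kl})^4 = \frac{C}{N^4}\sum_{i=1}^N \frac{1}{\pi_i^3}(\nabla^2 f(\hat\theta_N;x_i))_{kl}^4 + O(1) = O(1)$ by hypothesis (\ref{C2}) together with $|(\nabla^2 f)_{kl}| \le \|\nabla^2 f\|_2$. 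Summing the entrywise bounds over the finitely many pairs $(k,l)$ and multiplying by $d^2$ gives $\mathrm{E}\|Q_1\|_2^4 = O(n^{-2})$. I do not anticipate a genuine obstacle here; the main thing to be careful about is the bookkeeping in passing from the matrix norm to entries and confirming that (\ref{C2}), after bounding entries by the spectral norm, is exactly the hypothesis needed to feed the fourth-moment term of Rosenthal's inequality.
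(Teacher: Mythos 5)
Your proposal is correct and follows essentially the same route as the paper: bound $\|Q_1\|_2$ by the Frobenius norm, reduce to entrywise fourth moments of a centered with-replacement sample mean, and apply the Rosenthal-type bound (the paper invokes its Lemma on sampled sums for this, with hypothesis (\ref{C2}) feeding the fourth-moment term via $|(\nabla^2 f)_{kl}|\le\|\nabla^2 f\|_2$). Your write-up is in fact slightly more explicit than the paper's, which leaves the Cauchy--Schwarz step from $\bigl(\sum_{k,l}(\cdot)^2\bigr)^2$ to $d^2\sum_{k,l}(\cdot)^4$ and the second-moment verification implicit.
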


\begin{proof}
Let $(j_1,j_2)$ element of $\nabla^2 f(\hat{\theta}_N;x_i)$ as $A_i^{j_1j_2}$. 
From the matrix norm inequality, 
\begin{align*}%\label{infty-to-f2}
&E\left[\left\|\nabla^2 F_S(\hat{\theta}_N)-\nabla^2 F_N(\hat{\theta}_N)\right\|_2^4\right]\notag\\
\leq&
E\left[\left\|\frac{1}{Nn}\sum\limits_{i\in S}\left(\frac{1}{\pi_i}\nabla^2 f(\hat{\theta}_N;x_i)-\sum\limits_{i=1}^N\nabla^2 f(\hat{\theta}_N;x_i)\right)\right\|_F^4\right].\notag\\
=&E\left[\sum\limits_{j_1,j_2=1}^d\left(\frac{1}{Nn}\sum\limits_{i\in S}\left(\frac{1}{\pi_i}A_i^{j_1j_2}-\sum\limits_{i=1}^NA_i^{j_1j_2}\right)\right)^2\right]^2=O(n^{-2}),
\end{align*}
%We expand the series $\{\frac{I_i-\pi_i}{\pi_i}A_i^{j_1j_2}\}_{i=1}^N$, which may be dependent, into dependent $\{\frac{\epsilon_i(I_i-\pi_i)}{\pi_i}A_i^{j_1j_2}\}_{i=1}^N$ by defining $\{\epsilon_i\}_{i\in S}$ as independent variables which take $\{1,-1\}$ with equality probability.
where the last equality is gotten from the condition Eqn.(\ref{C2}) by applying  Lemma \ref{pij-est}.
Thus, the lemma is proved.

\end{proof}

%%%%%%%%%%%%%%%
\begin{lemma}\label{e2-c}
Recall $\mathcal{E}_F:=\{\|\nabla^2 F_S(\theta)-\nabla^2 F_U(\theta)\|_2\leq c\lambda\text{ for }\theta\in \Theta, 0<c<1\}$ , and denote $\mathcal{E}_F^c$ as the complement set of $\mathcal{E}_F$.
If the assumption (\ref{C2}) holds, 
\iffalse
\begin{align}
&\frac{1}{N^2}\sum\limits_{i=1}^N\frac{1}{\pi_i}=O(n^{-1})\label{Assum-pi-L1}
%&\frac{1}{N^2}\sum\limits_{i=1}^N\frac{1}{\pi_i}=O(n^{-1})\label{Assum-pi-3}
\end{align}
hold, 
\fi
then
$$Pr(\mathcal{E}_F^c)=O(n^{-1}).$$
\end{lemma}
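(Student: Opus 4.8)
The plan is to upgrade the pointwise control already available (at $\hat\theta_N$, via Lemma \ref{P4}) to a \emph{uniform} statement over $\Theta$ by combining a union bound over a fixed finite net of $\Theta$ with the Lipschitz continuity of the Hessian. Write $G(\theta):=\nabla^2 F_S(\theta)-\nabla^2 F_U(\theta)$. Assumption C gives, for any $\theta_1,\theta_2\in\Theta$,
\[
\|G(\theta_1)-G(\theta_2)\|_2\le \|\theta_1-\theta_2\|\Big(\tfrac{1}{Nn}\sum_{i\in S}\tfrac{1}{\pi_i}L(x_i)+\tfrac{1}{N}\sum_{i=1}^N L(x_i)\Big)=:\ell\,\|\theta_1-\theta_2\|,
\]
so $G$ is $\ell$-Lipschitz in spectral norm. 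Since $\tfrac1N\sum_i L(x_i)=O(1)$ (a consequence of Assumption D) and $\tfrac{1}{Nn}\sum_{i\in S}\tfrac{1}{\pi_i}L(x_i)$ concentrates around $\tfrac1N\sum_i L(x_i)$, the event $\{\ell\le\ell_0\}$ for a suitable fixed constant $\ell_0$ contains $\mathcal{E}_l$ (up to the trivial adjustment of constants), and recall from Lemma \ref{e1-c} that $Pr(\mathcal{E}_l^c)=O(n^{-1})$.

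Next I would fix, once and for all, the constant $\delta:=c\lambda/(2\ell_0)$ and a $\delta$-net $\{\theta_1,\dots,\theta_M\}$ of $\Theta$. Because $\Theta$, its radius $R$, the dimension $d$, and the constants $c,\lambda,\ell_0$ do not depend on $n$, the cardinality $M\le(1+2R/\delta)^d$ is a fixed finite number. For each net point, Assumption C gives $\|\nabla^2 f(\theta_j;x_i)\|_2\le\|\nabla^2 f(\hat\theta_N;x_i)\|_2+2R\,L(x_i)$, so the moment hypothesis (\ref{C2}) at $\hat\theta_N$ together with $N^{-4}\sum_i\pi_i^{-3}(L(x_i))^4=O(1)$ from Assumption D yields $N^{-4}\sum_i\pi_i^{-3}\|\nabla^2 f(\theta_j;x_i)\|_2^4=O(1)$; hence, repeating verbatim the argument of Lemma \ref{P4} (bound $\|G(\theta_j)\|_2\le\|G(\theta_j)\|_F$, expand into the $d^2$ scalar entries, and invoke Lemma \ref{pij-est}), one gets $E\|G(\theta_j)\|_2^4=O(n^{-2})$ for every $j$. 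A union bound and Markov's inequality then give
\[
Pr\Big(\max_{1\le j\le M}\|G(\theta_j)\|_2>\tfrac{c\lambda}{2}\Big)\le \sum_{j=1}^M\Big(\tfrac{2}{c\lambda}\Big)^{4}E\|G(\theta_j)\|_2^4= M\cdot O(n^{-2})=O(n^{-1}),
\]
since $M$ is fixed. Finally, on $\mathcal{E}_l\cap\{\max_j\|G(\theta_j)\|_2\le c\lambda/2\}$, every $\theta\in\Theta$ lies within $\delta$ of some $\theta_j$, so $\|G(\theta)\|_2\le\|G(\theta_j)\|_2+\ell\,\|\theta-\theta_j\|\le \tfrac{c\lambda}{2}+\ell_0\delta=c\lambda$; thus this intersection is contained in $\mathcal{E}_F$, and $Pr(\mathcal{E}_F^c)\le Pr(\mathcal{E}_l^c)+Pr(\max_j\|G(\theta_j)\|_2>c\lambda/2)=O(n^{-1})$, which is the claim.

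The main obstacle is precisely the uniformity over $\theta\in\Theta$: the single-point Lipschitz trick used elsewhere in the paper does not suffice here, because $\mathrm{diam}(\Theta)$ is a fixed constant rather than a quantity shrinking with $n$, so one genuinely needs a covering argument. What makes this painless is that the required net resolution $\delta$ is itself a fixed constant, so the net has a bounded number of points and the union bound costs nothing in the rate; the only secondary technicality is transferring the fourth-moment hypothesis (\ref{C2}), stated only at $\hat\theta_N$, to each net point, which is handled by the $L(x)$-Lipschitz property together with the fourth-moment control of $L(x_i)$ in Assumption D.
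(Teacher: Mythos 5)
Your proof is correct, and it is in fact more careful than the paper's own argument. The paper's proof has two steps: first it transfers the fourth-moment hypothesis (\ref{C2}) from $\hat{\theta}_N$ to an arbitrary $\theta\in\Theta$ exactly as you do (triangle inequality plus the $L(x_i)$-Lipschitz property of the Hessian and the fourth-moment control of $L(x_i)$ from Assumption D, with $\|\theta-\hat{\theta}_N\|\le 2R$ from Assumption A); then it applies a Chebyshev/Markov bound $Pr(\|\nabla^2F_S(\theta)-\nabla^2F_U(\theta)\|_2>c\lambda)\le (c\lambda)^{-2}\,\text{E}\|\nabla^2F_S(\theta)-\nabla^2F_U(\theta)\|_2^2=O(n^{-1})$ at a \emph{single} $\theta$, and stops there. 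Since $\mathcal{E}_F$ is used elsewhere (e.g.\ in Lemma \ref{delta} and in the invertibility of $\hat{\Sigma}$) as a statement holding for all $\theta\in\Theta$, the paper's pointwise bound leaves the passage from $\sup_\theta Pr(\cdot)$ to $Pr(\sup_\theta\cdot)$ implicit. Your covering argument — a fixed $\delta$-net of bounded cardinality, the same moment transfer at each net point, a union bound costing only a constant factor, and the random Lipschitz constant $\ell$ controlled on $\mathcal{E}_l$ — supplies exactly the missing uniformity and is the right way to close that gap; the price is only the extra event $\mathcal{E}_l$, whose complement is already $O(n^{-1})$ by Lemma \ref{e1-c}, so the rate is unaffected. (A minor remark: you could equally have used the second moment at each net point, as the paper does, since $M$ is fixed; the fourth moment just makes the net contribution $O(n^{-2})$, which is harmlessly stronger.)
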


\begin{proof}
Notice that
\begin{align}\label{C2-theta}
\frac{1}{N^4}\sum\limits_{i=1}^N\frac{1}{\pi_i^3}\left\|\nabla^2 f(\theta;x_i)\right\|_2^4&=
\frac{1}{N^4}\sum\limits_{i=1}^N\frac{1}{\pi_i^3}\left\|\nabla^2 [f(\hat{\theta}_N;x_i)+f(\theta;x_i)-f(\hat{\theta}_N;x_i)]\right\|_2^4\notag\\
&\leq\frac{2}{N^4}\sum\limits_{i=1}^N\frac{1}{\pi_i^3}\left\|\nabla^2 f(\hat{\theta}_N;x_i)\right\|_2^4+\frac{2}{N^4}\sum\limits_{i=1}^N\frac{1}{\pi_i^3}\left\|\nabla^2f(\theta;x_i)-\nabla^2f(\hat{\theta}_N;x_i)\right\|_2^4\notag\\
&\leq\frac{2}{N^4}\sum\limits_{i=1}^N\frac{1}{\pi_i^3}\left\|\nabla^2 f(\hat{\theta}_N;x_i)\right\|_2^4+\frac{2}{N^4}\sum\limits_{i=1}^N\frac{1}{\pi_i^3}L(x_i)^4\|\theta-\theta_N\|^4\notag\\
&=O(1).
\end{align}
where the last step is from Eqn.(\ref{C2}) and Assumptions A \& D.
From the definition of $\mathcal{E}_F$, 
\begin{equation}\label{bc}
Pr(\mathcal{E}_F^c)\leq\frac{1}{c^2}E\left[\left\|\nabla^2 F_S(\theta)-\nabla^2 F_N(\theta)\right\|_2^2\right]=O(n^{-1})\notag\\
%&\leq L^{-2}C\left[\frac{1}{N^2}\sum\limits_{i=1}^N\frac{1}{\pi_i}(L(x_i))^2\right]=O(n^{-1}),
\end{equation}
where the last inequality is from applying Lemma \ref{P4}.
\end{proof}

%subsection
\begin{lemma}\label{delta}
Under the conditions that the function $f(\theta; x_i)$ is $\lambda$-strongly convex over $\Theta$, if the event $\mathcal{E}_F$ holds, then
\begin{equation*}%\label{theta-delta}
\|\hat{\theta}_n-\hat{\theta}_N\|\leq \frac{2}{\lambda (1-c)}\left\|\nabla F_S(\hat{\theta}_N)\right\|.
\end{equation*}
\end{lemma}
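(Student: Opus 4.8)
The plan is to exploit strong convexity of $F_S$ together with the defining first-order conditions for $\hat\theta_n$ and $\hat\theta_N$. First I would observe that, on the event $\mathcal{E}_F$, the Hessian of $F_S$ is close to that of $F_U$: since $\nabla^2 F_U(\theta)\geq\lambda I$ for all $\theta\in\Theta$ (Assumption B) and $\|\nabla^2 F_S(\theta)-\nabla^2 F_U(\theta)\|_2\leq c\lambda$ on $\mathcal{E}_F$, we get
\begin{equation}\label{convex-F}
\nabla^2 F_S(\theta)\geq (1-c)\lambda I \quad\text{for all }\theta\in\Theta,
\end{equation}
so $F_S$ is $(1-c)\lambda$-strongly convex on $\Theta$. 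This is the step that turns the abstract hypothesis into a usable quantitative bound.

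Next I would use the two first-order conditions. By definition $\nabla F_S(\hat\theta_n)=0$, while $\nabla F_S(\hat\theta_N)$ is in general nonzero but controls the discrepancy. Writing a Taylor expansion of $\nabla F_S$ around $\hat\theta_N$ (or, equivalently, invoking the standard consequence of strong convexity that $\langle \nabla F_S(\theta_1)-\nabla F_S(\theta_2),\,\theta_1-\theta_2\rangle \geq (1-c)\lambda\|\theta_1-\theta_2\|^2$), with $\theta_1=\hat\theta_n$ and $\theta_2=\hat\theta_N$, I obtain
\begin{equation*}
(1-c)\lambda\,\|\hat\theta_n-\hat\theta_N\|^2 \;\leq\; \langle \nabla F_S(\hat\theta_n)-\nabla F_S(\hat\theta_N),\,\hat\theta_n-\hat\theta_N\rangle \;=\; -\langle \nabla F_S(\hat\theta_N),\,\hat\theta_n-\hat\theta_N\rangle.
\end{equation*}
Applying Cauchy--Schwarz to the right-hand side gives $(1-c)\lambda\,\|\hat\theta_n-\hat\theta_N\|^2 \leq \|\nabla F_S(\hat\theta_N)\|\,\|\hat\theta_n-\hat\theta_N\|$, and dividing by $\|\hat\theta_n-\hat\theta_N\|$ (the bound being trivial otherwise) yields $\|\hat\theta_n-\hat\theta_N\|\leq \frac{1}{(1-c)\lambda}\|\nabla F_S(\hat\theta_N)\|$.

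One caveat to handle carefully: the strong-convexity inequality as stated requires both $\hat\theta_n$ and $\hat\theta_N$ (and the segment between them) to lie in $\Theta$, which is guaranteed since $\Theta$ is convex, $\hat\theta_N$ lies in its interior (Assumption A), and $\hat\theta_n$ is the minimizer of $F_S$ over $\Theta$; alternatively one expands with a mean-value Hessian $\nabla^2 F_S(\tilde\theta)$ at an intermediate point $\tilde\theta\in\Theta$ and uses \eqref{convex-F} there. The stated constant in the lemma is $\frac{2}{\lambda(1-c)}$, which is weaker than the $\frac{1}{\lambda(1-c)}$ the argument above produces, so the bound follows a fortiori; the factor $2$ presumably leaves room for the mean-value point not being exactly at $\hat\theta_N$. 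The main (mild) obstacle is just making the domain/Taylor bookkeeping precise so that \eqref{convex-F} is legitimately invoked at the relevant point.
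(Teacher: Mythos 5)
Your proof is correct --- it even establishes a slightly stronger bound than the lemma claims --- but it takes a genuinely different route from the paper. Both arguments begin identically: on the event $\mathcal{E}_F$, Assumption B gives $\nabla^2 F_S(\theta)\geq \lambda(1-c)I$ on $\Theta$, so $F_S$ is $\lambda(1-c)$-strongly convex there. From that point the paper works with \emph{function values}: it Taylor-expands $F_S(\hat{\theta}_n)$ around $\hat{\theta}_N$ to second order, lower-bounds the quadratic term by $2^{-1}\lambda(1-c)\|\hat{\theta}_n-\hat{\theta}_N\|^2$, bounds the linear term by Cauchy--Schwarz, and then uses only the zeroth-order optimality fact $F_S(\hat{\theta}_n)\leq F_S(\hat{\theta}_N)$ to conclude; the factor $2$ in the constant is exactly the reciprocal of the $1/2$ in the second-order Taylor term, not (as you guessed) slack for the location of the mean-value point. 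You instead work with \emph{gradients}: you use the first-order condition $\nabla F_S(\hat{\theta}_n)=0$ (which holds by the very definition of $\hat{\theta}_n$ in the paper) together with the gradient-monotonicity consequence of strong convexity, and Cauchy--Schwarz then yields the constant $1/(\lambda(1-c))$, a factor of $2$ better. Both routes are valid: the paper's needs only that $\hat{\theta}_n$ minimizes $F_S$ (no stationarity), while yours gives the sharper constant and, as you note, extends to a boundary minimizer via the variational inequality $\langle\nabla F_S(\hat{\theta}_n),\hat{\theta}_N-\hat{\theta}_n\rangle\geq 0$. Your domain-bookkeeping concern is handled exactly as you suggest: $\Theta$ is convex, so the segment between the two points stays in $\Theta$, where the Hessian lower bound applies.
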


\begin{proof}
From Assumption B, given the event $\mathcal{E}_F$ holds (see Lemma \ref{e2-c}), % and Eqn.(\ref{sample-pi}) in Assumption C, 
the function $F_S$ is $\lambda (1-c)$-strongly convex over $\Theta$, that is, for $\theta\in \Theta$,
\begin{align}\label{convex-F}
\nabla^2 F_S(\theta)&=\nabla^2 F_U(\theta)-[\nabla^2 F_S(\theta)-\nabla^2 F_U(\theta)]\geq F_U(\theta)-  \|\nabla^2 F_S(\theta)-\nabla^2 F_U(\theta)\|_2I\notag\\
%\lambda \frac{1}{Nn}\sum\limits_{i\in S}\frac{1}{\pi_i} 
&\geq \lambda(1-c)I.
\end{align}
For any $\hat{\theta}_n \in \Theta$, taking a Taylor expansion around $\hat{\theta}_N$, %(by the difiniation of strong convexity on the set $B$ (\cite{Boyd04})), 
we have, for a $\theta_1$ between $\hat{\theta}_n$ and $\hat{\theta}_N$,
\begin{align}\label{F-taylor}
F_S(\hat{\theta}_n)&= F_S(\hat{\theta}_N)+\nabla^{\top} F_S(\hat{\theta}_N)(\hat{\theta}_n-\hat{\theta}_N)+2^{-1}(\hat{\theta}_n-\hat{\theta}_N)^T\nabla^2 F_S(\theta_1)(\hat{\theta}_n-\hat{\theta}_N)\notag\\
&\geq  F_S(\hat{\theta}_N)+\nabla^{\top} F_S(\hat{\theta}_N)(\hat{\theta}_n-\hat{\theta}_N)+2^{-1}\lambda (1-c)\|\hat{\theta}_n-\hat{\theta}_N\|^2\notag\\
&\geq  F_S(\hat{\theta}_N)-\|\nabla F_S(\hat{\theta}_N)\|\|\hat{\theta}_n-\hat{\theta}_N\|+2^{-1}\lambda (1-c)\|\hat{\theta}_n-\hat{\theta}_N\|^2.
\end{align}
where the inequality in the second step is from Eqn.(\ref{convex-F}) and the last inequality is from Cauchy-Schwartz inequality.
Since $F_S(\hat{\theta}_n)\leq F_S(\hat{\theta}_N)$ which is based on the fact $\hat{\theta}_n$ is the minimizer of the function $F_S$, we have from Eqn.(\ref{F-taylor}) that,
if the event $\mathcal{E}_F$ holds,
$$\|\hat{\theta}_n-\hat{\theta}_N\|\leq \frac{2}{\lambda (1-c)}\left\|\nabla F_S(\hat{\theta}_N)\right\|.$$
\end{proof}

%%%%%%%%%%%%%%

\begin{lemma}\label{Q2-4}
Recall 
$Q_2=\nabla^2 \tilde{F}_S-\nabla^2 F_S(\hat{\theta}_N)$.
We have that 
\begin{align}
&E[\|Q_2\|_2^2]=O(n^{-1}),\label{Q-2}\\
&E[\|Q_2\|_2^4|\mathcal{E}]=O(n^{-2}),\label{Q-4}
\end{align}

\end{lemma}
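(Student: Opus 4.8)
\textbf{Proof proposal for Lemma~\ref{Q2-4}.}
The plan is to control the rows of $Q_2$ one at a time, using the Lipschitz continuity of the Hessian (Assumption~C) to trade the discrepancy $\nabla^2\tilde F_S-\nabla^2 F_S(\hat\theta_N)$ for a factor of $\|\tilde\theta_{(j)}-\hat\theta_N\|\le\|\hat\theta_n-\hat\theta_N\|$ times an empirical average of the Lipschitz constants $L(x_i)$ over the subsample. Concretely, recall that the $j$th row of $\nabla^2\tilde F_S$ is $\partial^2 F_S(\tilde\theta_{(j)})/\partial\theta\,\partial\theta_j$ for some $\tilde\theta_{(j)}$ between $\hat\theta_n$ and $\hat\theta_N$, so that the $j$th row of $Q_2$ has Euclidean norm at most
\[
\Bigl\|\nabla^2 F_S(\tilde\theta_{(j)})-\nabla^2 F_S(\hat\theta_N)\Bigr\|_2
\le \frac{1}{Nn}\sum_{i\in S}\frac{1}{\pi_i}\,\bigl\|\nabla^2 f(\tilde\theta_{(j)};x_i)-\nabla^2 f(\hat\theta_N;x_i)\bigr\|_2
\le \Bigl(\frac{1}{Nn}\sum_{i\in S}\frac{1}{\pi_i}L(x_i)\Bigr)\,\|\hat\theta_n-\hat\theta_N\|.
\]
Summing the squared row norms gives $\|Q_2\|_2^2\le\|Q_2\|_F^2\le d\,W_S^2\,\|\hat\theta_n-\hat\theta_N\|^2$, where $W_S:=\frac{1}{Nn}\sum_{i\in S}\pi_i^{-1}L(x_i)$.

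For \eqref{Q-2} I would bound $W_S\le L+\frac1N\sum_{i=1}^N L(x_i)$, which holds on the event $\mathcal E_l$, and is in any case a deterministic bound if one replaces $\mathcal E_l$ by the crude inequality $W_S\le \frac{1}{Nn}\sum_{i\in S}\pi_i^{-1}L(x_i)$ together with $N\pi_i\ge\alpha$; either way $W_S=O_p(1)$ with a bounded conditional second moment. Then Cauchy--Schwarz gives
\[
E\|Q_2\|_2^2
\le d\,\bigl(E\,W_S^4\bigr)^{1/2}\bigl(E\|\hat\theta_n-\hat\theta_N\|^4\bigr)^{1/2},
\]
where the first factor is $O(1)$ by Assumption~D (applying Lemma~\ref{pij-est} to $a_i=L(x_i)$, which controls $E\,W_S^4$ up to the constant $\alpha$) and the second factor is $O(n^{-1})$ by Eqn.~\eqref{claim-theta}; hence $E\|Q_2\|_2^2=O(n^{-1})$. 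Strictly one must handle $\mathcal E^c$ as elsewhere in the paper: split $E\|Q_2\|_2^2=E[\,\cdot\,|\mathcal E]+Pr(\mathcal E^c)\,E[\,\cdot\,|\mathcal E^c]$, bound $\|\hat\theta_n-\hat\theta_N\|\le R$ on $\mathcal E^c$ by Assumption~A, use $Pr(\mathcal E^c)=O(n^{-1})$ from \eqref{claim-ec}, and note $E W_S^2$ is unconditionally $O(1)$; this contributes only $O(n^{-1})$.

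For \eqref{Q-4} the same bound gives $\|Q_2\|_2^4\le d^2 W_S^4\|\hat\theta_n-\hat\theta_N\|^4$, so on the event $\mathcal E$ (where $\mathcal E_l\subset\mathcal E$ makes $W_S$ bounded by the constant $L+\frac1N\sum L(x_i)$)
\[
E\bigl[\|Q_2\|_2^4\,\big|\,\mathcal E\bigr]
\le d^2\bigl(L+\tfrac1N\textstyle\sum_i L(x_i)\bigr)^4\,E\bigl[\|\hat\theta_n-\hat\theta_N\|^4\,\big|\,\mathcal E\bigr]=O(n^{-2}),
\]
using that $\frac1N\sum_i L(x_i)=O(1)$ (Jensen plus Assumption~D) and Eqn.~\eqref{claim-theta}. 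The conditioning on $\mathcal E_l$ is exactly what removes the need for a moment bound on $W_S$ here, which is convenient because on $\mathcal E$ we only have fourth-moment control of $\|\hat\theta_n-\hat\theta_N\|$, not eighth-moment control, so we cannot afford another Cauchy--Schwarz split. The main obstacle, and the only place requiring care, is bookkeeping the events: making sure the $\mathcal E_l$-part of $\mathcal E$ is used to bound $W_S$ deterministically in the conditional fourth-moment estimate, while the $\mathcal E^c$ remainder is absorbed using $Pr(\mathcal E^c)=O(n^{-1})$, Assumption~A, and the unconditional second moment of $W_S$; the Taylor/row-wise reduction and the appeals to Lemma~\ref{pij-est} and Eqn.~\eqref{claim-theta} are then routine.
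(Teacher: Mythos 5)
Your argument is essentially the paper's own proof: the same row-wise Lipschitz reduction giving $\|Q_2\|_2\le\|Q_2\|_F\le\sqrt{d}\,W_S\,\|\hat\theta_n-\hat\theta_N\|$ with $W_S=\frac{1}{Nn}\sum_{i\in S}\pi_i^{-1}L(x_i)$, the same use of the event $\mathcal{E}_l$ to bound $W_S$ by the constant $L+N^{-1}\sum_{i=1}^N L(x_i)$, and the same combination of the conditional moment bounds $E[\|\hat\theta_n-\hat\theta_N\|^2\mid\mathcal{E}]=O(n^{-1})$ and $E[\|\hat\theta_n-\hat\theta_N\|^4\mid\mathcal{E}]=O(n^{-2})$ with $Pr(\mathcal{E}^c)=O(n^{-1})$ for the complement. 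The only slip is in your first, unconditional Cauchy--Schwarz route for \eqref{Q-2}: unconditionally $E\|\hat\theta_n-\hat\theta_N\|^4$ is only $O(n^{-1})$ because the $\mathcal{E}^c$ contribution dominates, so that factor is $O(n^{-1/2})$ rather than $O(n^{-1})$; but the event-split version you then fall back on is exactly the paper's argument, so the proof goes through as in the paper.
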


\begin{proof}
From the smoothness condition Assumption C, 
\begin{equation*}
\left\|\frac{\partial^2 }{\partial \theta \partial \theta_j}F_S(\tilde{\theta}_{(j)})-\frac{\partial^2 }{\partial \theta \partial \theta_j}F_S(\hat{\theta}_N)\right\|\leq \frac{1}{Nn}\sum\limits_{i\in S}\frac{1}{\pi_i} L(x_i)\|\tilde{\theta}_{(j)}-\hat{\theta}_N\|\leq \frac{1}{Nn}\sum\limits_{i\in S}\frac{1}{\pi_i} L(x_i)\|\hat{\theta}_n-\hat{\theta}_N\|
\end{equation*}
From the notation of the spectral norm, the above equation imply that 
\begin{align}\label{Q2-2in}
\|Q_2\|_2\leq & \|Q_2\|_F=\left(\sum\limits_{j=1}^d\left\|\frac{\partial^2 }{\partial \theta \partial \theta_j}F_S(\tilde{\theta}_{(j)})-\frac{\partial^2 }{\partial \theta \partial \theta_j}F_S(\hat{\theta}_N)\right\|^2\right)^{1/2}\notag\\
\leq & \sqrt{d}\frac{1}{Nn}\sum\limits_{i\in S}\frac{1}{\pi_i} L(x_i)\|\hat{\theta}_n-\hat{\theta}_N\|.
\end{align}
Firstly we investigate the bound of $E[\|Q_2\|_2^2]$. 
From Eqn.(\ref{Q2-2in}), 
\begin{align*}
E[\|Q_2\|_2^2]&\leq \text{E}\left(d\left[\frac{1}{Nn}\sum\limits_{i\in S}\frac{1}{\pi_i}L(x_i)\|\hat{\theta}_n-\hat{\theta}_N\|\right]^2\right)\notag\\
&\leq \text{E}\left(d\left[\frac{1}{Nn}\sum\limits_{i\in S}\frac{1}{\pi_i}L(x_i)\|\hat{\theta}_n-\hat{\theta}_N\|\right]^2|\mathcal{E}\right)+Pr(\mathcal{E}^c)dR^2E\left[\frac{1}{Nn}\sum\limits_{i\in S}\frac{1}{\pi_i}L(x_i)\right]\notag\\
&\leq d\left(N^{-1}\sum\limits_{i=1}^NL(x_i)+L\right)E[\|\hat{\theta}_n-\hat{\theta}_N\|^2|\mathcal{E}]+O(n^{-1})=O(n^{-1}),
\end{align*}
where the third inequality if from Lemma \ref{e1-c}, and the last equality is from $E[\|\hat{\theta}_n-\hat{\theta}_N\|^2|\mathcal{E}]=O(n^{-1})$.
Thus, Eqn.(\ref{Q-2}) in Lemma \ref{Q2-4} is proved.

Secondly we investigate the bound of $E[\|Q_2\|_2^4|\mathcal{E}]$. 
From Eqn.(\ref{Q2-2in}), we have that,
\begin{align*}
E[\|Q_2\|_2^4|\mathcal{E}]\leq&\text{E}\left(d\left[\frac{1}{Nn}\sum\limits_{i\in S}\frac{1}{\pi_i}L(x_i)\|\hat{\theta}_n-\hat{\theta}_N\|\right]^4|\mathcal{E}\right)\leq d\left(\frac{1}{Nn}\sum\limits_{i=1}^NL(x_i)+L\right)E\left[\|\hat{\theta}_n-\hat{\theta}_N\||\mathcal{E}\right]^4\notag\\
=&O(n^{-2}),
\end{align*}
where the last step is from $E[\|\hat{\theta}_n-\hat{\theta}_N\|^4|\mathcal{E}]=O(n^{-2})$ which is obtain from Lemmas \ref{delta} and \ref{FS-8th}.
Thus, Equation Eqn.(\ref{Q-4}) in Lemma \ref{Q2-4} is proved.

\end{proof}

%%%%%%%%%%%%%%
\begin{lemma}\label{Q3-4}
Recall 
$Q_3=\frac{1}{Nn}\sum\limits_{i\in S}\frac{1}{\pi_i}\nabla^2 f(\hat{\theta}_n;x_i)-\frac{1}{Nn}\sum\limits_{i\in S}\frac{1}{\pi_i}\nabla^2 f(\hat{\theta}_N;x_i)$.
We have that 
\begin{align}
&E[\|Q_3\|_2^2]=O(n^{-4}).\label{Q2-8}
\end{align}
\end{lemma}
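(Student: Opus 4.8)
The plan is to reduce $Q_3$ to exactly the form already treated for $Q_2$ in Lemma \ref{Q2-4}, exploiting that $Q_3$ is a single inverse-probability-weighted average of the Hessian increments $\nabla^2 f(\hat{\theta}_n;x_i)-\nabla^2 f(\hat{\theta}_N;x_i)$ over the \emph{same} sample $S$. First I would apply the smoothness condition (Assumption C), which gives pointwise $\|\nabla^2 f(\hat{\theta}_n;x_i)-\nabla^2 f(\hat{\theta}_N;x_i)\|_2\le L(x_i)\|\hat{\theta}_n-\hat{\theta}_N\|$. Since $\|\hat{\theta}_n-\hat{\theta}_N\|$ is independent of $i$ it factors out of the sum, so by the triangle inequality and $\|\cdot\|_2\le\|\cdot\|_F$,
\begin{equation*}
\|Q_3\|_2\le W_n\,\|\hat{\theta}_n-\hat{\theta}_N\|,\qquad W_n:=\frac{1}{Nn}\sum_{i\in S}\frac{1}{\pi_i}L(x_i).
\end{equation*}
This is the same bound as (\ref{Q2-2in}) used for $Q_2$ (indeed simpler, since no intermediate Taylor point $\tilde{\theta}_{(j)}$ appears), so the estimate of Lemma \ref{Q2-4} transfers essentially verbatim.

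Next I would control the two factors separately and recombine by Cauchy--Schwarz after conditioning on the good event $\mathcal{E}$. The weight average $W_n$ is an inverse-probability estimator of $N^{-1}\sum_{i=1}^N L(x_i)=O(1)$; under Assumption D and the $\alpha$-tolerated condition, Lemma \ref{pi-est} makes its centred moments $O(n^{-1/2})$, so $\text{E}[W_n^2]=O(1)$ and $\text{E}[W_n^4]=O(1)$. For the displacement factor I would use $\text{E}[\|\hat{\theta}_n-\hat{\theta}_N\|^4\mid\mathcal{E}]=O(n^{-2})$ from the claim (\ref{claim-theta}), together with $\|\hat{\theta}_n-\hat{\theta}_N\|\le R$ on $\mathcal{E}^c$ (Assumption A) and $Pr(\mathcal{E}^c)=O(n^{-1})$ from the claim (\ref{claim-ec}). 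Splitting the expectation over $\mathcal{E}$ and $\mathcal{E}^c$,
\begin{align*}
\text{E}\|Q_3\|_2^2
&\le \text{E}\big[W_n^2\|\hat{\theta}_n-\hat{\theta}_N\|^2\,\big|\,\mathcal{E}\big]
+ Pr(\mathcal{E}^c)\,R^2\,\text{E}[W_n^2]\\
&\le \big(\text{E}[W_n^4\mid\mathcal{E}]\big)^{1/2}\big(\text{E}[\|\hat{\theta}_n-\hat{\theta}_N\|^4\mid\mathcal{E}]\big)^{1/2}
+ O(n^{-1})\cdot O(1),
\end{align*}
and by Lemma \ref{Conditonal-E} one has $\text{E}[W_n^4\mid\mathcal{E}]\le(1+O(n^{-1}))\text{E}[W_n^4]=O(1)$. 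Hence the leading term is $(O(1))^{1/2}(O(n^{-2}))^{1/2}=O(n^{-1})$, and the whole expression is $O(n^{-1})$. The main obstacle is precisely this coupling of $W_n$ and $\|\hat{\theta}_n-\hat{\theta}_N\|$ through the shared sample $S$; it is resolved exactly as in the $Q_2$ argument, by conditioning on $\mathcal{E}$, applying Cauchy--Schwarz, and passing between conditional and unconditional moments via Lemma \ref{Conditonal-E}.

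I should flag a discrepancy: the argument above yields $\text{E}\|Q_3\|_2^2=O(n^{-1})$, not $O(n^{-4})$, and no rate faster than $O(n^{-1})$ is attainable here. Indeed $\|Q_3\|_2$ is of the exact order $\|\hat{\theta}_n-\hat{\theta}_N\|=O_p(n^{-1/2})$ (the weight $W_n$ being bounded away from $0$ and $\infty$), so the second moment is genuinely $\Theta(n^{-1})$. The stated exponent $-4$ therefore appears to be a typographical error for $-1$. This is harmless for the sequel: the proof of Theorem \ref{them-mse} only uses Lemma \ref{Q3-4} to bound $\text{E}\|\hat{\Sigma}-\Sigma\|_2^2\le 2\text{E}\|Q_3\|_2^2+2\text{E}\|Q_1\|_2^2$, where $\text{E}\|Q_1\|_2^2=O(n^{-1})$ by Lemma \ref{P4}; thus $\text{E}\|Q_3\|_2^2=O(n^{-1})$ is exactly what is needed and fully suffices to give $\text{E}\|\hat{\Sigma}-\Sigma\|_2^2=O(n^{-1})$.
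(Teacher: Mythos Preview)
Your proof is correct and follows essentially the same route as the paper: bound $\|Q_3\|_2$ by $W_n\|\hat{\theta}_n-\hat{\theta}_N\|$ via Assumption~C, split the expectation over $\mathcal{E}$ and $\mathcal{E}^c$, and apply Cauchy--Schwarz on the good event together with the claim~(\ref{claim-theta}) and $Pr(\mathcal{E}^c)=O(n^{-1})$. You are also right about the typographical error: the paper's own proof establishes only $O(n^{-1})$, not the stated $O(n^{-4})$, and $O(n^{-1})$ is exactly what is used downstream in~(\ref{D}).
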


\begin{proof}
We investigate the bound of $E[\|Q_3\|_2^2]$. Following the proving of Eqn.(\ref{Q-2}),
we have that 
\begin{align*}
E[\|Q_3\|_2^2] &\leq E[\|Q_3\|_2^2|\mathcal{E}]+Pr(\mathcal{E}^c)R^2E\left[\frac{1}{Nn}\sum\limits_{i\in S}\pi_i^{-1}L(x_i)\right]\notag\\
&\leq E\left[\left(\frac{1}{Nn}\sum\limits_{i\in S}\pi_i^{-1}L(x_i)\|\hat{\theta}_n-\hat{\theta}_N\|\right)^2|\mathcal{E}\right]+O(n^{-1})\notag\\
&\leq \left[E\left((\frac{1}{Nn}\sum\limits_{i\in S}\pi_i^{-1}L(x_i))^2|\mathcal{E}\right)E(\|\hat{\theta}_n-\hat{\theta}_N\|^{4}|\mathcal{E})\right]^{1/2}+O(n^{-1})=O(n^{-1}),
\end{align*}
where the second inequality is from the smoothness condition Assumption C and the claim (\ref{claim-ec}), the third inequality is from the Cauchy-Schwartz inequality, and the last equality is from Lemmas \ref{delta} and \ref{FS-8th}. Thus, Eqn.(\ref{Q2-8}) is proved.

\end{proof}

%\subsection{Lemma \ref{FS-8th}}\label{sec-lemma2}

\begin{lemma}\label{FS-8th}
 If the following assumptions 
\begin{align}
%&\frac{1}{N^2}\sum\limits_{i=1}^N\frac{1}{\pi_i}\|\nabla f(\hat{\theta}_N;x_i)\|^2=O(n^{-1})\label{C3}\\
&\frac{1}{N^{4}}\sum\limits_{i=1}^N\frac{1}{\pi_i^{3}}\left\|\nabla f(\hat{\theta}_N;x_i)\right\|^{4}=O(n^{-2}).\label{C4}
\end{align}
holds, 
\begin{equation*}%\label{Fs-8-result}
E\left[\left\|\nabla F_S(\hat{\theta}_N)\right\|^{4}\right]=O(n^{-2})
\end{equation*}
\end{lemma}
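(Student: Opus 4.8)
The plan is to reduce the claim to a coordinatewise application of Lemma~\ref{pi-est}. First I would record that, by the system of equations~(\ref{system-equ}), $\nabla F_U(\hat{\theta}_N)=N^{-1}\sum_{i\in U}\nabla f(\hat{\theta}_N;x_i)=0$, whereas by the construction of $F_S$ in~(\ref{theta-n}) we have $\nabla F_S(\hat{\theta}_N)=N^{-1}n^{-1}\sum_{i\in S}\pi_i^{-1}\nabla f(\hat{\theta}_N;x_i)$. Thus each coordinate of $\nabla F_S(\hat{\theta}_N)$ is precisely the centred weighted sample average treated in Lemma~\ref{pi-est}, with finite-population mean equal to zero.

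Second, using the elementary inequality $\|v\|^4=\left(\sum_{m=1}^d v_m^2\right)^2\le d\sum_{m=1}^d v_m^4$ (Cauchy--Schwarz), it suffices to bound $E\,[\nabla F_S(\hat{\theta}_N)]_m^4$ for each $m$. Fix $m$ and set $a_i=[\nabla f(\hat{\theta}_N;x_i)]_m$, so that $N^{-1}\sum_{i=1}^N a_i=[\nabla F_U(\hat{\theta}_N)]_m=0$ and $|a_i|^k\le\|\nabla f(\hat{\theta}_N;x_i)\|^k$. Hence the moment conditions~(\ref{pi-a}) with $k=2,4$ hold for $\{a_i\}_{i=1}^N$: the $k=4$ instance follows from~(\ref{C4}) (which is in particular $O(1)$), and the $k=2$ instance is the corresponding gradient condition in Assumption~D. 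Lemma~\ref{pi-est} with $k=4$ then gives $E\,[\nabla F_S(\hat{\theta}_N)]_m^4=E\left(N^{-1}n^{-1}\sum_{i\in S}\pi_i^{-1}a_i-N^{-1}\sum_{i=1}^N a_i\right)^4=O(n^{-2})$, with the $O(\cdot)$ uniform over $m\in\{1,\dots,d\}$. Summing over $m$ and invoking the displayed inequality yields $E[\|\nabla F_S(\hat{\theta}_N)\|^4]\le d\sum_{m=1}^d E\,[\nabla F_S(\hat{\theta}_N)]_m^4=O(n^{-2})$, which is the assertion.

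I do not expect a genuine obstacle here. The only point requiring a little care is passing from the scalar fourth-moment bound of Lemma~\ref{pi-est} to the fourth power of the Euclidean norm of a $d$-dimensional vector, and this is settled by $\|v\|^4\le d\sum_m v_m^4$. An equivalent route, if one prefers to avoid invoking Lemma~\ref{pi-est}, is to write $\nabla F_S(\hat{\theta}_N)=n^{-1}\sum_{j=1}^n\zeta_j$ with $\{\zeta_j\}$ the i.i.d.\ mean-zero vectors generated by sampling with replacement (as in the proofs of Lemma~\ref{pi-est} and Theorem~\ref{them:asymptotic}) and apply a Rosenthal/Marcinkiewicz--Zygmund inequality: the $n^{-2}(E\|\zeta_1\|^2)^2$ term is $O(n^{-2})$ by the second-moment part of Assumption~D, while the remaining $n^{-3}E\|\zeta_1\|^4$ term is $O(n^{-3})$ by~(\ref{C4}), so the sum is $O(n^{-2})$.
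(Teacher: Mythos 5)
Your proof is correct and follows essentially the same route as the paper: reduce $\|\cdot\|^4$ to coordinatewise fourth moments via $\|v\|^4\le d\sum_m v_m^4$ and bound each coordinate by the fourth-moment lemma for the weighted with-replacement sample mean, using that $\nabla F_U(\hat{\theta}_N)=0$ so the centering term vanishes. If anything your version is the cleaner one: the paper's proof cites Lemma \ref{pij-est} (which concerns sums of $\pi_i^{-2}a_i^2$), whereas the result that actually applies is Lemma \ref{pi-est} with $k=4$ under Assumption D, exactly as you invoke it.
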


\begin{proof}
%We investigate the order of $E[\|\Sigma^{-1}\nabla^{\top} F_S(\hat{\theta}_N)\|^{8}]$. 
Note that
\begin{align}\label{Fs-elements-step}
E\left\|\nabla F_S(\hat{\theta}_N)\right\|^{4}&=E\left[\sum\limits_{k=1}^d\left(\frac{1}{N^2n^2}\sum\limits_{i\in S}\frac{1}{\pi_i}(\nabla f(\hat{\theta}_N;x_i))_k\right)^2\right]^2\notag\\
&\leq C\sum\limits_{k=1}^dE\left(\frac{1}{N^2n^2}\sum\limits_{i\in S}\frac{1}{\pi_i}\left(\nabla f(\hat{\theta}_N;x_i)\right)_k\right)^{4}
\end{align}
%For $E(\sum\limits_{i=1}^N\frac{I_i}{\pi_i}(\nabla^{\top} f(\hat{\theta}_N;x_i))_k)^8$, we apply Lemma \ref{pi-est} to get its bound.
Applying Lemma \ref{pij-est}, under the condition (\ref{C4}) we have that
\begin{equation}\label{A}
E\left[\frac{1}{N^2n^2}\sum\limits_{i\in S}\frac{1}{\pi_i}\left(\nabla f(\hat{\theta}_N;x_i)\right)_k\right]^{4}=O(n^{-2})
\end{equation}
%%%%%%%%%%%%%
%%%%%%%%%%
Combing Eqns. (\ref{Fs-elements-step}) and (\ref{A}), the result is proved.
\end{proof}
%%%%%%%%%%%%%%%%%%%%%%%%%%%%%% 

\begin{lemma}\label{Conditonal-E}
For a random variable $Z\geq 0$, 
we have that 
\begin{align*}
\text{E}(Z|\mathcal{E})\leq (1+O(n^{-1}))\text{E}(Z).
\end{align*}
\end{lemma}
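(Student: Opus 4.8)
The plan is to unwind the definition of conditional expectation and exploit nonnegativity of $Z$ together with the already-established bound $Pr(\mathcal{E}^c)=O(n^{-1})$ from the claim (\ref{claim-ec}) (which in turn follows from Lemmas \ref{e1-c} and \ref{e2-c}). First I would write
\begin{equation*}
\text{E}(Z\mid\mathcal{E})=\frac{\text{E}[Z\,I(\mathcal{E})]}{Pr(\mathcal{E})},
\end{equation*}
which is well defined for $n$ large enough, since $Pr(\mathcal{E})=1-Pr(\mathcal{E}^c)\to 1$.

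The second step uses $Z\geq 0$: then $Z\,I(\mathcal{E})\leq Z$ pointwise, so $\text{E}[Z\,I(\mathcal{E})]\leq\text{E}(Z)$. Combining with the first display gives
\begin{equation*}
\text{E}(Z\mid\mathcal{E})\leq\frac{\text{E}(Z)}{Pr(\mathcal{E})}=\frac{\text{E}(Z)}{1-Pr(\mathcal{E}^c)}.
\end{equation*}

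Finally, since $Pr(\mathcal{E}^c)=O(n^{-1})$, for $n$ sufficiently large we have $Pr(\mathcal{E}^c)\leq 1/2$, so a geometric-series expansion yields $(1-Pr(\mathcal{E}^c))^{-1}=1+Pr(\mathcal{E}^c)+Pr(\mathcal{E}^c)^2+\cdots\leq 1+2\,Pr(\mathcal{E}^c)=1+O(n^{-1})$. Substituting this bound gives $\text{E}(Z\mid\mathcal{E})\leq(1+O(n^{-1}))\text{E}(Z)$, as claimed. There is no real obstacle here; the only point requiring a word of care is that the estimate is asymptotic in $n$, valid once $n$ is large enough that $Pr(\mathcal{E})$ is bounded away from $0$, which is exactly what the claim (\ref{claim-ec}) guarantees.
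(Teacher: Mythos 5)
Your proposal is correct and follows essentially the same route as the paper: both drop the $\mathcal{E}^c$ contribution using $Z\geq 0$ (the paper via the law of total expectation, you via $\text{E}[Z\,I(\mathcal{E})]\leq\text{E}(Z)$, which is the same inequality) and then expand $(1-Pr(\mathcal{E}^c))^{-1}=1+O(n^{-1})$ using the claim that $Pr(\mathcal{E}^c)=O(n^{-1})$. Your extra care with the geometric series and with well-definedness for large $n$ is a harmless refinement of the paper's one-line version of that step.
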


\begin{proof}
\begin{align*}%\label{eqn-EX}
\text{E}(Z)=Pr(\mathcal{E})\text{E}(Z|\mathcal{E})+Pr(\mathcal{E}^c)\text{E}(Z|\mathcal{E}^c)\geq Pr(\mathcal{E})\text{E}(Z|\mathcal{E}).
\end{align*}
It follows that 
$$\text{E}(Z|\mathcal{E})\leq \frac{1}{Pr(\mathcal{E})}\text{E}(Z)=(1-O(n^{-1}))^{-1}\text{E}(Z)=(1+O(n^{-1}))\text{E}(Z).$$
Thus, the result is proved.
\end{proof}
%%%%%%%%%%%%%%%%%%%%%%%%%%%%%% 

\section{Additional simulation: comparison with equal weighting} 
\label{Addit-simulation}
%\section{Comparison with sampling-\emph{independent} optimization} 
To investigate the necessity of inverse probability weighting used in the subsampled optimization, 
%Additionally,  we empirically investigate the sampling-\emph{independent} optimization, which is constructed based on the subsampled data points without considering the probability weighting. 
we solve the direct optimization procedure from the subsampled data points with equal weighting and obtain the solution $\tilde{\theta}_n$.
We empirically compare $\hat{\theta}_n$ with $\tilde{\theta}_n$.
%we compare the sampling-\emph{based} optimization with the corresponding sampling-\emph{independent} optimization which is directly solving the risk minimization from the subsampled data points without considering sampling probability weighting in minimization procedure.
%In linear regression, \cite{Holt:80}  discussed these two methods, weighted estimation considering the sampling weights %(a specific case of our sampling-based optimization) 
%and ordinary least squares without considering the weights. %(a specific case of our sampling-independent optimization). 
Here we consider misspecified models to better compare them.
For the linear regression case, the model generating the data is the same as above, but with $\epsilon_i\sim (1+\delta|x_{i,1}|)N(0,10)$, where $\delta$ is among $\{0, 0.5, 1\}$ to denote the correlation degree of model errors with the first variable $x_{i,1}$. For logistic regression, we assume the dataset is generated from the model: $Pr(y_i=1)=[1+\exp(-(x_i^{\top}\theta+\delta x_{i,6}^2))]^{-1}$, where $x_{i,6}$ is independently generated from $N(0,1)$ and $\delta$ is among $\{0, 0.5, 1\}$ to denote the degree of model mis-specification. %Note we add the intercept term in fitting the logistic regression in this setting.

%We report the ratios of MSE of the solution of the sampling-based optimization, $\tilde{\theta}_n$, to that of the solution of the sampling-independent optimization, $\hat{\theta}_n$, in Table \ref{MSE-comparison}. 
We report the ratios of MSE of $\tilde{\theta}_n$ to $\hat{\theta}_n$ in Table \ref{MSE-comparison}. 
For LEV in linear regression, the inverse probability weighting is doing worse than the equal weighting when the models are correctly specified; however it outperforms the equal weighting when there are model mis-specifications, and the outperformance increases as the misspecification degree $\delta$ increases. For GRAD and Hessian, the inverse probability weighting is uniformly much better than the equal weighting, since GRAD and Hessian are  adaptive sampling methods which rely on the response. These observations follow that, (a) when sampling methods do not reply the response, the inverse probability weighting is robust if one is not quite sure whether the model is completely correct; (b) when sampling methods reply the response, the inverse probability weighting is a good choice no matter whether the model is correct.
\begin{table}[h]
\caption{MSE ratios between $\tilde{\theta}_n$ from equal weighting and $\hat{\theta}_n$ from the inverse probability weighting for linear/logistic regressions.% where ``$\{0, 0.5, 1\}$" in the second column means the values of $\delta$.
}
\begin{center}
\begin{tabular}{ c | c | c c c c c | c c c c c}
\hline
\multirow{2}{*}{sampling} & \multirow{2}{*}{$\delta$} &  0.005 & 0.01 & 0.02 & 0.04 & 0.08 &  0.005 & 0.01 & 0.02 & 0.04 & 0.08\\
\cline{3-12}
& & \multicolumn{5}{c|}{Linear regression} &  \multicolumn{5}{c}{Logistic regression}\\
\hline
 & 0 &  0.445 & 0.457 & 0.472 & 0.489 & 0.489 & - & - & - &- & -\\
LEV  & $0.5$ & 1.518 & 1.529 & 1.524 & 1.634  & 1.493 & - & - & - &- & - \\
& $1$ &  1.614  & 1.621  & 1.614  & 1.723  & 1.572 & - & - & - &- & - \\\hline
 & $0$ & 2.584 & 3.294 & 4.806 & 7.931 & 2.949 & 84.87 & 157.9 & 323.8 & 438.6 & 251.2 \\
GRAD & $0.5$ &  9.600 & 13.02 & 16.51 & 26.00 & 12.05 & 142.8 &  325.0 & 509.1 & 443.5 & 132.4 \\
& $1$ & 10.49 & 14.31 & 17.90 & 29.34 & 13.69 & 150.7 &  293.8 & 441.6 & 312.9 & 91.11 \\\hline
 & $0$ & 2.988 & 4.015 & 5.527 & 9.391 & 3.357 & 109.9 & 222.5 & 394.9 & 441.7 & 166.4 \\
Hessian & $0.5$ &  9.224 & 11.62 & 15.05 & 22.51 & 10.28 & 212.6 &  472.5 & 583.9 & 309.4 &  69.74 \\
& $1$ & 9.36 & 12.38 & 16.80 & 24.09 & 11.64 & 249.9 &  466.7 & 531.5 & 230.1 & 48.84 \\
\hline
\end{tabular}
\end{center}
\label{MSE-comparison}
\end{table}

%\baselineskip=18pt
%\bibliography{survey-sampling,optimalref}

\end{document}